\documentclass{article}

\usepackage{microtype}
\usepackage{graphicx}
\usepackage{subcaption}
\usepackage{booktabs} 

\usepackage{hyperref}

\usepackage[accepted]{icml2026}

\usepackage{amsmath}
\usepackage{amssymb}
\usepackage{mathtools}
\usepackage{amsthm}
\usepackage{mathrsfs}

\usepackage[capitalize,noabbrev]{cleveref}

\theoremstyle{plain}
\newtheorem{theorem}{Theorem}[section]

\newtheorem{lemma}[theorem]{Lemma}

\theoremstyle{definition}
\newtheorem{definition}[theorem]{Definition}
\newtheorem{assumption}[theorem]{Assumption}
\theoremstyle{remark}

\usepackage[textsize=tiny]{todonotes}

\newcommand{\Prob}{\mathbb{P}}
\newcommand{\bP}{\mathbf{P}}
\newcommand{\R}{\mathbb{R}}

\newcommand{\E}{\mathbb{E}}

\newcommand{\G}{\mathcal{G}_\Pi}

\newcommand{\PiC}{\Pi}

\newcommand{\Hclass}{\mathcal{H}}

\newcommand{\Uclass}{\mathcal{U}}
\newcommand{\Ncover}{\mathcal{N}}

\newcommand{\Pdim}{\text{Pdim}}

\newcommand{\dKS}{d_{\rm KS}}

\newcommand{\Ind}[1]{\mathbf{1}\left\{ #1 \right\}}
\newcommand{\indep}{\perp \mkern-9.5mu \perp}

\usepackage{comment}
\usepackage{multirow} 
\usepackage{bm}

\icmltitlerunning{Doubly Robust Distributionally Robust Offline Contextual Pricing}

\begin{document}

\twocolumn[
  \icmltitle{Doubly Robust Distributionally Robust Offline Contextual Pricing}



  \icmlsetsymbol{equal}{*}

  \begin{icmlauthorlist}
    \icmlauthor{Min Xu}{njusme}
    \icmlauthor{Xinyi Yin}{njumath}
     \icmlauthor{Yunfan Zhang}{nyu}
    \icmlauthor{Yuxuan Han}{nyu}
    \icmlauthor{Houcai Shen}{njusme}
     \icmlauthor{Caihua Chen}{njusme}
  \end{icmlauthorlist}

  \icmlaffiliation{njusme}{School of Management and Engineering, Nanjing University, Nanjing, China}
\icmlaffiliation{njumath}{School of Mathematics, Nanjing University, Nanjing, China}
\icmlaffiliation{nyu}{Stern School of Business, New York University, New York, USA}

  \icmlcorrespondingauthor{Caihua Chen}{chchen@nju.edu.cn}

  \icmlkeywords{off-policy learning, robustness, contextual pricing, continuous treatment}

  \vskip 0.3in
]



\printAffiliationsAndNotice{}  

\begin{abstract}
Offline contextual pricing often relies on logged observational data, but faces challenges from distributional shifts between training and deployment environments. Distributionally robust optimization (DRO) provides a principled approach to off-policy evaluation and learning (OPE/L). However, existing methods are mostly limited to discrete actions. Recent work has explored DRO for continuous treatments using inverse propensity weighting (IPW), while such IPW-based estimators can be sensitive to the convergence rate of propensity score estimates, particularly when estimated nonparametrically, which may lead to larger estimation errors and regret.
In this work, we develop a doubly robust (DR) framework for distributionally robust OPE/L in continuous pricing settings. For evaluation, we propose a localized DR estimator that addresses the computational challenges of worst-case expectations by fitting only a small number of regressions, comparable to standard non-robust DR, while achieving semiparametric efficiency under mild product rate conditions. For learning, we leverage the inherent smoothness of demand noise to handle pricing-specific discontinuities in revenue outcomes (e.g., threshold-based purchase decisions), establishing a finite-sample regret bound of $\tilde{\mathcal{O}}_p(T^{-s/(2s+1)})$ for smoothness orders $s=1,2$. This bound improves upon existing regret rates in existing DRO-based off-policy learning (OPL) for continuous treatments. Extensive experiments under various levels of distribution shift validate our proposed framework.
\end{abstract}
\section{Introduction}
Personalized pricing has gained popularity on online platforms, as firms can account for consumer heterogeneity by setting prices based on individual contextual information \citep{cohen2020feature, chen2021nonparametric, tang2025offline, zhang2025personalized}.
While randomized controlled trials (RCTs) provide a reliable way to estimate treatment effects, see \citet{colnet2024causal} for a review, the large number of possible price options \citep{ye2025deep} and the risks associated with real-time exploration often make off-policy evaluation and learning (OPE/L) from historical observational data a more practical and safer approach \citep{kallus2018policy,kitagawa2018,athey2021policy, zhou2023offline,carranza25a}.

Standard inverse probability weighting (IPW) \cite{horvitz1952generalization,swaminathan2015batch} and doubly robust (DR) estimators \cite{dudik2011doubly,zhao2015doubly}, have been well-developed for discrete treatment settings \citep{jiang2016doubly, athey2021policy, mbakop2021model, zhan2021off,Fang2025}.
Generally, direct application of these methods to continuous treatment settings is challenging, as it would reject most observed data due to the zero-probability of exact treatment matches, leading to high variance.
To address these limitations, recent work has introduced kernel-based smoothing techniques that enable more stable estimation in continuous domains \citep{kallus2018policy,lee2022local,colangelo2026double,ai2026datadrivenpolicylearningcontinuous}.
However, even with kernel smoothing, OPE/L remains vulnerable to distributional shifts between training and deployment environments, which are common in dynamic markets due to concept drift or covariate shifts \citep{sinian2023,  wang2024contextualoptimizationcovariateshift,pmlr-v267-wang25cm}. Distributionally robust optimization (DRO) provides a principled framework to mitigate such risks by optimizing for worst-case performance within an uncertainty set \citep{delage2010distributionally, ben2013robust, hu2013kullback}.
DRO has been successfully applied to OPE/L in discrete treatment settings \citep{kallus2022doubly, sinian2023,pmlr-v267-wang25cm}. For continuous treatments, recent works have extended DRO using kernel-smoothed IPW estimators \citep{leung2025distributionally}, which provide finite-sample convergence guarantees.
We summarize the comparison between our result and prior works in Table \ref{tab:related_work}.
Nevertheless, these IPW-based approaches remain dependent on the convergence rate of the propensity density estimates, where nonparametric estimation can lead to deteriorated estimation error and regret.

In this paper, we propose doubly robust algorithms for distributionally robust OPE/L, incorporating a scaled kernel function to handle continuous pricing settings.
Unlike general continuous-treatment literature, which typically assumes a smooth outcome function \citep{ai2026datadrivenpolicylearningcontinuous, colangelo2026double}, pricing inherently involves a discontinuous revenue function due to threshold purchase behaviors \citep{javanmard2019dynamic,cohen2020feature}. 
Recently, \citet{kallus2018policy} requires the smoothness of the outcome functions with respect to the treatment, which is also imposed on the conditional outcome density functions in \citet{leung2025distributionally, colangelo2026double}.
Nevertheless, we handle this rigorously through the $s$-smoothness of the cumulative distribution function (CDF) of noise in the demand function, for $s=1,2$.
Our main contributions are as follows:

\textbf{Distributionally Robust Off-policy Evaluation (OPE).} While Neyman orthogonality and $\sqrt{Th}$-consistency ($h$ is the kernel bandwidth) have been established for continuous treatment in non-robust settings \citep{kennedy2017non,colangelo2026double}, extending it to a DRO framework poses a non-trivial challenge, as the DRO and DR components become highly coupled by a supremum over the log of moment generating functions.
We propose the Localized Doubly Robust DRO OPE for Continuous Pricing (LDR$^2$O$^2$PE-CP) estimator and show that it achieves $\sqrt{Th}$-consistency and enjoys semiparametric efficiency even under the DRO objective (see Section \ref{section: distributionally robust policy evaluation}). Our approach ensures robustness to both environment shifts and estimation errors, remaining computationally tractable by fitting nuisances only three times.

\textbf{Distributionally Robust Off-policy Learning (OPL).} For policy learning, we establish a finite-sample regret bound of $\tilde{\mathcal{O}}_p(T^{-s/(2s+1)})$, $s=1,2$, for a policy class with finite pseudo-dimension, even when nuisance functions are nonparametrically estimated at slow rates (see Section \ref{section: distributionally robust policy learning}).
This bound represents a theoretical improvement over the $\mathcal{O}_p(T^{-1/4})$ rate established in existing distributionally robust \citep{leung2025distributionally} and non-robust \citep{kallus2018balanced} approaches for continuous treatments, and is comparable to rates reported in the non-robust continuous treatment literature \citep{ai2026datadrivenpolicylearningcontinuous}.

See \url{https://github.com/OfflinePricing/DR-DRO-PRICING} for simulation and Expedia study.

\begin{table*}[htbp]
  \caption{Comparison of convergence rates in the OPL literature. We report regret upper bounds with respect to sample size $T$, treating policy class complexity measures (e.g., entropy integrals) as constant factors. For our work and \citet{ai2026datadrivenpolicylearningcontinuous}, $s=1,2$ corresponds to the smoothness order of the outcome functions, while \citet{colangelo2026double} and \citet{leung2025distributionally} require three-times differentiability ($s=3$) of conditional outcome densities w.r.t. treatment. 
  Here $\tilde{\mathcal{O}}$ hides logarithmic factors in $T$.}
  \label{tab:related_work}
  \begin{center}
   \resizebox{\textwidth}{!}{ 
        \begin{tabular}{cccc}
          \toprule
              Paper & Action Space        & Distributional Shifts?      & Regret Bound  \\
          \midrule
          \citet{kitagawa2018,athey2021policy,mbakop2021model} & Binary  & $\times$ & $ \mathcal{O}_p(T^{-1/2})$ \\
          \citet{dudik2011doubly,swaminathan2015batch,kallus2018balanced,zhou2023offline} & Discrete & $\times$& $ \mathcal{O}_p(T^{-1/2})$ \\
          \citet{kallus2018policy,colangelo2026double,ai2026datadrivenpolicylearningcontinuous} & Continuous & $\times$ & $\mathcal{O}_p(T ^{-s/(2s+1)})$ \\
          \citet{kallus2022doubly, sinian2023, pmlr-v267-wang25cm}      & Discrete  & $\surd$  & $ \mathcal{O}_p(T^{-1/2})$ \\
           \citet{leung2025distributionally}& Continuous & $\surd$ & $ \mathcal{O}_p(T^{-1/4})$ \\
          Ours & Continuous & $\surd$ & $\tilde{\mathcal{O}}_p(T ^{-s/(2s+1)})$\\
          \bottomrule
        \end{tabular}
   }
  \end{center}
  \vskip -0.15in
\end{table*}
\subsection{Related Work}
\textbf{Off-Policy Evaluation/Learning.} 
A large body of work has developed OPE/L methods for discrete treatment settings \citep{dudik2011doubly,kitagawa2018,athey2021policy,mbakop2021model,zhou2023offline, Fang2025}, continuous treatment \citep{kallus2018policy,chernozhukov2019semi,colangelo2026double,ai2026datadrivenpolicylearningcontinuous}, and general treatment \cite{fang2025semiparametric}. Although some allow the data to be adaptively collected \citep{zhan2021off,zhan2024policy,jin2021pessimism,jin2025policy}, most remain non-robust to distributional shifts.

Recent efforts have incorporated distributional robustness to handle distribution shifts, primarily in discrete treatment settings \citep{kallus2022doubly,mu2022factored,blanchet2023double,sinian2023,guo2024distributionally,jin2025beyond, pmlr-v267-wang25cm}, and some also address multi-source shifts \citep{carranza25a,imbens2025long}. 
The closest prior work is \citet{leung2025distributionally}, which develops a distributionally robust OPL framework for continuous treatments using kernel-smoothed IPW, with performance tied to propensity density convergence rates under nonparametric estimation.
Our doubly robust approach achieves semiparametric efficiency under milder product rate conditions, yields a sharper finite-sample regret bound than \citet{leung2025distributionally}.


\textbf{Contextual Pricing.} Recently, many papers have been focusing on contextual pricing, see \citet{den2015dynamic,chen2021nonparametric} for a review. 
Offline contextual pricing is also practically relevant in real marketplaces and platform operations (e.g., \citet{zhou2024decision,zhang2026personalized}). 
Existing works can be categorized by the assumptions on the two unknown functions, namely the mean utility and the CDF of the noise. 
\citet{javanmard2019dynamic,cohen2020feature,ban2021personalized,miao2022context} assume the noise CDF is known in advance, and impose a parametric model on the product's market value. 
Alternatively, \citet{shah2019semi,fan2024policy,luo2024distribution} allow the noise distribution also to be unknown, and use a semiparametric model where mean utility is parametrically organized.
Furthermore, \citet{nambiar2019dynamic, chen2021nonparametric,hu2022smooth,bu2025context} adopt a fully nonparametric approach to model both the mean utility function and the noise distribution.
\citet{chen2024dynamic} propose a doubly nonparametric random utility model for demand characterization, which is most relevant to our work.

\section{Problem Formulation}\label{section:Problem Formulation}
\subsection{Offline Contextual Pricing}
We consider a contextual pricing setting where a context vector $X \in \mathcal{X} \subseteq \mathbb{R}^d$ and a random noise term $Z \in \mathbb{R}$ are drawn from a joint distribution $\bP_0$. $X$ and $Z$ are independent and the marginal distributions are denoted by $F_X$ and $F_Z$, respectively. 
For a given context $X$, the buyer's valuation is modeled as $v(X, Z) = f(X) + Z$, where $f: \mathcal{X} \to \mathbb{R}$ is the unknown deterministic component of the valuation.

In the offline learning setting, we are given a training dataset $\mathcal{D} = \{(x_t, p_t, y_t)\}_{t=1}^T$ consisting of $T$ independent and i.i.d. triples. Specifically, for each $t$, a context vector $x_t$ and a noise term $z_t$ are drawn from the underlying distribution $\bP_0$. Subsequently, a price $p_t\in \mathcal{P}$ is selected by a known logging policy $\pi_0$ with probability $\pi_0(p_t | x_t)$, where $\mathcal{P} \triangleq [p_{\rm min},p_{\rm max}]$ denotes the set of feasible prices. Without loss of generality, we assume the prices are normalized such that $p_{\rm min} > 0$ and $p_{\rm max}=1$.
 Then, the binary purchase decision $y_t = \Ind{z_t \geq p_t - f(x_t)}$  is observed. Consequently, the observed revenue at time $t$ is given by $r_t = p_t y_t $. 
 We make the following assumptions on the data-generating process.

\begin{assumption}[Unconfoundedness]\label{assumption: Unconfoundedness}
The purchase decision $Y(p):= \mathbf{1}\{ v(X,Z) \geq p \}$ is independent of the price $P$ conditional on the context $X$, i.e., $Y(p) \indep P \mid X $.
\end{assumption}

\begin{assumption}[Overlap]\label{assum:overlap}
    There exists a constant $\eta > 0$ such that $\pi_0(p | x) \ge \eta$ for all $  (x,p) \in \mathcal{X} \times \mathcal{P}$.
\end{assumption}

\begin{assumption}\label{ass: Positive probability}
    There exists constants $\underline{\rho}, \overline{\rho} \in (0,1) $ such that the joint distribution $\bP_0$ satisfies\[
    \underline{\rho}\leq \bP_0 \left(v(X,Z) < P \mid X\right) \leq \overline{\rho} , \ \forall P\in \mathcal{P} , X \in \mathcal{X}.
    \]
\end{assumption}

\begin{assumption}[Smoothness of Noise Distribution]\label{assumption: Lipschitz Continuity of FZ}
    The noise distribution $F_Z$ is $s$-differentiable with bounded $s$-order derivative, i.e., $\sup_{z} |F_Z^{(s)}(z)| \leq L_Z < \infty$ for $s=1$.
\end{assumption}

Assumption \ref{assum:overlap} guarantees that every feasible price has a non-zero probability of being sampled for any given context \citep{kallus2018policy,sinian2023,leung2025distributionally}. This ensures sufficient exploration within the training data, making it possible to assess the performance of any new pricing strategy.
Assumption \ref{ass: Positive probability} implies that there always exists the non-negligible probabilities that the customer will reject/accept the offer, regardless of the context. This boundedness is made to prevent our estimator from diverging to infinity.
Assumption \ref{assumption: Lipschitz Continuity of FZ} is standard in the literature \citep{javanmard2019dynamic, chen2021nonparametric, chen2024dynamic, fan2024policy, gong2025minimaxoptimalitycontextualdynamic}. And we discuss a stronger assumption that $F_Z$ has a bounded second-order derivative, $s=2$ \citep{kallus2018policy, leung2025distributionally}, which leads to an improved result in Section \ref{section: distributionally robust policy learning}. These different smoothness assumptions align with the differentiability of the outcome regression in \citet{ai2026datadrivenpolicylearningcontinuous}. 
\subsection{Distributionally Robust Performance}
A seller's policy $\pi \in \Pi$ is defined as a deterministic function $\pi: \mathcal{X} \to \mathcal{P}$ mapping contexts to prices, and the revenue is $$    r(X, \pi(X), Z) \triangleq \pi(X) \Ind{Z \ge \pi(X) - f(X)}.$$
We evaluate the performance of a policy $\pi$ based on its expected revenue, formalized by the policy value function:
\begin{definition}\label{def:policy value function}
    The policy value function $R : \Pi \to  \mathbb{R}$ is defined as: $R(\pi) \triangleq \E_{\bP_0} [ r(X, \pi(X), Z) ].$
\end{definition}
Given the distributional shift between training and testing environments, we adopt a distributionally robust framework that explicitly accounts for potential deviations from the nominal training distribution $\bP_0$.
We define distributional uncertainty set $\mathcal{U}_{\bP_0}(\delta)$ centered at $\bP_0$ with radius $\delta$ as:
$$ \mathcal{U}_{\bP_0}(\delta) \triangleq \left\{ \bP : \bP \ll \bP_0 \text{ and } D(\bP \| \bP_0) \le \delta \right\}, $$
where $\bP \ll \bP_0$ denotes absolute continuity with respect to $\bP_0$ and $D(\bP \| \bP_0) \triangleq \int \log (\frac{\mathrm{d}\bP}{\mathrm{d}\bP_0}) \mathrm{d}\bP$ is the Kullback-Leibler (KL) divergence between two probability measures $\bP$ and $\bP_0$.
\begin{definition}
    For a given $\delta>0$, the distributionally robust value function $R_{\delta}:\Pi \to \R$ is defined as:
    \[
        R_{\delta}(\pi):= \inf_{\bP\in\Uclass_{\bP_0}(\delta)} \E_{\bP} [ r(X, \pi(X), Z) ]
    \]
\end{definition}
The optimal policy $\pi^*$ is a policy that maximizes the distributionally robust value function: $ \pi^* \in \arg\max_{\pi\in\Pi} R_{\delta}(\pi)$. While such an optimal policy does not always exist, one can construct a sequence of policies converging to the supremum $\sup _{\pi \in \Pi} \{R _{\delta} (\pi) \}$; for simplicity, we assume the maximum is attainable throughout our analysis. We formalize the learning metric via the discrepancy between the performance of the optimal policy and the performance of the learned policy $\pi$, defined as the distributionally robust regret:
\begin{definition}\label{def:robust regret}
    The distributionally robust regret $ {\rm Reg}(\pi) $ of a policy $\pi\in\Pi$ is defined as 
    ${\rm Reg}(\pi) \triangleq R_{\delta}(\pi^*) - R_{\delta}(\pi) $.
\end{definition}
Recalling the strong duality results from \citet{kallus2022doubly} and \citet{sinian2023}, we transform the original infinite-dimensional optimization over $\bP \in \mathcal{U}_{\bP_0}(\delta)$ to a dual problem involving a single scalar $\alpha$. 
For any policy $\pi\in \Pi$, we assume that the distributionally robust value $R_\delta(\pi)>0$, which prevents degeneration into the trivial no-purchase outcome. It is natural in our setting and simply requires that the perturbation budget $\delta$ is not overly large, see Appendix \ref{app: Sec2} for a detailed discussion.
\begin{lemma}\label{lemma: Strong Duality and concavity}
    Suppose Assumption \ref{assumption: Unconfoundedness} and \ref{ass: Positive probability}.
    The distributionally robust value $R_{\delta}(\pi)$ is equivalent to 
    \begin{equation}
       R_{\delta}(\pi)=\sup_{\alpha\geq0} \varphi(\pi, \alpha) \triangleq  -\alpha \log W(\pi,\alpha) - \alpha \delta, \label{eq: R DRO expression}
    \end{equation}
    where $ W(\pi,\alpha) \triangleq \E_{\bP_0} \left[ \exp{(-r(X,\pi(X), Z)/\alpha)}\right] $.
    Furthermore, $\varphi(\pi, \cdot)$ is strictly concave with respect to $\alpha$, and it admits a unique maximizer $\alpha^*(\pi) \in (0, \bar{\alpha}]$, where $\bar{\alpha} \triangleq 1/\delta$.
\end{lemma}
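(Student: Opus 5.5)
Write $r \equiv r(X,\pi(X),Z)$, so that $r\in\{0,\pi(X)\}\subseteq[0,1]$. I would prove the duality identity through the Donsker--Varadhan / Gibbs variational formula. The bound $\alpha^*\le 1/\delta$ and strict concavity would then follow by elementary calculus.

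\textbf{Duality.} For $\bP\ll\bP_0$, write $L=\mathrm{d}\bP/\mathrm{d}\bP_0$. Since $r$ is bounded, the primal is a convex program in $L$: linear objective, convex KL constraint. The point $L\equiv 1$ is strictly feasible because $\delta>0$ (Slater), so the Lagrangian gives
\[
R_\delta(\pi)=\sup_{\alpha\ge0}\Big\{\inf_{L}\E_{\bP_0}[Lr+\alpha L\log L]-\alpha\delta\Big\}.
\]
For $\alpha>0$, the Gibbs variational principle gives $\inf_L \E_{\bP_0}[Lr+\alpha L\log L]=-\alpha\log \E_{\bP_0}[e^{-r/\alpha}]$. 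The infimum is attained at $L\propto e^{-r/\alpha}$. For $\alpha=0$, I interpret $\varphi(\pi,0)$ as the limit $\operatorname{ess\,inf} r$. To avoid redoing the general theory, I would cite the strong duality of \citet{kallus2022doubly,sinian2023}. Assumption~\ref{assumption: Unconfoundedness} is used only to regard $r(X,\pi(X),Z)$ as a well-defined measurable function under $\bP_0$, independent of the logging mechanism.

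\textbf{Concavity.} The map $\alpha\mapsto\alpha\log\E[e^{-r/\alpha}]$ is the perspective of the convex cumulant generating function $\Lambda(\lambda)=\log\E[e^{\lambda r}]$, evaluated at $\lambda=-1/\alpha$. The perspective of a convex function is convex, so $\varphi(\pi,\cdot)$ is concave. For strictness, I would compute the second derivative directly:
\[
\partial_\alpha^2\varphi=-\alpha^{-3}\,\mathrm{Var}_{\bQ_\alpha}(r),
\]
where $\bQ_\alpha$ is the tilted measure with density $\propto e^{-r/\alpha}$. By Assumption~\ref{ass: Positive probability}, $\bP_0(r=0\mid X)\in[\underline\rho,\overline\rho]$ and $\pi(X)\ge p_{\min}>0$. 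Hence $r$ takes the value $0$ and a positive value, each with positive probability under $\bP_0$. The tilted measure is equivalent to $\bP_0$, so $\mathrm{Var}_{\bQ_\alpha}(r)>0$, which gives strict concavity on $(0,\infty)$.

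\textbf{Maximizer location.} Since $r\ge0$, we have $W\le1$ and $-\alpha\log W\le \E_{\bQ_\alpha}[r]\le 1$. More simply, $-\alpha\log W\le \E_{\bP_0}[r]$ by Jensen. Thus $\varphi(\pi,\alpha)\le 1-\alpha\delta$, which is negative for $\alpha>1/\delta$. Because $R_\delta(\pi)>0$ is assumed, every $\alpha>\bar\alpha$ is strictly suboptimal. At the other end, $\varphi(\pi,\alpha)\to\operatorname{ess\,inf} r=0$ as $\alpha\downarrow0$, since $r=0$ has positive probability. This limit is again below $R_\delta(\pi)>0$, so the supremum is not approached at $0$. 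By continuity on the compact interval $[\epsilon,\bar\alpha]$, the maximizer is attained. Strict concavity then gives uniqueness, and $\alpha^*(\pi)\in(0,\bar\alpha]$.

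\textbf{Main obstacle.} The delicate points are the boundary $\alpha=0$ and justifying strong duality with no duality gap. I would handle these by citing the cited duality results and using the positivity assumption $R_\delta(\pi)>0$ to exclude $\alpha=0$.
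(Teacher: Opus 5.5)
Your proposal is correct and follows essentially the same route as the paper. The paper takes strong duality from the literature (Theorem~1 of \citet{hu2013kullback}). Its strict concavity comes from the second derivative, and your form $-\alpha^{-3}\mathrm{Var}_{\mathbf{Q}_\alpha}(r)$ is exactly the paper's Cauchy--Schwarz expression rewritten under the tilted measure. The bound $\varphi(\pi,\alpha)\le 1-\alpha\delta$ gives $\alpha^*(\pi)\le\bar{\alpha}$. The limit $\varphi(\pi,\alpha)\to 0$ as $\alpha\downarrow 0$, together with $R_\delta(\pi)>0$, rules out $\alpha=0$. Your compactness argument on $[\epsilon,\bar{\alpha}]$ additionally makes attainment explicit, where the paper leaves it implicit.

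One small slip: the inequality $-\alpha\log W\le \E_{\mathbf{Q}_\alpha}[r]$ is backwards. The Gibbs identity gives $-\alpha\log W=\E_{\mathbf{Q}_\alpha}[r]+\alpha D(\mathbf{Q}_\alpha\|\bP_0)\ge \E_{\mathbf{Q}_\alpha}[r]$. However, the Jensen bound $-\alpha\log W\le \E_{\bP_0}[r]\le 1$ that you give right after is correct and suffices.
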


Unlike the discrete setting, using the indicator function \citep{dudik2011doubly,sinian2023,zhan2024policy} to estimate the robust policy value $R_{\delta}(\pi)$ is not valid in the continuous setting. 
\citet{kallus2018policy} and \citet{leung2025distributionally} propose a kernel smoothing approach, where the indicator function is replaced by a kernel function $K(\cdot)$ with bandwidth $h$. They also assume that the logging policy $\pi_0$ is known and propose to estimate $W(\pi,\alpha)$ by IPW, based on normalizing the propensity ratios
$\frac{K\left(\frac{p_t - \pi(x_t)}{h}\right)}{h \pi_0(p_t | x_t)}$:
\begin{align}
    &\hat{W}^{IPW}_T(\pi,\alpha) =  \sum_{t=1}^T  \frac{\frac{K\left( \frac{p_t - \pi(x_t)}{h} \right)}{h \pi_0(p_t | x_t)}}{\sum_{t=1}^T  \frac{K\left( \frac{p_t - \pi(x_t)}{h} \right)}{h \pi_0(p_t | x_t)}}  \exp(-p_t y_t/\alpha), \label{eq: W IPW}\\
    &\text{then } \quad \hat{\pi}^{IPW}\in \arg\max_{\pi\in\Pi}  \hat{R}^{IPW}_{\delta}(\pi), \label{pi IPW} \\
    & \text{where } \quad \hat{R}^{IPW}_{\delta}(\pi)\triangleq \sup_{\alpha \in(0,\bar{\alpha}]} \Big\{ -\alpha \hat{W}^{IPW}_T(\pi,\alpha) - \alpha \delta \Big\}. \nonumber
ƒ\end{align}
However, in practice, the logging policy $\pi_0$ is unknown and needs to be learned from the data. Furthermore, neither the standard IPW estimator nor its self-normalized variant achieves semiparametric efficiency. To overcome these challenges, we adopt doubly robust methods.

\section{Distributionally Robust Policy Evaluation}\label{section: distributionally robust policy evaluation}
In this section, we develop a doubly robust framework for evaluating the distributionally robust value of a target policy. However, it is challenging to estimate the distributionally robust value directly, as it involves a non-linear dual optimization problem and, potentially, a continuum of nuisance regressions.
Building upon the localized debiased machine learning framework proposed by \citet{kallus2022doubly,kallus2024localized}, we extend the localization technique from discrete to continuous treatment settings. Unlike the discrete action case, our estimator incorporates kernel smoothing to handle the continuous policy space.

\subsection{Localized Doubly Robust Estimation}
As established in Lemma \ref{lemma: Strong Duality and concavity}, the distributionally robust value $R_{\delta}(\pi)$ can be characterized via its dual form, and there exists a unique optimal dual parameter $\alpha^*:= \alpha^*(\pi)$ that satisfies the following first-order condition
\begin{align}
    -\log W_0(\pi,\alpha^*) - \frac{W_1(\pi,\alpha^*)}{\alpha^* W_0(\pi,\alpha^*)} - \delta = 0. \label{eq: first-order condition 3.1} 
\end{align}
where $ W_j(\pi,\alpha) \triangleq \E[ (\pi(X) Y(\pi(X)))^j \exp( - \pi(X)$ $ Y(\pi(X))/\alpha )]$, $j = 0, 1$. Using the shorthand $W_j^* = W_j(\pi,\alpha^*)$, the robust policy value is then given by
\begin{equation}
    R_{\delta}(\pi) = -\alpha^* \log W_0^* - \alpha^* \delta, \label{eq: robust policy value expression 3.1}
\end{equation}
Therefore, estimating $\alpha^*(\pi)$ and $R_{\delta}(\pi)$ in \eqref{eq: first-order condition 3.1} and \eqref{eq: robust policy value expression 3.1} is equivalent to estimating the root of the following moment equation with parameter $\theta = [\alpha, W_0, W_1, R_{\delta}]^\top $:
\begin{equation}
    \E \left[ U(\pi(X) Y(\pi(X)); \alpha) + G(\theta) \right] = \bm{0}, \label{eq: moment equation 3.1}
\end{equation}
\begin{equation*}
    U(r; \alpha) \!= \!\!\left[\begin{matrix}\!
        \exp(-r/\alpha)  \\
        r \exp(-r/\alpha)  \\
        0 \\
        0
    \end{matrix}\!\right]\!\! ,  
    G(\theta) \!= \!\!\left[\begin{matrix}\!
        - W_0 \\
        - W_1 \\
        - \delta -\log W_0 - \frac{W_1}{\alpha W_0} \\
        -R_\delta - \alpha \log W_0 - \alpha \delta
    \end{matrix}\right]\!\! .
\end{equation*}
In practice, direct estimation of \eqref{eq: first-order condition 3.1} and \eqref{eq: robust policy value expression 3.1} is infeasible because the counterfactual outcomes $Y(\pi(X))$ are not observed. Instead, we consider the following doubly robust moment equation in terms of the observed variables $V=(X, P, Y)$, with nuisances $\lambda_1$, $\lambda_2$ to be estimated:
\begin{align*}
    &\E\left[  \psi(V; \theta, \lambda_1^* (V;\alpha), \lambda_2^* (V)) \right] = \bm{0}, \\
    &\psi(v; \theta, \lambda_1, \lambda_2) = \frac{K(\frac{p - \pi(x)}{h})}{h \lambda_2 (x,p)} \left( U(py;\alpha)- \lambda_1(x,p;\alpha) \right) \\
    &+ \lambda_1(x,\pi(x);\alpha) + G(\theta),
\end{align*}
where we incorporate kernel function $K(\cdot)$ with bandwidth $h$ to handle the continuous policy space. And $\lambda_2^* (x,p) = \pi_0(p | x)$ is the logging policy, \begin{align*}
    &\lambda_1^*(x,p;\alpha) = \E[U(PY;\alpha)|X=x,P=p ]  \\
    &  \qquad \qquad \  = [ m_0(x,p;\alpha), m_1(x,p;\alpha),0,0]^\top, \\
    &m_j(x,p;\alpha) = \E[(PY)^j \exp(-PY/\alpha)|X=x,P=p ] . 
\end{align*} 
A significant technical hurdle in distributionally robust policy evaluation is that the outcome regressions, $m_j(\cdot;\alpha)$, are coupled with the dual parameter $\alpha$. 
Recalling that Neyman orthogonality for continuous treatment \citep{colangelo2026double} implies that the G\^{a}teaux derivatives of $\E\left[  \psi(V; \theta, \lambda_1 (V;\alpha), \lambda_2 (V)) \right] $ with respect to the nuisances $(\lambda_1, \lambda_2)$ are $\mathcal{O}(h^s)$ when evaluated at $\theta^*=(\alpha^*, W_0^*, W_1^*, R_{\delta}(\pi))$, $\lambda_1(\cdot;\alpha)=\lambda_1^*(\cdot;\alpha^*) $ and $\lambda_2 = \lambda_2^*$. This orthogonality ensures that our estimator remains first-order insensitive to the estimation errors of the nuisances as the bandwidth $h \to 0$.
Thus, we leverage the localization technique \citep{kallus2022doubly, kallus2024localized} to utilize an initial guess $\hat{\alpha}_{init}$ and fit the nuisances only at this localized point, instead of a continuum of regressions for whole $\alpha \in (0,\bar{\alpha}]$, as provided the initial guess is sufficiently close to $\alpha^*$.

\begin{algorithm}[htbp]
    \caption{Localized Doubly Robust DRO OPE for Continuous Pricing (LDR$^2$O$^2$PE-CP)}
    \label{alg:l_ks_dr_drope}
    \begin{algorithmic}[1]
      \STATE {\bfseries Input:} Dataset $\mathcal{D}$, uncertainty radius $\delta$, policy $\pi$, kernel function $K$ with bandwidth $h$.
      \STATE Randomly evenly partition $\mathcal{D}$ into $\{\mathcal{I}_1, \dots, \mathcal{I}_L\}$.
      \FOR{$\ell = 1$ to $L$}
      \STATE  Train $\hat{\pi}_0^{(\ell)}(p | x)$ to fit $\pi_0(p | x)$ using $\mathcal{I}_\ell^c= \mathcal{D} \setminus \mathcal{I}_\ell$.
      \STATE Randomly split $\mathcal{I}_\ell^c $ into two halves: $ \mathcal{J}_1, \mathcal{J}_2$.
      \STATE $\alpha_{init}^{(\ell)} \leftarrow $ \text{InitialEstimate}($  \mathcal{J}_1, \delta, \pi$).
      \STATE  Train $\hat{m}^{(\ell)}_j(x, p;\alpha_{init}^{(\ell)})$ to fit $ m_j(x,p;\alpha_{init}^{(\ell)})$ using  data $\mathcal{J}_2$, $j=0,1$.
      \ENDFOR
      \STATE Initialize $\alpha \leftarrow \frac{1}{L} \sum_{\ell=1}^L \alpha_{init}^{(\ell)} $.
      \STATE Find $\hat{\alpha} $ that solves the estimated moment equation:
      \begin{align}
          &-\log \hat{W}_0(\alpha) - \frac{\hat{W}_1(\alpha)}{\alpha \hat{W}_0(\alpha)} -\delta = 0, \ \text{where} \label{eq: estimated moment equation}\\ 
          &\hat{W}_j(\alpha) \triangleq \frac{1}{T} \sum_{\ell=1}^L \sum_{t\in \mathcal{I}_\ell} \hat{W}_j^{(t,\ell)}(\alpha),  \quad j=0,1, \nonumber\\
          &\hat{W}_j^{(t,\ell)}(\alpha) \triangleq \hat{m}_j^{\ell}(x_t, \pi(x_t);
          \hat{\alpha}^{(\ell)}_{init}) + \frac{K(\frac{p_t-\pi(x_t)}{h})}{h \hat{\pi}_0^{(\ell)}(p_t | x_t)} \nonumber\\
          & \cdot \left( (p_t y_t)^j \exp(-p_t y_t / \alpha) - \hat{m}^{(\ell)}_j(x_t, p_t; \hat{\alpha}^{(\ell)}_{init}) \right). \nonumber
      \end{align}
      \STATE Calculate $\hat{R}_{\delta} \leftarrow -\hat{\alpha} \log \hat{W}_0(\hat{\alpha}) - \hat{\alpha} \delta$.
      \STATE {\bfseries Return:} $\hat{\theta} = [\hat{\alpha}, \hat{W}_0(\hat{\alpha}), \hat{W}_1(\hat{\alpha}), \hat{R}_{\delta}]^\top$.
    \end{algorithmic}
  \end{algorithm}

We propose Algorithm \ref{alg:l_ks_dr_drope}, Localized Doubly Robust DRO OPE for Continuous Pricing (LDR$^2$O$^2$PE-CP), to summarize the localized doubly robust evaluation procedure for continuous treatment. Following the two-level cross-fitting framework from \citet{kallus2022doubly}, our implementation is tailored to handle the continuous action space.
First, we randomly evenly partition $\mathcal{D}$ into $L$ disjoint folds $\{\mathcal{I}_\ell\}_{\ell \in [L]}$. We estimate the propensity estimator $\hat{\pi}_0^{(\ell)}$ using the out-of-fold data $\mathcal{I}_\ell^c= \mathcal{D}\setminus\mathcal{I}_\ell$. And we address the dependence of outcome regressions on $\alpha$ by fitting $\hat{m}_j^{(\ell)}$ only at a localized point $\hat{\alpha}_{init}^{(\ell)}$. Here, $\hat{\alpha}_{init}^{(\ell)}$ is a preliminary consistent estimate obtained by InitialEstimate, for example, via a continuous-treatment IPW estimator \citep{leung2025distributionally}.
By leveraging the localization technique, we circumvent these continuum regression problems, requiring the training of only three machine learning models.
Finally, we solve the empirical moment equation for the dual parameter $\alpha$ by the Newton-Raphson method \citep{sinian2023, leung2025distributionally} to find the root of \eqref{eq: estimated moment equation}. The gradients are provided in Lemma \ref{lemma: first and second derivatives of hat W T}.

\subsection{Asymptotic Analysis}
Set $h=\mathcal{O}(T^{-\gamma_h})$. Define the estimation rates as follows,
\begin{align*}
    &\max_{j=0,1} \left\| \hat{m}^{(\ell)}_j(\cdot; \hat{\alpha}^{(\ell)}_{init}) - m_j(\cdot; \hat{\alpha}^{(\ell)}_{init}) \right\|_\infty \leq o_p(T^{-\rho_m}), \\
    & \left\| \hat{\pi}^{(\ell)}_0 - \pi_0 \right\|_\infty \leq o_p(T^{-\rho_\pi}), \
     \left| \hat{\alpha}_{init} - \alpha^* \right| \leq o_p(T^{-\rho_\alpha}).
\end{align*}

\begin{assumption}\label{ass: OPE mj error bound}
    We assume that $
         \rho_\pi +\rho_\alpha \wedge \rho_m \geq \frac{1-\gamma_h}{2} $.
\end{assumption}
\begin{assumption}\label{assumption: kernel function}
    Kernel function $K(u): \mathbb{R} \rightarrow [0,\infty)$ has Lipschitz constant $L_K$ and  satisfies:\\
    (i) $\int K(u) \mathrm{d} u =1$, $\int u K(u) \mathrm{d} u =0$, \\
    (ii) $\int |u|^k |K(u)| du < \infty$ and $\int |u|^k K^2(u) du < \infty$ for $k=0,1,2$.
\end{assumption}
\begin{theorem}\label{thm:asymptotic}
    Let $\theta^* = [\alpha^*, W_0^*, W_1^*,  R_\delta(\pi)]^\top$ be the solution of \eqref{eq: moment equation 3.1}. Let $\hat{\theta}$ be the estimator defined in Algorithm \ref{alg:l_ks_dr_drope}. Then, under Assumptions \ref{assumption: Unconfoundedness}-\ref{ass: Positive probability}, \ref{ass: OPE mj error bound}, \ref{assumption: kernel function}, and $\sqrt{Th^{1+2s}} \to 0$ when Assumption \ref{assumption: Lipschitz Continuity of FZ} holds for $s$-order differentiability, $s=1,2$, we have
    \begin{align*}
        &\sqrt{Th} (\hat{\theta} - \theta^*) \xrightarrow{d} \mathcal{N}(\bm{0}, \Sigma^*),
        \ \Sigma^* = J^{*-1} \Omega^* J^{*- \top}, \text{where} \\
        &J^*=  \left[\begin{matrix}
        W_1^*/\alpha^{*2} & -1 & 0 & 0 \\
        W_2^*/\alpha^{*2} & 0 & -1 & 0 \\
        \frac{W_1^{*}}{\alpha^{*2}W_0^{*}} & -\frac{1}{W_0^*} + \frac{W_1^*}{\alpha^* W_0^{2}} & -\frac{1}{\alpha^* W_0^*} & 0 \\
        -\log W_0^* - \delta & -\frac{\alpha^*}{W_0^*} & 0 & -1
        \end{matrix}\right], \\
        &\Omega^* 
        = \int K^2(u) du \cdot {\rm diag}(\omega_0, \omega_1, 0, 0), \\
        & \omega_j =\mathbb{E} \left[ \frac{{\rm Var}((PY)^j\exp(-PY/\alpha^*) | X, P = \pi(X))}{\pi_0(\pi(X)| X)} \right],
    \end{align*}
     and $\Sigma^*$ is the optimal covariance, then $\hat{\theta}$ achieves the semiparametric efficiency lower bound for $\theta^*$.
\end{theorem}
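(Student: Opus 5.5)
The plan is to treat $\hat\theta$ as a Z-estimator for the stacked moment equation \eqref{eq: moment equation 3.1}. The proof then reduces to three pieces: an asymptotically linear expansion of the empirical doubly robust moments $\hat W_j(\alpha)$ around the oracle kernel-smoothed moments, a delta-method step through the Jacobian $J^*$, and a comparison of the resulting variance with the efficient influence function.

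\textbf{Step 1 (consistency of $\hat\alpha$).} By Lemma \ref{lemma: Strong Duality and concavity}, $\varphi(\pi,\cdot)$ is strictly concave with a unique maximizer $\alpha^*\in(0,\bar\alpha]$. Assumption \ref{ass: Positive probability} keeps $W_0$ bounded away from $0$ on compact subsets of $(0,\bar\alpha]$. I will show $\sup_{\alpha\in[\underline\alpha,\bar\alpha]}|\hat W_j(\alpha)-W_j(\pi,\alpha)|=o_p(1)$. This follows from a uniform law of large numbers for the kernel-weighted class $\alpha\mapsto (py)^j e^{-py/\alpha}$, which is Lipschitz in $\alpha$ and bounded, with weights of order $1/h$ and $Th\to\infty$. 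Bias terms are handled by overlap (Assumption \ref{assum:overlap}) and the kernel conditions in Assumption \ref{assumption: kernel function}. The argmax/Z-estimator consistency theorem then gives $\hat\alpha\to_p\alpha^*$.

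\textbf{Step 2 (linearization and remainder).} I will decompose, on each fold $\ell$,
\[
\hat W_j(\alpha)-W_j^*=\underbrace{\tfrac1T\textstyle\sum_t \psi_j(V_t;\alpha,\lambda^*)-W_j^*}_{\text{oracle}}+\underbrace{\text{(bias)}}_{O(h^s)}+\underbrace{\text{(nuisance remainder)}}_{R_T}.
\]
\begin{itemize}
\item \emph{Oracle term.} Its variance is $\tfrac1{Th}\int K^2\,\omega_j(1+o(1))$; a Lyapunov CLT applied to the triangular array gives $\sqrt{Th}$-normality. The two coordinates are asymptotically uncorrelated because the $\lambda_1(x,\pi(x))$ parts are $O(1)$ and vanish after scaling by $\sqrt{h}$, which is why $\Omega^*$ is diagonal. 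The variance of the $j=0$ and $j=1$ residual terms also should give the stated $\omega_j$.
\item \emph{Bias.} This is the key pricing-specific point. $\E[\psi_j\mid X]$ equals a kernel convolution of $p\mapsto m_j(X,p;\alpha^*)$, and $m_j$ depends on $p$ through $F_Z(p-f(X))$, which is $C^s$ by Assumption \ref{assumption: Lipschitz Continuity of FZ}. A Taylor expansion together with $\int uK=0$ gives $O(h^s)$, negligible since $\sqrt{Th^{1+2s}}\to0$.
\item \emph{Remainder.} Cross-fitting makes the nuisances independent of fold $\mathcal I_\ell$. The empirical-process part is then $o_p((Th)^{-1/2})$ by conditional Chebyshev, since each nuisance error is $o_p(1)$ in sup norm. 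The second-order part is a product of propensity error and outcome error, $\|\hat\pi_0-\pi_0\|_\infty\cdot\|\hat m_j(\cdot;\hat\alpha_{init})-m_j(\cdot;\alpha^*)\|_\infty$. I split the second factor into the regression error $T^{-\rho_m}$ plus the localization error $|\hat\alpha_{init}-\alpha^*|\cdot\sup|\partial_\alpha m_j|\lesssim T^{-\rho_\alpha}$. Assumption \ref{ass: OPE mj error bound} then makes the product $o_p(T^{-(1-\gamma_h)/2})=o_p((Th)^{-1/2})$.
\item \emph{Evaluation at $\hat\alpha$.} The moment is evaluated at $\hat\alpha$ inside the exponent but at $\hat\alpha_{init}$ inside $\hat m$. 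I will handle this by a mean-value expansion in $\alpha$ of the known part $(py)^je^{-py/\alpha}$, whose derivative yields the $W_{j+1}/\alpha^2$ entries of $J^*$. Stochastic equicontinuity of the kernel-weighted derivative class makes the cross term $o_p(1)\cdot(\hat\alpha-\alpha^*)$.
\end{itemize}

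\textbf{Step 3 (delta method and efficiency).} Stacking the four equations $\hat\Psi(\hat\theta)=0$ and expanding gives $0=\hat\Psi(\theta^*)+J^*(\hat\theta-\theta^*)+o_p(|\hat\theta-\theta^*|+(Th)^{-1/2})$. Here $J^*$ is the derivative of $\E[U]+G$ in $\theta$, and it is invertible because the $(3,1)$-block reflects strict concavity of $\varphi$. This yields $\sqrt{Th}(\hat\theta-\theta^*)\to\mathcal N(0,J^{*-1}\Omega^*J^{*-\top})$.

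For efficiency, I would compute the efficient influence function of the smoothed functional $\theta_h$ in the nonparametric model, following \citet{colangelo2026double} and \citet{kallus2022doubly}. Its leading $(Th)^{-1/2}$ part is exactly $-J^{*-1}$ applied to the kernel-weighted residual vector, so its variance matches $\Sigma^*$.

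The step I expect to be the main obstacle is Step 2's remainder control with the localized nuisance. The mismatch between $\hat\alpha$ (in $U$) and $\hat\alpha_{init}$ (in $\hat m$) has to be shown to contribute only through a product with the propensity error. This requires orthogonality to hold uniformly in a shrinking neighborhood of $\alpha^*$, with kernel weights of size $1/h$ inflating every bound. I would first try bounding $\sup_{|\alpha-\alpha^*|\le\epsilon_T}$ of the remainder via a maximal inequality over the Lipschitz-in-$\alpha$ class, with envelope $K/(h\eta)$.
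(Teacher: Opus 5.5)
Your overall architecture matches the paper's: a cross-fitted decomposition of the score difference, Lipschitz-in-$\alpha$ control of the localization error, a product term under Assumption~\ref{ass: OPE mj error bound}, Jacobian inversion, then a CLT and a delta-method efficiency argument. Your explicit consistency step fills something the paper leaves implicit. The $\hat\alpha$ versus $\hat\alpha_{init}$ mismatch you worry about is benign, because $\hat\alpha_{init}$ can be folded entirely into the nuisance error while $\alpha$ inside $U$ is part of $\theta$.

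\textbf{The gap is in the remainder.} Write $U_j=(PY)^je^{-PY/\alpha^*}$, $m_j=m_j(\cdot;\alpha^*)$ and $\Delta_m=\hat m_j^{(\ell)}(\cdot;\hat\alpha_{init})-m_j$. Besides a conditionally mean-zero propensity term and the product term, the score difference contains $\Delta_m(X,\pi(X))-\frac{K((P-\pi(X))/h)}{h\pi_0(P\mid X)}\Delta_m(X,P)$. Its conditional mean given $\mathcal I_\ell^c$ is
\[
\mathbb{E}_X\Big[\Delta_m(X,\pi(X))-\int K(u)\,\Delta_m(X,\pi(X)+uh)\,du\Big].
\]
With indicator weights (discrete actions) this vanishes identically. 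With kernel weights it does not, and it is neither mean-zero nor a product of errors. Using only sup-norm rates it is $O_p(\|\Delta_m\|_\infty)$. That is guaranteed to be $o_p((Th)^{-1/2})$ only if $\rho_m\wedge\rho_\alpha\ge(1-\gamma_h)/2$ on its own, so the product-rate double robustness is lost.

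The paper controls this term (its $\mathcal T_{DR}$) by requiring the fitted regressions $\hat m_j^{(\ell)}(x,\cdot\,;\alpha)$ to have bounded $p$-derivatives up to order $s$, uniformly for $\alpha$ near $\alpha^*$ (Assumption~\ref{assumption: bounded second-order derivative}). Combined with the smoothness of the true $m_j$ inherited from $F_Z$, a Taylor expansion (with $\int uK=0$ when $s=2$) gives $O_p(h^s)$. This is exactly where $\sqrt{Th^{1+2s}}\to0$ is used.

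Your ``bias'' bullet instead attributes an $O(h^s)$ bias to the oracle score, and that bias is not there. Since $\mathbb{E}[U_j-m_j(X,P)\mid X,P]=0$ at every $p$, you have $\mathbb{E}[\psi_j(V;\alpha^*,\lambda^*)\mid X]=m_j(X,\pi(X))$ exactly. The kernel-convolution bias of $m_j$ you describe belongs to the IPW estimator. For the DR score the smoothing bias sits entirely in the nuisance error, and no condition on $F_Z$ alone controls its $\hat m_j$ part.

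\textbf{Your justification for a diagonal $\Omega^*$ also fails.} Scaling by $\sqrt h$ kills the $m_j(X,\pi(X))$ pieces, but the cross-covariance of the two kernel-weighted residuals survives:
\[
\lim_{h\to0}h\,\mathbb{E}\Big[\frac{K^2(\frac{P-\pi(X)}{h})}{h^2\pi_0^2(P\mid X)}(U_0-m_0)(U_1-m_1)\Big]=\int K^2(u)\,du\cdot\mathbb{E}\Big[\frac{\mathrm{Cov}(U_0,U_1\mid X,P=\pi(X))}{\pi_0(\pi(X)\mid X)}\Big].
\]
Given $(X,P=p)$, $U_0=1-(1-e^{-p/\alpha^*})Y$ and $U_1=p\,e^{-p/\alpha^*}Y$ are both affine in the same Bernoulli $Y$. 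Their conditional covariance is therefore $-(1-e^{-p/\alpha^*})\,p\,e^{-p/\alpha^*}\,\mathrm{Var}(Y\mid X,P=p)$, which is strictly negative by Assumption~\ref{ass: Positive probability}.

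So the $(1,2)$ entry of $\Omega^*$ is nonzero and the diagonal form cannot be proved. The paper's proof only computes the diagonal entries and does not address this either. The correct limit has the full $2\times2$ block $\int K^2(u)\,du\cdot\mathbb{E}[\mathrm{Cov}((U_0,U_1)\mid X,P=\pi(X))/\pi_0(\pi(X)\mid X)]$ in the upper-left corner, and $\Sigma^*$ changes accordingly.
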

Theorem \ref{thm:asymptotic} holds under some additional regularity conditions on the nuisance smoothness, which are formally stated in Appendix \ref{appendix: proof of results in section 3}.
Algorithm \ref{alg:l_ks_dr_drope} achieves $\sqrt{Th}$-consistency, aligned with the $\sqrt{Th}$-consistency under non-robust setting \citep{colangelo2026double}. 
Furthermore, our LDR$^2$O$^2$PE-CP estimator attains semiparametric efficiency, which enables a flexible trade-off in the convergence rates of the nuisance functions to satisfy the product rate condition in Assumption \ref{ass: OPE mj error bound}. Notably, this requirement is relaxed to $(1-\gamma_h)/2$, which is strictly weaker than the $1/2$ threshold required for $\sqrt{T}$-consistent semiparametric estimators in discrete action spaces \citep{kallus2022doubly}. Moreover, it yields a more stable covariance structure $\Omega^*$ that depends on the conditional variance rather than the second moment as in the self-normalized IPW estimator of \eqref{eq: W IPW} \citep{leung2025distributionally}, even when the logging policy is known.

Finally, we note that the asymptotic normality in Theorem \ref{thm:asymptotic} requires the bandwidth $h$ to satisfy an undersmoothing condition relative to the smoothness of the noise distribution in Assumption \ref{assumption: Lipschitz Continuity of FZ}. Specifically, the approximation bias introduced by kernel-smoothing is $\mathcal{O}_p(h^s)$ for $s=1,2$. To ensure this bias vanishes in the limiting distribution, one requires $\sqrt{Th^{1+2s}} \to 0$, leading to a regret rate slightly slower than $\tilde{\mathcal{O}}_p(T^{-s/(2s+1)})$ (see Theorem \ref{thm:main result}).
This undersmoothing strategy is standard in the double machine learning literature for continuous treatment \citep{colangelo2026double} as it enables the construction of valid frequentist confidence intervals without explicit bias correction.

\section{Distributionally Robust Policy Learning}\label{section: distributionally robust policy learning}
In this section, we now focus on learning a robust policy $\hat{\pi}$ that maximizes the empirical distributionally robust value defined in Definition \ref{def: kernel-based doubly robust estimator}, 
$\hat{\pi} \in \arg\max \limits_{\pi \in \Pi} \hat{R}_{\delta}(\pi)$, 
and analyze how the empirically optimal policy performs out of sample by a finite-sample theoretical guarantee. 
\begin{definition}[Doubly Robust Estimators]\label{def: kernel-based doubly robust estimator}
    Let $K(\cdot)$ be a kernel function with bandwidth $h > 0$, and $\{ (x_t,p_t,y_t) \}_{t=1}^T$ be a given dataset. 
    Define the DR estimator for a continuous pricing problem as
\begin{align}
    & \hat{W}_{T}(\pi, \alpha) = \frac{1}{L} \sum_{\ell=1}^L \hat{W}^{(\ell)}(\pi, \alpha), \label{eq: hat W T cross fit} \\
    & \hat{W}^{(\ell)}(\pi, \alpha) = \frac{1}{|\mathcal{I}_\ell|} \sum_{t \in \mathcal{I}_\ell} 
    \hat{m}_0^{(\ell)}(x_t, \pi(x_t); \alpha) + \frac{1}{|\mathcal{I}_\ell| \hat{S}_{\ell}^\pi} \nonumber\\
    &  \cdot \sum_{t \in \mathcal{I}_\ell}
    \frac{K\left(\frac{p_t - \pi(x_t)}{h}\right)}{h \hat{\pi}^{(\ell)}_0(p_t | x_t)} \left( \exp(-p_t y_t / \alpha) - \hat{m}_0^{(\ell)}(x_t, p_t; \alpha) \right), \nonumber\\
    & \text{ where }
     \hat{S}_{\ell}^\pi = \frac{1}{|\mathcal{I}_\ell|} \sum_{t \in \mathcal{I}_\ell} \frac{K\left(\frac{p_t - \pi(x_t)}{h}\right)}{h \hat{\pi}^{(\ell)}_0(p_t | x_t)}. \nonumber
\end{align}
     Then, we define the distributionally robust value estimator $\hat{R}_{\delta} : \Pi \rightarrow \mathbb{R}$ as 
     \begin{flalign*}
        \quad & \hat{R}_{\delta}(\pi) \triangleq \sup_{\alpha \ge 0} \{\hat{\varphi}_T(\pi,\alpha) \}, \\
         \text{where} \quad  &\hat{\varphi}_T(\pi,\alpha)  \triangleq -\alpha \log \hat{W}_T(\pi,\alpha) - \alpha \delta. \qquad \qquad \qquad
     \end{flalign*}
\end{definition}
Different from distributionally robust policy evaluation at an initial guess $\alpha^{(\ell)}_{init}(\pi)$, distributionally robust policy learning needs to compute the distributionally robust estimator defined in Definition \ref{def: kernel-based doubly robust estimator}, and find the optimal $\hat{\alpha}^*(\pi)$ for maximizing $\hat{\varphi}_T(\pi,\alpha)$. Thus, policy learning inevitably requires $\hat{m}_0^{(\ell)}(x,p;\alpha)$ as a continuum estimate of the outcome regression functions $\{m_0(\cdot,\cdot;\alpha): \mathcal{X}\times \mathcal{P} \to \mathbb{R} \mid \alpha \in (0, \bar{\alpha}] \}$.
\subsection{Learning Algorithm}
A significant challenge arises because $m_0$ depends on the dual variable $\alpha$. In the policy learning phase, $\alpha$ varies continuously, and retraining a regression model for every possible value of $\alpha$ would be computationally prohibitive.
Following the approach proposed by \citet{kallus2022doubly}, we utilize a local weighting approach to estimate $m_0(x,p;\alpha)$. Specifically, given a dataset, we first learn a set of data-driven weight functions $\{\hat{\omega}_t^{(\ell)}(x, p)\}_{t \in \mathcal{I}_\ell^c}$. 
These weights represent the proximity or similarity of the observed covariate-action pair $(x_i, p_i)$ to a query point $(x, p)$. 
Common methods for constructing these weights include $k$-nearest neighbors, kernel regressions, and generalized random forests \citep{athey2019generalized, oprescu2019orthogonal, bertsimas2020predictive, cevid2022distributional, khosravi2022nonparametric}.
With these weights, the estimator for the regression continuum for any $\alpha > 0$ is 
\begin{equation}
    \hat{m}_0^{(\ell)}(x,p;\alpha) = \sum_{t \in \mathcal{I}_\ell^c} \hat{\omega}_t^{(\ell)}(x, p) \exp(-p_t y_t / \alpha) . \label{eq: continuum hat m}
\end{equation}
In our experiments, we constructed the weights using generalized random forests \citep{athey2019generalized}. Specifically, we first train a regression forest targeting the reward $r$.
For a given query point $(x, p)$, the forest-induced weights $\hat{\omega}_t(x, p)$  are computed by averaging the normalized leaf memberships across all $B$ trees, 
\begin{equation}
     \hat{\omega}_t^{(\ell)}(x, p) = \frac{1}{B} \sum_{b=1}^B \frac{\mathbf{1}{ (x_t, p_t) \in L_b(x, p) }}{|L_b(x, p)|}, \label{eq: tree}
\end{equation}
where $L_b(x, p)$ denotes the set of training examples falling into the same leaf as the query point in the $b$-th tree.
This method has been successfully applied in statistical estimation and decision making \citep{bertsimas2020predictive,kallus2022doubly,kallus2023stochastic}.

Since joint optimization over $(\pi, \alpha)$ is generally non-convex and computationally intractable, we adopt an alternating fashion \cite{kallus2022doubly,sinian2023}. While this method guarantees convergence only to a local maximum, it is computationally efficient and empirically effective. 
The complete procedure, termed Continuum Doubly Robust DRO OPL for Continuous Pricing (CDR$^2$O$^2$PL-CP), is detailed in \cref{alg:ks_cdr_dropl}.

\subsection{Statistical Performance Guarantee}
We establish the finite-sample statistical performance guarantee for the distributionally robust optimal policy $\hat{\pi}  $.
Under Assumption \ref{assumption: kernel function}, the bias introduced by kernel density estimation is $\mathcal{O}(h^s)$ when Assumption \ref{assumption: Lipschitz Continuity of FZ} holds for $s$-order bounded derivative (see Lemma \ref{lemma: RDRO - tilde RDRO h}), and depends on convergence rate of $\hat{m}^{(\ell)}_0(x,p;\alpha)$ (see Lemma \ref{lemma: tilde RDRO - hat RDRO}).
\begin{assumption}\label{ass: hat m convergence}
    Suppose $\left\{ \hat{m}_0^{(\ell)}(\cdot;\alpha): \alpha \in (0,\bar{\alpha}] \right\}_{\ell=1}^L$ and $\{\hat{\pi}_0^{(\ell)}(\cdot)\}_{\ell=1}^L$ are learned from a dataset of $\frac{T(L-1)}{L}$ points, satisfying the following conditions: \\
    (i). There exists $\gamma_\pi, \gamma_m > 0$ such that
    \begin{align*}
        & \E \left[ \left\| \hat{\pi}^{(\ell)}_0 - \pi_0 \right\|_\infty^2 \right] = o(T^{-2\gamma_\pi}), \text{ and } \\
       &  \E \! \left[ \! \left(\sup_{\alpha\in(0,\bar{\alpha}]}\! \left\| \hat{m}_0^{(\ell)}(\cdot,\cdot;\alpha) \! - \!  m_0(\cdot,\cdot;\alpha) \right\|_\infty \! \right)^2 \! \right] \!\! = \! o( T^{-2\gamma_m} ) .\\
       &\text{(ii). $\hat{\pi}^{(\ell)}_0 \in [\eta,1]$, $\hat{m}_0^{(\ell)}(\cdot;\alpha)\in [0,1]$, $\forall \ell\in [L]$, $\alpha\in (0,\bar{\alpha}]$.}\\
       &\text{(iii). $\{\hat{m}_0^{(\ell)}\}_{\ell=1}^L$ are $L_{\hat{m}}$-Lipschitz with respect to $(x,p)\in$}\\
       &\text{$\mathcal{X}\times \mathcal{P}$ and monotonically non-decreasing in $\alpha$.}
\end{align*}
\begin{algorithm}[htbp]
    \caption{ Continuum Doubly Robust DRO OPL for Continuous Pricing (CDR$^2$O$^2$PL-CP)}
    \label{alg:ks_cdr_dropl}
    \begin{algorithmic}[1]
      \STATE {\bfseries Input:} Dataset $\mathcal{D}$, logging policy $\pi_0$, uncertainty radius $\delta$, kernel function $K$ with bandwidth $h$, policy class $\Pi$.
      \STATE {\bfseries Output:} Distributionally robust optimal policy $\hat{\pi}$.
      \item[] \textit{Phase 1: Continuum Regression}
      \STATE Randomly evenly partition $\mathcal{D}$ into $L$ disjoint folds $\{\mathcal{I}_1, \dots, \mathcal{I}_L\}$.
      \FOR{$\ell = 1$ to $L$}
      \STATE Let $\mathcal{I}_\ell^c= \mathcal{D} \setminus \mathcal{I}_\ell$ be the training data.
      \STATE  Train $\hat{\pi}_0^{(\ell)}(p | x)$ to fit $\pi_0(p | x)$ using $\mathcal{I}_\ell^c$.
      \STATE  Train $\hat{m}_0^{(\ell)}(x, p; \alpha)$ to fit $ m_0(x,p;\alpha)$ for all $\alpha \in (0,\bar{\alpha}]$ using \eqref{eq: continuum hat m}--\eqref{eq: tree} with $\mathcal{I}_\ell^c$.
      \ENDFOR
      \item[] \textit{Phase 2: Alternating Optimization}
      \STATE Initialize $\hat{\pi}$.
      \REPEAT
      \STATE Compute $\hat{W}_{T}(\hat{\pi}, \alpha)$ using \eqref{eq: hat W T cross fit}.
     \STATE Update $ \hat{\alpha} \leftarrow \arg \max _{\alpha\ge 0} -\alpha \log \hat{W}_T(\hat{\pi},\alpha)-\alpha\delta$.
     \STATE Update $\hat{\pi} \leftarrow \arg \min _{\pi \in \Pi}\hat{W}_T(\pi,\hat{\alpha})$.
      \UNTIL{$\hat{\pi}$ converges.}
      \STATE {\bfseries Return:} $ \hat{\pi}$.
    \end{algorithmic}
\end{algorithm}

\end{assumption}
  Assumption \ref{ass: hat m convergence}(i)  imposes a uniform convergence rate ($L_\infty$ norm) for the nuisance estimators. \citet{kallus2022doubly} and \citet{zhou2023offline} assume $L_2$ convergence rate for discrete treatment.
  In our continuous treatment setting, however, the $L_\infty$ requirement is a necessary theoretical cost to ensure that estimation remains robust across the entire continuous treatment manifold, which is consistent with \citet{ai2026datadrivenpolicylearningcontinuous}.
  We can construct a continuum estimator of $m_0$ by applying the GRF framework to \eqref{eq: continuum hat m}--\eqref{eq: tree}.
  According to Theorem 5 in \citet{athey2019generalized}, this yields the desired convergence rate of $o_p(T^{-2\gamma_m})$ under their regularity conditions, by setting the subsample size $s$ used to grow each tree to be $s = T^\beta$ with $\beta < 1 - 2\gamma_m$.
  To estimate the conditional probability $\pi_0$, one can use the methods in \citet{colangelo2026double} and \citet{cattaneo2024boundaryadaptivelocalpolynomial}.
  Assumption \ref{ass: hat m convergence} (ii) and (iii) share the same properties as true functions $\pi_0$ and $m_0$.
  Specifically, we can employ a smoothed version of the forest (e.g., using a smooth kernel for leaf membership) to ensure it.
  We also discuss a corresponding second-order bounded derivative assumption when $s=2$ in Assumption \ref{assumption: Lipschitz Continuity of FZ}.

Before giving the theorem, we measure the complexity of the policy class $\Pi$ using pseudo-dimension. Pseudo dimension generalizes the classical VC-dimension from binary-valued functions to real-valued functions.
Assumption \ref{assumption: pdim_dimension} is mild and is satisfied by a vast range of parametric policy classes used in practice, such as linear policies, deep neural networks \cite{bartlett2019nearly}.
This assumption is used to derive the Rademacher complexity bound, see Lemma \ref{lemma: rademacher_bound_pdim}.
\begin{assumption}[Policy Class Complexity]
    \label{assumption: pdim_dimension}
    The policy class $\PiC$ has a finite pseudo-dimension, denoted as $\Pdim(\PiC)$.
\end{assumption}
\begin{theorem} \label{thm:main result}
    Suppose Assumption \ref{assumption: Unconfoundedness}-\ref{assumption: Lipschitz Continuity of FZ}, \ref{assumption: kernel function}, \ref{ass: hat m convergence}, \ref{assumption: pdim_dimension} hold,
Given $\epsilon >0$, $h>0$, and a finite pseudo-dimension policy class $d_{\PiC} := \Pdim(\Pi)$, we have with probability at least $1-7\epsilon$,
\begin{equation*}
\begin{aligned}
    &{\rm Reg}(\hat{\pi}) 
    \lesssim
    \overline{\alpha} \Big( \log \Big( \frac{1}{h}\Big)  \cdot \Big(
    \frac{d_\Pi}{ T h / L} + \sqrt{\frac{d_\Pi}{ T h / L}} \\
    & + \sqrt{\frac{\log(1/\epsilon)}{ T h / L}} + \frac{\log (1/\epsilon)}{ T h / L } 
    \Big) + T^{-\gamma_\pi-\gamma_m} + h^s \Big)  .
\end{aligned}
\end{equation*}
\end{theorem}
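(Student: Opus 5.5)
We use the standard regret decomposition. Since $\hat{\pi}$ maximizes $\hat{R}_\delta$, we have
\[
{\rm Reg}(\hat{\pi}) \le 2\sup_{\pi\in\Pi} |\hat{R}_\delta(\pi) - R_\delta(\pi)|.
\]
We reduce this to a uniform bound on $\hat{W}_T - W$. By Lemma \ref{lemma: Strong Duality and concavity}, both suprema over $\alpha$ may be restricted to $(0,\bar{\alpha}]$. Then, for each $\alpha$,
\[
|\hat\varphi_T(\pi,\alpha) - \varphi(\pi,\alpha)| = \alpha\,|\log \hat W_T(\pi,\alpha) - \log W(\pi,\alpha)|.
\]
Assumption \ref{ass: Positive probability} gives $W(\pi,\alpha) \ge \underline{\rho}$, so $\log$ is Lipschitz near $W$ (on the event where $\hat W_T \ge \underline\rho/2$). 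This yields
\[
\sup_\pi|\hat R_\delta - R_\delta| \lesssim \bar\alpha \sup_{\pi,\alpha}|\hat W_T(\pi,\alpha) - W(\pi,\alpha)|,
\]
which explains the overall factor $\bar\alpha$.

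We split $\hat W_T - W$ through the intermediate quantity $\tilde W$, the same self-normalized DR estimator with the true nuisances $\pi_0, m_0$, and its population counterpart $W_h(\pi,\alpha)$, the kernel-smoothed expectation. The resulting three pieces are as follows.

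\textbf{(a) Smoothing bias.} $|W_h - W| \lesssim h^s$. This is Lemma \ref{lemma: RDRO - tilde RDRO h}: one conditions on $X$, writes $\E[e^{-PY/\alpha}\mid X,P=p] = 1 - (1 - e^{-p/\alpha})F_Z\text{-term}$, and Taylor expands in $p$ around $\pi(X)$ using Assumption \ref{assumption: Lipschitz Continuity of FZ} and the moment conditions of Assumption \ref{assumption: kernel function}. The order-$s$ smoothness of $F_Z$ replaces smoothness of the outcome itself.

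\textbf{(b) Nuisance error.} $|\hat W_T - \tilde W| \lesssim T^{-\gamma_\pi-\gamma_m}$ plus higher-order terms. This is Lemma \ref{lemma: tilde RDRO - hat RDRO}. With cross-fitting, conditional on $\mathcal I_\ell^c$ the DR error has mean equal to the product term
\[
\E\Big[\tfrac{K_h}{h}\Big(\tfrac{1}{\hat\pi_0}-\tfrac{1}{\pi_0}\Big)(m_0-\hat m_0)\Big],
\]
since the kernel integrates to one against $\pi_0$. This term is bounded by $\eta^{-2}\|\hat\pi_0-\pi_0\|_\infty \sup_\alpha\|\hat m_0-m_0\|_\infty$. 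The centered fluctuation is of lower order, and it is uniform over $\alpha$ because of the sup-norm rate in Assumption \ref{ass: hat m convergence}(i), with Markov's inequality converting expectations into probability statements. The self-normalization by $\hat S^\pi_\ell$ is handled by showing $\hat S^\pi_\ell = 1 + O_p(\cdot)$ uniformly, using the same concentration as in (c).

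\textbf{(c) Uniform concentration of $\tilde W - W_h$.} This is the main obstacle. The summands are bounded by $\|K\|_\infty/(h\eta)$ and have variance of order $1/h$. We apply Talagrand/Bousquet's inequality to the class
\[
\{(x,p,y)\mapsto K((p-\pi(x))/h)\,g_\alpha(\cdot)/h : \pi\in\Pi,\ \alpha\in(0,\bar\alpha]\}
\]
on each fold of size $T/L$. This gives the deviation terms $\sqrt{\log(1/\epsilon)/(Th/L)} + \log(1/\epsilon)/(Th/L)$ after rescaling by $h$. The expected supremum is controlled through Rademacher complexity via Lemma \ref{lemma: rademacher_bound_pdim}. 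Lipschitzness of $K$ (constant $L_K/h$) and of $\alpha\mapsto e^{-r/\alpha}$ (monotone in $\alpha$, by Assumption \ref{ass: hat m convergence}(iii)) let us bound covering numbers by $(1/h)^{O(d_\Pi)}$ through the pseudo-dimension. A local Dudley/Bernstein chaining argument with variance $1/h$ and envelope $1/h$ then yields $\log(1/h)\big(\sqrt{d_\Pi/(Th/L)} + d_\Pi/(Th/L)\big)$. Here the $\log(1/h)$ factor comes from the covering scale. The delicate point is keeping the variance proxy at $1/h$ rather than $1/h^2$, which requires the localized (Bousquet-type) version rather than plain bounded differences.

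Finally, we union bound over the $L$ folds and the several events: the concentration events, the Markov events for the nuisances, and the events $\hat S^\pi_\ell$ bounded away from $0$ and $\hat W_T \ge \underline\rho/2$. This produces the probability $1-7\epsilon$ and the stated bound.
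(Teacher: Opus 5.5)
Your architecture matches the paper's. You bound ${\rm Reg}(\hat\pi)\le 2\sup_\pi|R_\delta-\hat R_\delta|$, reduce via $|\log(1+u)|\le 2|u|$ using $W\ge\underline{\rho}$, and split through the oracle estimator $\tilde W$ and its population version, as in Lemmas \ref{lemma: RDRO - tilde RDRO h}, \ref{lemma: sup pi tilde RDRO h - tildeR DRO} and \ref{lemma: tilde RDRO - hat RDRO}. Replacing the paper's Kolmogorov--Smirnov reformulation in (c) by direct chaining over $(\pi,\alpha)$ is a legitimate variant.

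The gap is in (b). Let $\Delta_m=\hat m_0^{(\ell)}-m_0$, condition on $\mathcal{I}_\ell^c$, and ignore self-normalization. The mean of the DR error is then
\[
\E\big[\Delta_m(X,\pi(X);\alpha)\big]-\E\Big[\frac{K\big(\frac{P-\pi(X)}{h}\big)}{h\,\pi_0(P\mid X)}\,\Delta_m(X,P;\alpha)\Big]-\E\Big[\frac{1}{h}K\Big(\frac{P-\pi(X)}{h}\Big)\Big(\frac{1}{\hat\pi_0^{(\ell)}(P\mid X)}-\frac{1}{\pi_0(P\mid X)}\Big)\Delta_m(X,P;\alpha)\Big].
\]
The kernel integrating to one against $\pi_0$ only lets you rewrite the first two terms as $\E_X\big[\int K(u)\{\Delta_m(X,\pi(X);\alpha)-\Delta_m(X,\pi(X)+uh;\alpha)\}\,du\big]$. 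They do not cancel, because $\Delta_m$ is evaluated at $P$, not at $\pi(X)$. This residual is the paper's $\mathcal{E}^{(\ell)}_{1,2}$, and it is exactly why orthogonality holds only up to $O(h^s)$ with continuous treatments. Your claim that the mean equals the product term is the discrete-action fact and fails here.

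To get $|\mathcal{E}^{(\ell)}_{1,2}|\lesssim h^s$ uniformly over $(\pi,\alpha)$, you must Taylor-expand $\Delta_m$ in $p$. That requires smoothness in $p$ of the \emph{fitted} regression $\hat m_0^{(\ell)}$, together with that of $m_0$ from Assumption \ref{assumption: Lipschitz Continuity of FZ}. For $s=1$ this is the Lipschitz part of Assumption \ref{ass: hat m convergence}(iii); for $s=2$ it needs a second-order analogue. You use (iii) only for monotonicity in $\alpha$. Without this, the best available bound is $|\mathcal{E}^{(\ell)}_{1,2}|\le 2\sup_{\alpha}\|\hat m_0^{(\ell)}-m_0\|_\infty=o(T^{-\gamma_m})$. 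That term is not in the stated bound and would defeat the product-rate conclusion.

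Relatedly, (a) is not where the bias lives. With true nuisances, the correction term $\frac{K((P-\pi(X))/h)}{h\pi_0(P\mid X)}\big(e^{-PY/\alpha}-m_0(X,P;\alpha)\big)$ has conditional mean zero given $(X,P)$. So the population counterpart of $\tilde W$ equals $\E[m_0(X,\pi(X);\alpha)]=W(\pi,\alpha)$ exactly, and the $h^s$ that matters is generated inside (b). Once you add this term the final bound is unchanged, since $h^s$ already appears in it. As written, though, step (b) is false and omits the one place where smoothness of the nuisance fit is indispensable.
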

As shown in Theorem \ref{thm:main result}, with high probability, the distributionally robust regret of the learned policy is upper bounded by $\tilde{\mathcal{O}}_p( T^{-s/(2s+1)})$ with the optimal bandwidth $h=\Theta(T^{-1/(2s+1)})$, where $\tilde{\mathcal{O}}$ hides logarithmic factors in $T$.
This rate only requires a not strong condition $\gamma_\pi+\gamma_m \geq s/(2s+1)$ in Assumption \ref{ass: hat m convergence}.
A very recent study by \citet{leung2025distributionally} also investigates the distributionally robust policy learning problem in continuous action space. They employ the IPW estimator \eqref{eq: W IPW} under $s=2$ assumption, and the suboptimal regret bound they derived is $\mathcal{O}_p(T^{-1/4})$ with their optimal bandwidth $h =\Theta( T^{-1/8})$.
While our DR estimator can naturally degenerate to their IPW estimator by setting the outcome regression $\hat{m}_0$ to zero.
Hence, we establish a sharper regret bound not only for our DR but also for their IPW.
This also aligns with rates achieved in non-robust continuous treatment settings \citep{ai2026datadrivenpolicylearningcontinuous}.

\section{Experiments}
We evaluate our proposed OPE/L methods in both simulated and real-world contextual pricing environments.
\label{section: experiments}
\subsection{Simulation Experiment}
The simulated setup is as follows. Context vectors $X \in \mathbb{R}^3$ are sampled from $\mathcal{N}(\mathbf{0}, \mathbf{I}_3)$.
Historical prices are generated according to a linear logging policy $\pi_0$, where $P \mid X \sim \mathcal{N}(\mu_0(X), \sigma_0^2)$ with $\sigma_0 = 1.0$, $\mu_0(X) = \beta_0 + \beta_1^\top X$, $\beta_0=1.0$, $\beta_1=[ 0.8 ,1.0,0.0]^\top $, clipped to $\mathcal{P} = [0.5, 3.0]$.
The true customer valuation is $f(X) = \theta_0 + \theta_1^\top X + \theta_2 X_1  X_2, \ \theta_0 =1.5,\ \theta_1=[ 1.2, 1.8]^\top, \ \theta_2 = 0.3  $, where $(X_1,X_2)$ are informative features and $X_3$ acts as a nuisance feature.
The binary purchase outcomes are $Y = \mathbf{1}\{f(X) + Z \geq P\}$, where $Z \sim \text{Logistic}(0, 0.2)$ follows an infinitely differentiable logistic distribution with bounded derivatives. The reward is $R = P \cdot Y$.

First, we compare our proposed LDR$^2$O$^2$PE-CP (Algorithm \ref{alg:l_ks_dr_drope}) against the DRO-IPW baseline \eqref{eq: W IPW} from \citet{leung2025distributionally}. 
The target policy we seek to evaluate is a linear policy $\pi_{\text{target}}(X) = w_{1}^\top X + w_0$ with $w_{\rm target}=(w_0,w_1)$, where $w_0$ captures the intercept. 
Policies are trained with uncertainty radius $\delta_{\text{train}} = 0.2$ and evaluated under test radii $\delta \in \{0.1, 0.2, 0.3\}$, and $T$ is the training sample size..
We use a Gaussian kernel $K(u) = \frac{1}{\sqrt{2\pi}} \exp(-u^2/2)$ with different bandwidth. 
We compare different bandwidths, (i) the OPE-optimal bandwidth $h = \Theta(T^{-1/5.0})$ under $s=2$ in \citet{kallus2018policy,leung2025distributionally}; (ii) our undersmoothing bandwidth $h = \Theta(T^{-1/(2s+1-0.1)})$ under $s=1,2$.
Furthermore, we apply these choices with a Silverman-type scaling constant, i.e., $h=1.06\cdot \rm std_p \cdot T^{-1/5.0}$ and $h=1.06\cdot {\rm std}_p \cdot T^{-1/(2s+1-0.1)}$, where ${\rm std}_p$ is the sample standard deviation of observed prices.
Both methods use Ridge regression to estimate the logging policy by fitting $\mu_0(X)$ and using the residual standard deviation as $\sigma_0$.
Outcome regressions are estimated by forest-induced weights \eqref{eq: tree} (via the \texttt{econml} package, 20 trees, max depth 10) to ensure consistency with the OPL setup. All models are fitted with $L=5$ fold cross-fitting, and averaged over 20 random seeds. Shaded regions show 90\% confidence intervals.

Figure \ref{fig:ope_comparison} shows the mean squared error (MSE) of the estimated distributionally robust policy value $R_\delta$.
We observe that LDR$^2$O$^2$PE-CP outperforms the DRO-IPW baseline, as long as the sample size $T$ is sufficiently large.
However, in the non-asymptotic regime ($T<1000$), LDR$^2$O$^2$PE-CP may exhibit less stability compared to DRO-IPW.
Nevertheless, LDR$^2$O$^2$PE-CP exhibits a faster convergence rate of MSE decrease and offers a significant improvement over the baseline. Overall, LDR$^2$O$^2$PE-CP provides the efficacy of the localized technique even under varying adversarial environments and different bandwidth.

Next, we compare our proposed CDR$^2$O$^2$PL-CP (Algorithm \ref{alg:ks_cdr_dropl}) against the non-robust DR method from \citet{ai2026datadrivenpolicylearningcontinuous} and the DRO-IPW learning method from \citet{leung2025distributionally}. The target policy class is $\{\pi(X) = w_1^\top X + w_0: w=(w_0,w_1), \|w\|_\infty \leq 2\}$.
The continuum of outcome regressions $\{\hat{m}_0(x,p;\alpha);\alpha\}$ is estimated according to \eqref{eq: continuum hat m}--\eqref{eq: tree} using the same \texttt{econml} setup as in OPE.
Optimization uses Adam (learning rate = $3 \times 10^{-3}$) in Line 13 and bounded optimization for $\alpha$ in Line 12.

Figure~\ref{fig:opl_comparison} reports the distributionally robust policy value $R_\delta$ of policies learned in the distributionally robust OPL task. For fair comparison, all methods, including the non-robust DR baseline from \citet{ai2026datadrivenpolicylearningcontinuous}, are evaluated using the worst-case robust metric $R_\delta(\pi)$, regardless of whether robustness was optimized during training.
The non-robust DR baseline exhibits substantially wider confidence intervals, indicating high sensitivity to random seeds and brittleness under distributional shifts. In contrast, both robust methods produce much more stable estimates. Moreover, our CDR$^2$O$^2$PL-CP not only achieves the highest robust values but also yields slightly tighter confidence intervals than DRO-IPW for large sample sizes. This reflects the statistical efficiency gain of doubly robust estimation over IPW-based methods, even when propensity scores are estimated reliably (e.g., via Ridge regression).
Overall,  CDR$^2$O$^2$PL-CP delivers more stable learning and higher worst-case rewards across different levels of distributional robustness.
 \begin{figure}[htbp]
    \centering
    \begin{subfigure}[htbp]{0.4\textwidth}
        \centering
        \includegraphics[width=\textwidth]{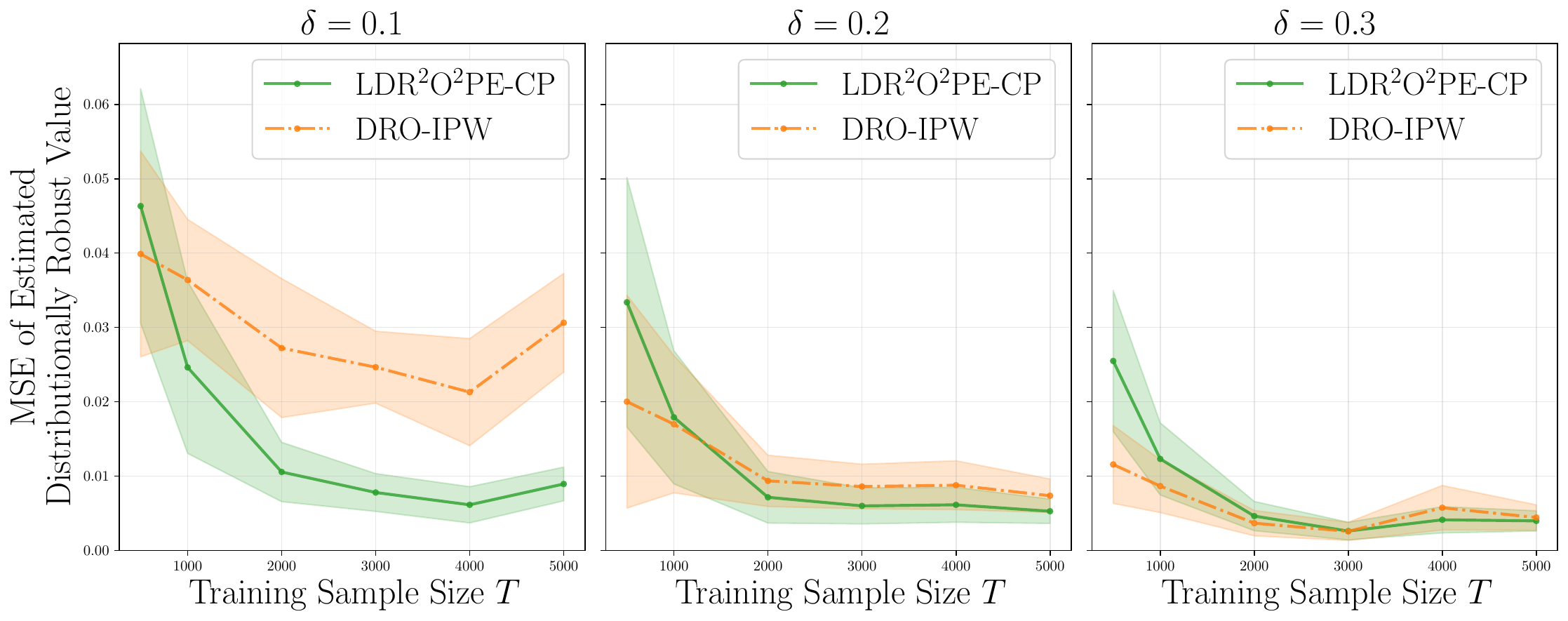}
        \caption{$h = \Theta(T^{-1/2.9})$}
        \label{fig:ope_2.9}
    \end{subfigure}
    \hfill
    \begin{subfigure}[htbp]{0.4\textwidth}
        \centering
        \includegraphics[width=\textwidth]{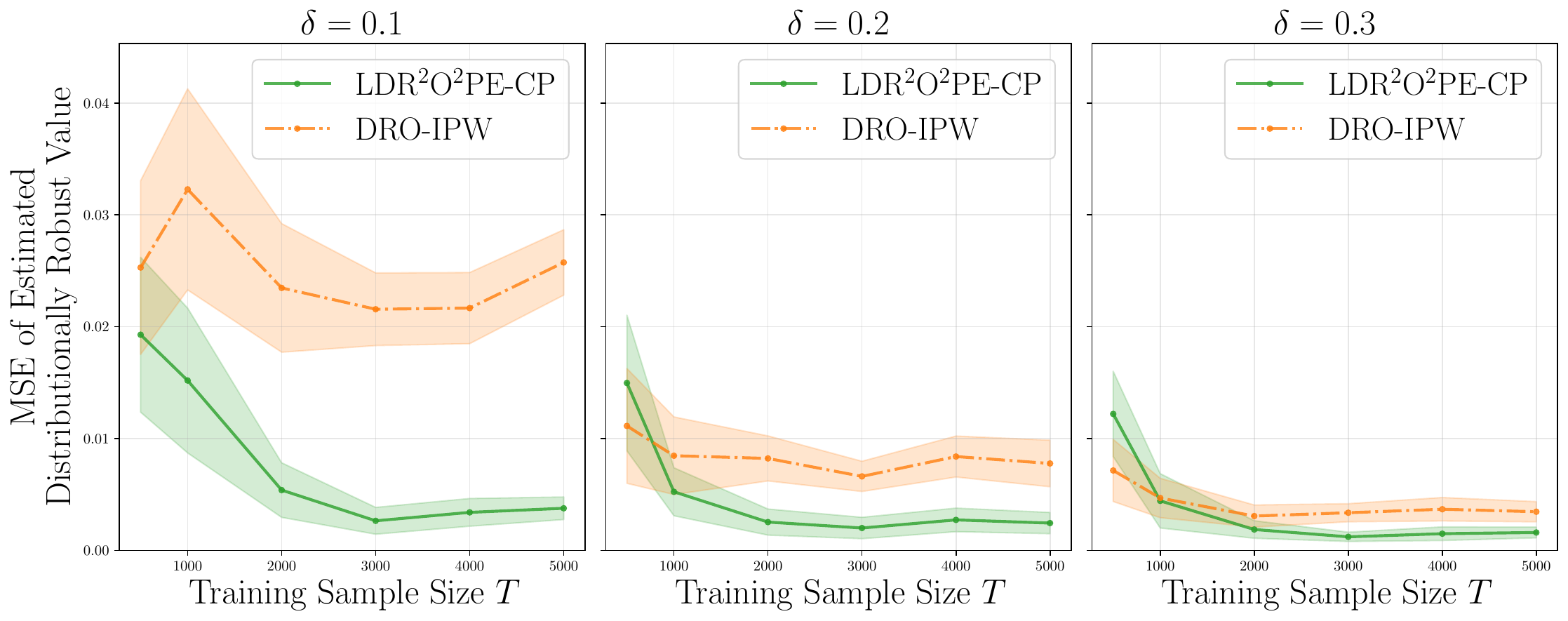}
        \caption{$h = \Theta(T^{-1/4.9})$}
        \label{fig:ope_4.9}
    \end{subfigure}
    \hfill
    \begin{subfigure}[htbp]{0.4\textwidth}
        \centering
        \includegraphics[width=\textwidth]{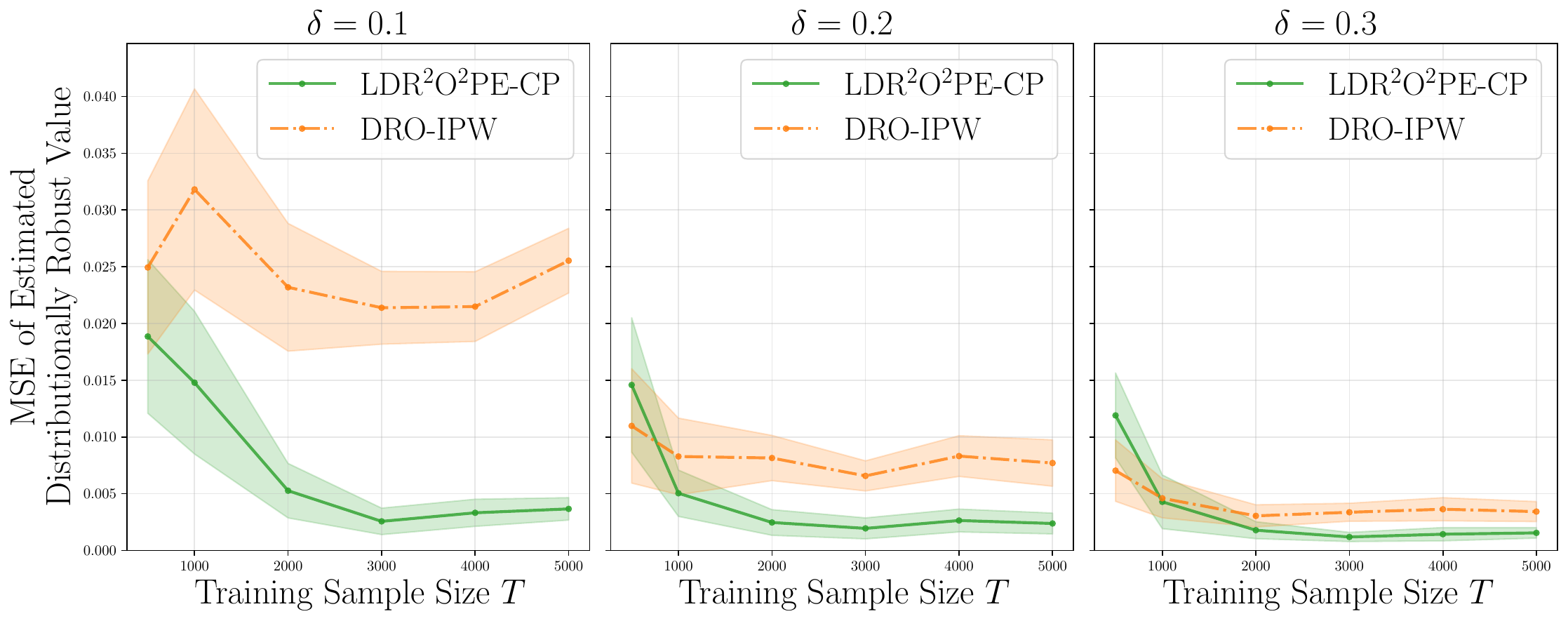}
        \caption{$h = \Theta(T^{-1/5.0})$}
        \label{fig:ope_5.0}
    \end{subfigure}
    \caption{Comparison of LDR$^2$O$^2$PE-CP against DRO-IPW in the distributionally robust OPE task. Results are shown for different bandwidth choices and test environments $\delta \in \{0.1, 0.2, 0.3\}$ with $\delta_{\rm train}=0.2$ and 2500 test samples. The x-axis is the training sample size $T$, and the y-axis is the mean squared error (MSE) of the estimated distributionally robust value $R_\delta$.}
    \label{fig:ope_comparison}
\end{figure}

\subsection{Empirical Experiment - The Expedia Study}
\label{subsec:expedia}
We study offline contextual pricing on the dataset of the  Expedia Personalized Sort Competition \citep{expedia-personalized-sort} hosted by Kaggle in conjunction with Expedia..
Each record is a search impression for a hotel listing, containing visitor and search context, hotel attributes, competitor prices, the logged list price, and whether the listing was booked.
Following the offline pricing formulation in Section \ref{section:Problem Formulation}, we map each impression to a triplet $(x_t,p_t,y_t)$: $x_t\in\mathbb{R}^{45}$ collects numeric covariates (visitor history, search features, hotel characteristics, and competitor rates); $p_t$ is the logged price in USD; and $y_t$ is the booking indicator.
To create distribution shifts, we split the training and test data based on hotel star rating $\texttt{prop\_starrating}\in\{1,2,3,4,5\}$.
We train using only $\texttt{prop\_starrating}\in\{1,2,3\}$ and test only $\texttt{prop\_starrating}\in\{4,5\}$.

We learn the non-robust linear policy $\hat{\pi}_{\rm Lin}$ \citep{ai2026datadrivenpolicylearningcontinuous}, distributionally robust linear policy $\hat{\pi}_{\rm DRO-IPW}$ \cite{leung2025distributionally} and our proposed $\hat{\pi}_{\rm CDR^2O^2PL-CP}$ in the linear policy class $\Pi_{\rm lin}$. 
Each policy $\pi_{w}(x)\in \Pi_{\rm lin}$ is parameterized by a vector $w=(w_0,w_1)\in\mathbb{R}^{46}$, and the mapping $\pi :\mathcal{X} \rightarrow [p_{\min},p_{\max}]$ is defined as
\begin{align*}
    \Pi_{\mathrm{lin}}
    \triangleq
    \big\{
        \pi_{w}(x)
        = p_{\min} + \bigl(p_{\max}-p_{\min}\bigr)\,
        \sigma\!\bigl(w_1^\top x + w_0\bigr)
    \big\},
    \label{eq:expedia-linear-policy}
\end{align*}
where $\sigma(\cdot)$ is the logistic link.
We fix the uncertainty radius $\delta_{\mathrm{train}}=0.1$ used in the training procedure and the training size $T=500$.
To characterize adversarial performance on Expedia, we adopt an empirical KL-adversarial stress test following \citet{sinian2023}.
We fix a deployment test pool of $T'=2{,}500$ triplets from the training dataset and generate $M=10$ replicates.
For each $j\in[M]$, we draw a bootstrap batch $\{(x_t^{(j)},p_t^{(j)},y_t^{(j)})\}_{t=1}^{T'}$ with replacement from this pool, compute $\alpha_j^*(\pi)$ at test radius $\delta=0.2$ by Lemma~\ref{lemma: Strong Duality and concavity}, and form weights
$w_t^{(j)} \propto \frac{K( (p_t^{(j)}-\pi(x_t^{(j)} )/h ))}{h \hat{\pi}_0(p_t^{(j)}| x_t^{(j)})}\exp(-p_t^{(j)}y_t^{(j)}/\alpha_j^*(\pi))$.
We then obtain a new dataset $\{\tilde{x}_t^{(j)},\tilde{p}_t^{(j)}, \tilde{y}_t^{(j)}\}_{t=1}^{T'}$ by resampling $T'$ triplets according to $\{w_t^{(j)}\}$ and evaluate
\[
\hat{R}_{\rm adv}(\pi)=\min_{1\leq j\le M}\frac{1}{T'}\sum_{t=1}^{T'}\hat{r}\bigl(\tilde{x}_t^{(j)},\pi(\tilde{x}_t^{(j)})\bigr),
\]
where $\hat{r}(x,p)$ is a linear outcome regression for $\mathbb{E}[r|X=x,P=p]$, fitted on the training set.
Table~\ref{tab:expedia-opl} reports the mean and standard error over 20 random repeats. We find that our learned linear policy outperforms $\hat{\pi}_{\rm Lin}$ and $\hat{\pi}_{\rm DRO-IPW}$.

\section{Conclusion}
In this paper, we present a doubly robust framework for distributionally robust OPE/L in continuous action settings. 
Our framework protects against distributional shifts across environments.
For evaluation, LDR$^2$O$^2$PE-CP uses localization to avoid the intractability of optimizing over a continuum of outcome functions. It fits only three regressions, as tractable as standard non-robust DR, while achieving semiparametric efficiency. 
For learning, CDR$^2$O$^2$PL-CP fits a continuum of regressions with data-driven weights, smoothly adapting to continuous pricing. By leveraging the latent smoothness of demand noise, we overcome the discontinuity in purchase outcomes and derive a finite-sample regret bound of $\tilde{\mathcal{O}}_p(T^{-s/(2s+1)})$ for $s=1,2$ under mild product rate assumptions. This bound represents an improvement over the $\mathcal{O}_p(T^{-1/4})$ rate in prior work on DRO methods and aligns with rates in non-robust approaches.

Comprehensive simulations demonstrate that our methods exhibit strong robustness to distributional shifts. 
Although we adopt the KL-divergence, future research could explore other measures, such as Wasserstein distance, to offer better geometric protections in specific scenarios.
Additionally, developing more scalable optimization routines to handle high-dimensional state spaces would further broaden the practical utility of our framework in large-scale applications.

\begin{figure}[htbp]
    \centering
    \footnotesize 
    \begin{subfigure}[htbp]{0.4\textwidth}
        \centering
        \includegraphics[width=\textwidth]{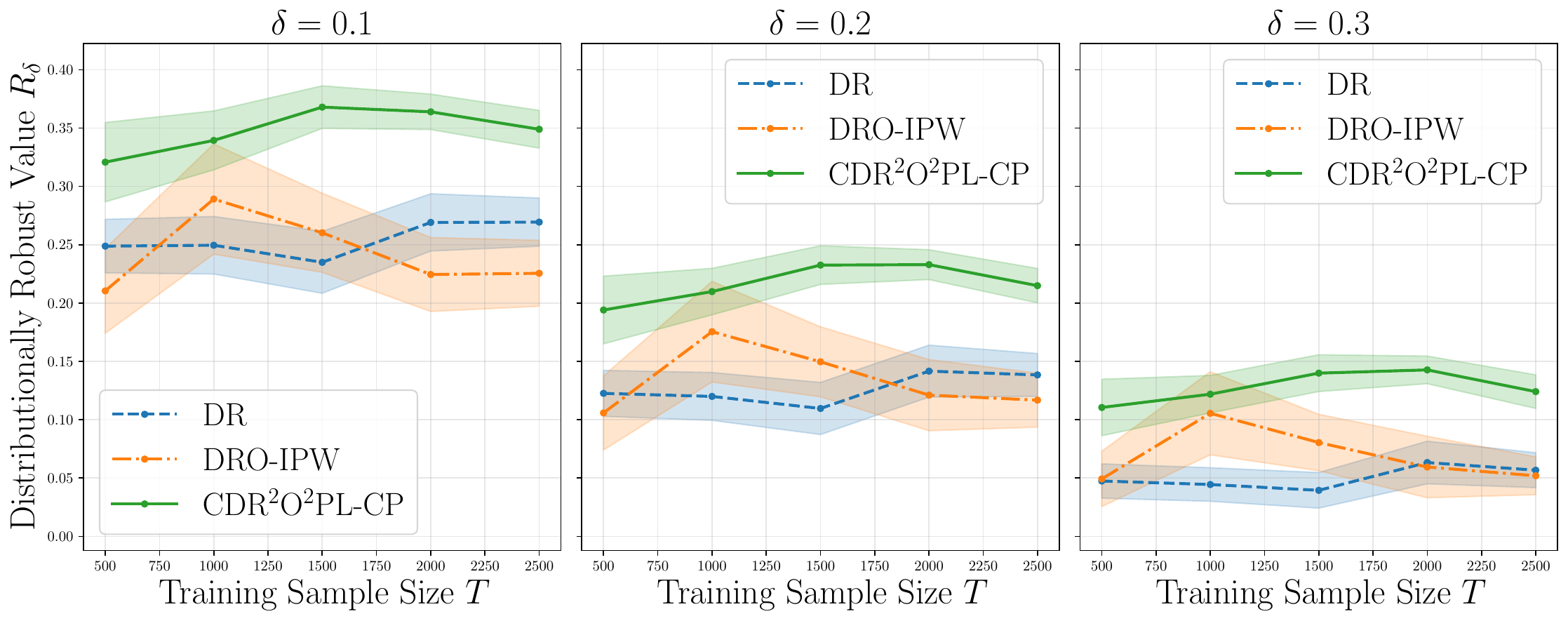}
        \caption{$h = \Theta(T^{-1/2.9})$}
        \label{fig:opl_2.9}
    \end{subfigure}
    \hfill
    \begin{subfigure}[htbp]{0.4\textwidth}
        \centering
        \includegraphics[width=\textwidth]{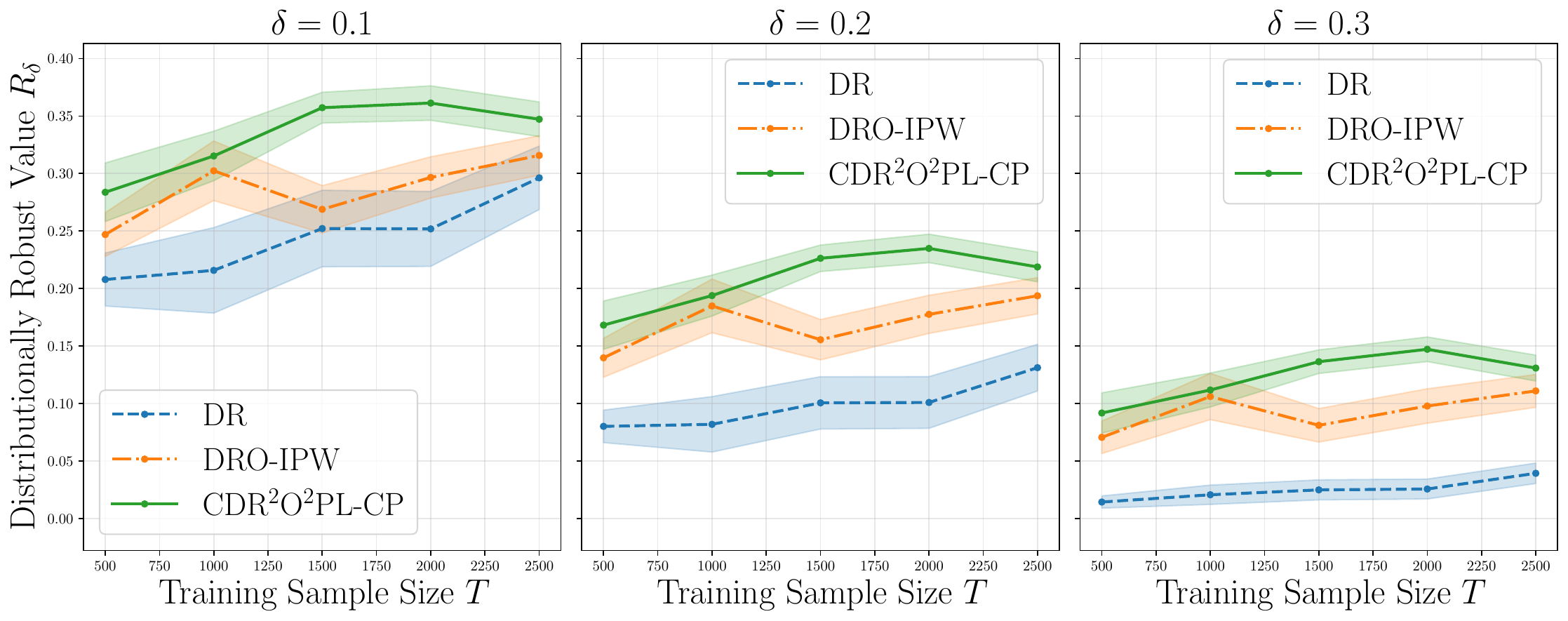}
        \caption{$h = \Theta(T^{-1/4.9})$}
        \label{fig:opl_4.9}
    \end{subfigure}
    \hfill
    \begin{subfigure}[htbp]{0.4\textwidth}
        \centering
        \includegraphics[width=\textwidth]{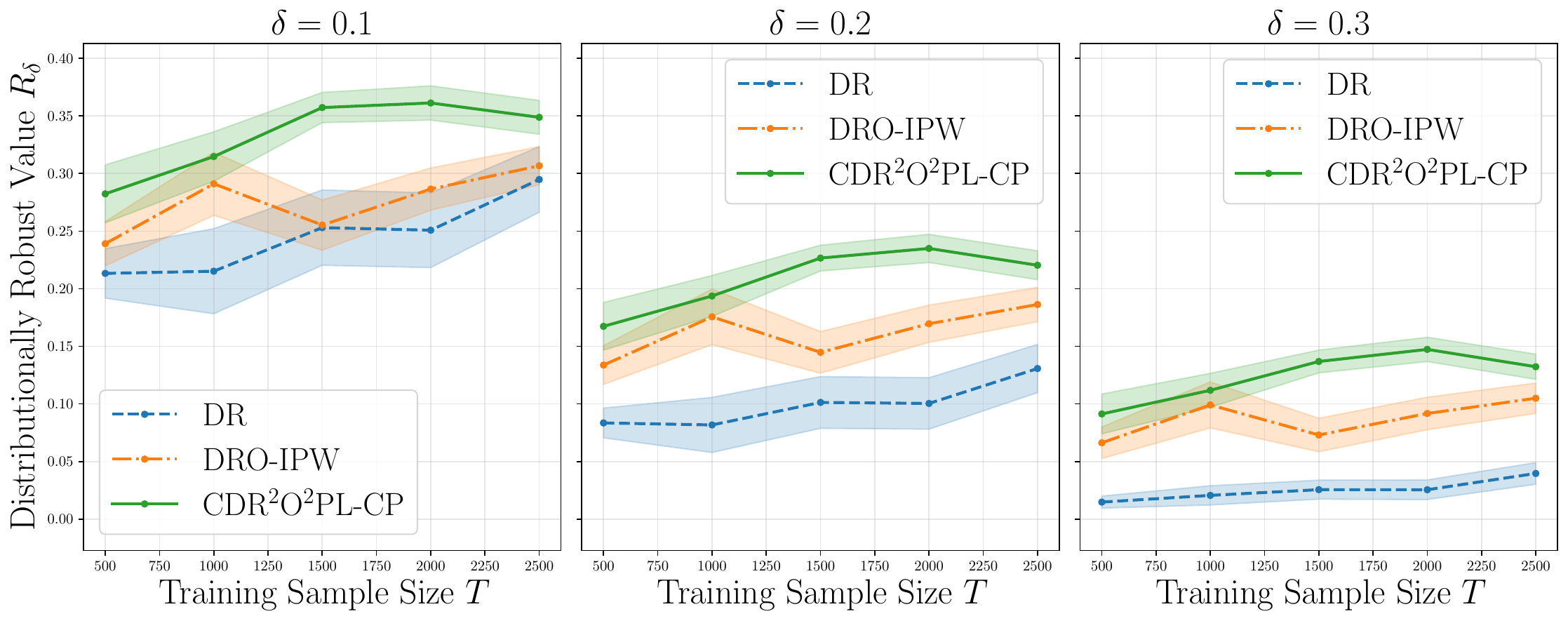}
        \caption{$h = \Theta(T^{-1/5.0})$}
        \label{fig:opl_5.0}
    \end{subfigure}
    \caption{Comparison of CDR$^2$O$^2$PL-CP against the non-robust DR and DRO-IPW in the distributionally robust OPL task. Results are shown for different bandwidth choices and test environments $\delta \in \{0.1, 0.2, 0.3\}$ with $\delta_{\rm train}=0.2$ and 2500 test samples. The x-axis is the training sample size $T$, and the y-axis is the distributionally robust policy value $R_\delta$.}
    \label{fig:opl_comparison}
\end{figure}
\begin{table}[H]
    \centering
    \caption{Comparison of robust performance in Expedia study.}
    \label{tab:expedia-opl}
    \small
    \begin{tabular}{lcc}
        \toprule
        Method & $\hat{R}_{\rm adv}$ \\
        \midrule
        $\hat{\pi}_{\rm Lin}$ 
            & $6.201$ $\pm$ $0.683$ \\
        $\hat{\pi}_{\rm DRO-IPW}$ 
            & $6.254$ $\pm$ $0.873$ \\
        $\hat{\pi}_{\rm CDR^2O^2PL-CP}$
            & $\mathbf{6.350}$ $\pm$ $\mathbf{0.598}$ \\
        \bottomrule
    \end{tabular}
\end{table}



\section*{Acknowledgements}

This work was supported in part by the National Natural Science Foundation
of China [Grants 72394363/72394360, and 72571132], and the Postgraduate Research \& Practice Innovation Program of Jiangsu Province [Grant KYCX25\_0112].

\section*{Impact Statement}
Our research introduces a doubly robust framework for distributionally robust policy evaluation/learning in continuous action spaces. This work primarily aims to contribute to the methodological progress in machine learning and causal inference.
It enhances the reliability of data-driven decision-making under environmental shifts, which is crucial for building safer and more dependable systems in domains like resource allocation and economic policy analysis.
As with any policy learning algorithm, when applied to real-world scenarios such as dynamic pricing, the objective function should ensure that it does not inadvertently lead to discriminatory outcomes or unfair treatment of specific sub-populations. We encourage practitioners to conduct fairness audits when deploying such algorithms in socially sensitive contexts.




\bibliography{ref}
\bibliographystyle{icml2026}
\newpage
\appendix
\onecolumn
\widowpenalty=0   
\clubpenalty=0    
\allowdisplaybreaks 
\section{Proof of Results in Section \ref{section:Problem Formulation}}\label{app: Sec2}
\begin{proof}[Proof of Lemma \ref{lemma: Strong Duality and concavity}]
By Theorem 1 of  \citet{hu2013kullback}, the strong duality holds:
\begin{equation*}
\inf_{\bP \in \mathcal{U}_{\bP_0}(\delta)} \mathbb{E}_{\bP}[r(X, \pi(X), Z)] = \sup_{\alpha \ge 0} \left\{ -\alpha \log \mathbb{E}_{\bP_0}\left[\exp\left(-\frac{r(X, \pi(X), Z)}{\alpha} \right)\right] - \alpha\delta \right\}.
\end{equation*}
For brevity, let $r \triangleq r(X, \pi(X), Z)$. 
To prove that $\varphi(\pi, \alpha)$ is strictly concave for $\alpha > 0$, we analyze its second derivative. 
\begin{align*}
\frac{\partial^2}{\partial \alpha^2} \varphi(\pi, \alpha) 
= \frac{1}{\alpha^3 \mathbb{E}_{\bP_0}[\exp(-r/\alpha)]} \left( \frac{(\mathbb{E}_{\bP_0}[r \exp(-r/\alpha)])^2}{\mathbb{E}_{\bP_0}[\exp(-r/\alpha)]} - \mathbb{E}_{\bP_0}\left[r^2 \exp\left(-r/\alpha \right)\right] \right).
\end{align*}
By the Cauchy-Schwarz inequality, \[
\mathbb{E}_{\bP_0}[r \exp(-r/\alpha)]^2 \leq \mathbb{E}_{\bP_0}[r^2 \exp(-r/\alpha)] \mathbb{E}_{\bP_0}[\exp(-r/\alpha)].
\]
Since $\mathbb{E}_{\bP_0}[\exp(-r/\alpha)] > 0$, we have $\frac{\partial^2}{\partial \alpha^2} \varphi(\pi, \alpha) \leq 0$. The Cauchy-Schwarz inequality holds strictly as an equality if and only if $r$ is almost surely constant. Under Assumption \ref{ass: Positive probability}, $\bP_0(Y=0) \in [\underline{\rho}, \overline{\rho}]$, ensuring that $r$ is non-constant almost surely. Thus, $\frac{\partial^2}{\partial \alpha^2} \varphi(\pi, \alpha) < 0$, which guarantees strict concavity. 

Because $r \in [0, 1]$, we obtain $R_\delta(\pi) \geq 0$. Furthermore, $r \leq 1$ implies $\mathbb{E}_{\bP_0}[\exp(-r/\alpha)] \geq \exp(-1/\alpha)$, yielding the upper bound $\varphi(\pi, \alpha) \leq 1 - \alpha\delta$. By strong duality, the optimal dual solution $\alpha^*(\pi)$ must satisfy:
\begin{equation*}
    0 \leq R_\delta(\pi) = \varphi(\pi, \alpha^*(\pi)) \leq 1 - \alpha^*(\pi)\delta \implies \alpha^*(\pi) \leq \frac{1}{\delta} \triangleq \bar{\alpha}.
\end{equation*}
Note that $\lim \limits_{\alpha \rightarrow 0} \varphi(\pi,\alpha) =0$. 
If the optimal solution were $\alpha^*(\pi) = 0$, the maximum robust value would be exactly $0$, which contradicts the assumption $R_\delta(\pi) > 0$.
Therefore, $\alpha^*(\pi) \in (0, \bar{\alpha}]$.

We now prove that the condition $\delta < -\log\overline{\rho}$ is sufficient to guarantee  $R_\delta(\pi) > 0$. Since $\lim_{\alpha \to 0^+} \varphi(\pi, \alpha) = 0$, it suffices to show that $ \lim \inf \limits_{\alpha \rightarrow 0^+}\frac{\partial}{\partial \alpha} \varphi(\pi, \alpha) >0 $. The first derivative of $\varphi(\pi, \alpha)$ for $\alpha > 0$ is given by:  $$\frac{\partial}{\partial \alpha} \varphi(\pi, \alpha) = -\log \mathbb{E}_{\bP_0}\left[\exp\left(-r /\alpha \right)\right] - \frac{\mathbb{E}_{\bP_0}\left[ r /\alpha  \exp\left(-r /\alpha \right)\right]}{\mathbb{E}_{\bP_0}\left[\exp\left(-r /\alpha \right)\right] }  - \delta.$$ 
We evaluate the limits of the expectation terms as $\alpha \to 0^+$. For the first term, since $r \ge 0$, we have $\exp(-r/\alpha) \le 1$. By the Dominated Convergence Theorem, we can interchange the limit and the expectation:
\begin{align*}
\lim_{\alpha \to 0^+} \mathbb{E}_{\bP_0}\left[\exp\left(-r/\alpha \right)\right] &= \mathbb{E}_{\bP_0}\left[ \lim_{\alpha \to 0^+} \left( \mathbf{1}\{Y=0\} + \mathbf{1}\{Y=1\}\exp\left(-r/\alpha  \right) \right) \right] = \bP_0(Y = 0).
\end{align*}
For the second term, since $\lim_{x \to \infty} x e^{-x} = 0$, applying the Dominated Convergence Theorem yields
$
\lim_{\alpha \to 0^+} \mathbb{E}_{\bP_0}\left[r/\alpha \exp\left(-r/\alpha\right)\right] = 0
$.
Substituting these limits back into the derivative gives
\begin{equation*}
\lim_{\alpha \to 0^+} \frac{\partial}{\partial \alpha} \varphi(\pi, \alpha) = -\log(\bP_0(Y = 0)) - \delta.
\end{equation*}
By Assumption \ref{ass: Positive probability}, we know $-\log(\bP_0(r = 0)) \geq -\log\overline{\rho}$. Therefore, if $\delta < -\log\overline{\rho}$, we obtain
\begin{equation*}
\lim_{\alpha \to 0^+} \frac{\partial}{\partial \alpha} \varphi(\pi, \alpha) \geq -\log\overline{\rho} - \delta > 0,
\end{equation*}
which completes the proof.
\end{proof}

\section{Proof of Results in Section \ref{section: distributionally robust policy evaluation}}\label{appendix: proof of results in section 3}
We first introduce the assumptions for the asymptotic analysis.
\begin{assumption}\label{assumption: bounded second-order derivative}
    The oracle logging policy $\pi_0(p | x)$ is three-times differentiable with respect to $p\in \mathcal{P}$ for any $x\in \mathcal{X}$.
    Furthermore, there exists a compact neighborhood $\mathcal{A} $ of the true dual parameter $\alpha^*$, such that for all $\alpha \in \mathcal{A}$, the nuisance function function $\hat{\lambda}_1^{(\ell)}(x, p;\alpha)$ has bounded first-order and second-order derivative with respect to $p\in \mathcal{P}$ for any $x\in \mathcal{X}$ and $\ell\in [L]$, in probability, i.e.,
    \begin{equation*}
        \sup_{x, p, \alpha\in \mathcal{A}} \left\| \frac{\partial^s}{\partial p^s} \hat{\lambda}_1^{(\ell)}(x, p; \alpha) \right\|_{\infty} = \mathcal{O}_p(1), \quad s=1,2.
    \end{equation*}
    And $\hat{\lambda}_2^{(\ell)}(x,p) \geq \eta$ for all $x,p\in \mathcal{X}\times \mathcal{P} $, $\ell\in [L]$.
\end{assumption}
\begin{assumption}\label{ass: Score Regularity}
    There exist a compact neighborhood $\mathcal{B}_\theta$ of $\theta^*$ and a neighborhood $\mathcal{B}_\zeta$ of $\zeta^*$ such that for all $\theta \in \mathcal{B}_\theta$ and $\zeta \in \mathcal{B}_\zeta$ \\ 
    (i) $\nabla_\theta \psi(V; \theta, \zeta)$ is Lipschitz continuous in $\theta$ such that for any $\theta_1, \theta_2 \in \mathcal{B}_\theta$, $\|\nabla_\theta \psi(V; \theta_1, \zeta) - \nabla_\theta \psi(V; \theta_2, \zeta)\| \leq L(V, h) \|\theta_1 - \theta_2\|$, where the Lipschitz constant satisfies $\mathbb{E}[L(V, h)] = \mathcal{O}(1)$.\\
    (ii) The conditional second moment is uniformly bounded, $\sup_{x,p} \sup_{(\theta, \zeta)\in \mathcal{B}_\theta \times \mathcal{B}_\zeta} \mathbb{E}[\|\nabla_\theta \psi(V;\theta,\zeta)\|^2 \ | X=x, P=p] \leq  \mathcal{O}(1)$. \\
    (iii) For any fixed $\alpha$ in a compact neighborhood $\mathcal{A}$ of $\alpha^*$, the map $\zeta \mapsto \mathbb{E}[\nabla_\theta \psi(V; \theta^*, \zeta)]$ is Lipschitz continuous in a neighborhood $\mathcal{B}_\zeta$ of $\zeta^*$, i.e., $\|\mathbb{E}[\nabla_\theta \psi(V; \theta^*, \zeta_1(x,p;\alpha))] - \mathbb{E}[\nabla_\theta \psi(V; \theta^*, \zeta_2(x,p;\alpha))]\| \leq C_\psi \|\zeta_1 - \zeta_2\|$ for some Lipschitz constant $C_\psi<\infty$.
\end{assumption}
Assumption \ref{ass: Score Regularity} is tailored to adapt Assumption 2 in \citet{kallus2024localized} to the continuous pricing setting. Specifically, Assumption \ref{ass: Score Regularity} (i)-(ii) directly corresponds to Assumption 2 (ii) and (iv-v) in \citet{kallus2024localized}, ensuring the existence and stability of the Jacobian and covariance matrices for the joint moment equations.
Assumption \ref{ass: Score Regularity} (iii) is a localized version of the Fréchet differentiability requirement discussed in Proposition 5 of \citet{kallus2024localized}, which is the fundamental pillar for the localization technique to achieve semiparametric efficiency using only a preliminary estimate $\hat{\alpha}_{init}$.

With the regressions $m_0$ and $m_1$, we can show that the Gateaux derivatives of $\E[\psi(V; \theta,\zeta)]$ are zero as $h \to 0$, when evaluated at $\theta^* = (\alpha^*, W_0^*, W_1^*,  R_\delta(\pi))$, $\lambda_1(\cdot;\alpha) = \lambda_1^*(\cdot; \alpha^*)$ and $\lambda_2(\cdot) = \lambda_2^*(\cdot)$ respectively.
\begin{lemma}[Neyman Orthogonality in S2 of Online supplement \citep{colangelo2026double}]\label{lemma:neyman_orthogonality} 
    Under Assumption \ref{assumption: bounded second-order derivative}, \ref{ass: Score Regularity}, the Gateaux derivative of the moment mapping at the oracle parameter $\theta^*$ and nuisance $\zeta^*$ in the direction $\Delta \zeta = \zeta - \zeta^*$ is given by
    \begin{equation} \label{eq: neyman_ortho_lemma}
        | d M(\theta^*, \zeta^*; \Delta_\zeta) | = \left| \frac{\partial}{\partial \tau} \mathbb{E}[\psi(V; \theta^*, \zeta^* + \tau \Delta_\zeta)] \Big|_{\tau=0} \right| = \mathcal{O}_p(h^s),
    \end{equation}
    where $s=1$ under Lipschitz continuity and $s=2$ under second-order bounded derivative in Assumption \ref{assumption: bounded second-order derivative}. Consequently, as $h\to 0$, the moment condition is first-order insensitive to the estimation errors of the nuisances.
\end{lemma}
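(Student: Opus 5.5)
The Gateaux derivative is linear in the direction $\Delta_\zeta=(\Delta_1,\Delta_2)$, so I will treat the two nuisance directions separately. Throughout I condition on $X=x$ and use Assumption \ref{assumption: Unconfoundedness}: $\E[U(PY;\alpha)\mid X=x,P=p]=\lambda_1^*(x,p;\alpha^*)$ at $\alpha=\alpha^*$. Write $\psi$ at $(\theta^*,\zeta^*+\tau\Delta_\zeta)$ and differentiate under the expectation at $\tau=0$. This interchange is justified by dominated convergence, using boundedness of $U\in[0,1]^4$, $\lambda_2\ge\eta$ (Assumption \ref{assumption: bounded second-order derivative}), and integrability of $K$ (Assumption \ref{assumption: kernel function}).

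\emph{Outcome direction.} The derivative is
\[
\E\!\left[-\frac{K(\frac{P-\pi(X)}{h})}{h\pi_0(P\mid X)}\Delta_1(X,P)+\Delta_1(X,\pi(X))\right].
\]
Integrating over $P\mid X$ with density $\pi_0$ cancels the propensity. After the substitution $p=\pi(x)+hu$, the first term becomes $-\int K(u)\Delta_1(x,\pi(x)+hu)\,du$. I Taylor-expand $\Delta_1(x,\cdot)$ around $\pi(x)$; its $p$-derivatives up to order two are $\mathcal O_p(1)$ by Assumption \ref{assumption: bounded second-order derivative}. Then $\int K=1$ cancels the zeroth-order term against $\Delta_1(x,\pi(x))$, and $\int uK=0$ kills the linear term. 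The remainder is $\mathcal O_p(h^2)\int u^2K$, and with only a first derivative available it is $\mathcal O_p(h)\int|u|K$. Either way it is $\mathcal O_p(h^s)$.

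\emph{Propensity direction.} The derivative is
\[
-\E\!\left[\frac{K(\cdot)\Delta_2(X,P)}{h\pi_0^2(P\mid X)}\bigl(U(PY;\alpha^*)-\lambda_1^*(X,P;\alpha^*)\bigr)\right].
\]
Conditioning on $(X,P)$ makes the bracket's conditional mean exactly zero, by the definition of $\lambda_1^*$ and unconfoundedness. So this term vanishes identically, with no smoothing bias at all.

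\emph{Interaction with $F_Z$ smoothness.} Before combining the two directions, I need to check how the smoothness of $F_Z$ enters. The mean of $U$ given $(x,p)$ is a function of $F_Z(p-f(x))$, since $m_j(x,p;\alpha)=(p^j e^{-p/\alpha})(1-F_Z(p-f(x)))+\mathbf 1\{j=0\}F_Z(p-f(x))$. Assumption \ref{assumption: Lipschitz Continuity of FZ} therefore gives the $s$-smoothness of $\lambda_1^*$ in $p$. This matters when the direction is $\hat\lambda_1-\lambda_1^*$, whose smoothness combines Assumption \ref{assumption: bounded second-order derivative} for $\hat\lambda_1$ with the above for $\lambda_1^*$. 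The $G(\theta)$ rows do not depend on $\zeta$ and contribute nothing.

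\emph{Combining and main obstacle.} Summing the two directions yields the claimed $\mathcal O_p(h^s)$. The main obstacle is the Taylor step for the outcome direction. The direction $\Delta_1$ is random, since it is built from estimated nuisances, so its derivative bounds hold only in probability. The expansion also needs $\pi(x)+hu\in\mathcal P$, which fails near the boundary. I would handle the boundary by assuming $\pi$ maps into the interior, or by noting that kernel tails are integrable so that the boundary contributions are of higher order. I would then state the bound conditionally on the cross-fitting sample, which makes $\Delta_1$ deterministic given the other folds.
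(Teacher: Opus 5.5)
Your proposal is correct and follows the same route as the paper. The lemma itself is deferred to \citet{colangelo2026double}, but the identical computation appears in Step~2 of the proof of Theorem~\ref{thm:asymptotic}: the propensity-direction term $\mathcal{T}_{2,t}^{(\ell)}$ has mean zero because $\E[U(PY;\alpha^*)\mid X,P]=\lambda_1^*(X,P;\alpha^*)$, and the outcome-direction term $\mathcal{T}_{DR}^{(\ell)}$ is reduced by $p=\pi(x)+uh$ to a kernel-smoothing bias, which a first- or second-order Taylor expansion with $\int K=1$ and $\int uK=0$ bounds.

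Your boundary remark is more careful than the paper, which silently integrates over all of $\mathbb{R}$, but only your first fix works. Keeping $\pi(\mathcal{X})$ a fixed distance inside $\mathcal{P}$ makes the lost kernel mass $\mathcal{O}(h^s)$ by the moment conditions of Assumption~\ref{assumption: kernel function}. If instead $\pi(X)$ can sit at an endpoint of $\mathcal{P}$ with positive probability, an $\mathcal{O}(1)$ fraction of the kernel mass is lost there and the zeroth-order term does not cancel, so tail integrability alone does not give $\mathcal{O}(h^s)$.
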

\begin{proof}[Proof of Theorem \ref{thm:asymptotic}]
We decompose the proof into three main components: (i) control of errors introduced by dual-variable localization and nuisance estimation; (ii) sensitivity and stability analysis of the joint moment equations (Jacobian consistency); and (iii) the linear representation via empirical process theory.

\textbf{Step 1: Nuisance Estimation and Localization Error.}
We first examine the sensitivity of the outcome model $\lambda_1^*(x, p; \alpha)$ with respect to $\alpha$. 
The derivative of $\lambda_1^*(x, p; \alpha)$ is
\begin{equation*}
\begin{aligned}
    \frac{\partial}{\partial \alpha} \lambda_1^*(x, p; \alpha) = \begin{pmatrix}
        \frac{1}{\alpha^2} \exp(-p/\alpha) \mathbf{P}(Y=1|X=x,P=p) \\
        \frac{p}{\alpha^2} \exp(-p/\alpha)  \mathbf{P}(Y=1|X=x,P=p) \\
        0 \\
        0
    \end{pmatrix},
\end{aligned}
\end{equation*}
is uniformly bounded by $ \frac{4}{p_{min}^2 e^2}$ for $p \in [p_{\min},1]$, $x\in \mathcal{X}$ and $\alpha > 0$, which implies that
\begin{equation}
    |\lambda_1^*(x, p; \hat{\alpha}_{init}) - \lambda_1^*(x, p; \alpha^*)| \leq \frac{4}{p_{min}^2 e^2} |\hat{\alpha}_{init} - \alpha^*|. \label{eq: alpha lipschitz}
\end{equation}
Combining this with Assumption \ref{ass: OPE mj error bound}, the total error in the outcome nuisance function under the localization strategy is bounded by
\begin{align}
    \left| \lambda_1(x, p; \hat{\alpha}_{init}) - \lambda_1^*(x, p; \alpha^*) \right| 
    &\leq \left| \lambda_1(x, p; \hat{\alpha}_{init}) - \lambda_1^*(x, p; \hat{\alpha}_{init})  \right|  + \left| \lambda_1^*(x, p; \hat{\alpha}_{init}) - \lambda_1^*(x, p; \alpha^*) \right| \nonumber\\
    &\leq  o_p(T^{-\rho_m}) + \frac{4}{p_{min}^2 e^2} |\hat{\alpha}_{init} - \alpha^*| \nonumber\\
    & \leq o_p(T^{-\rho_m} + T^{-\rho_\alpha}), \nonumber\\
    & = o_p(T^{-\rho_m \wedge \rho_\alpha}), \qquad  \label{eq: lambda 1 error}  \\
    \text{ and} \qquad \quad \quad \left| \lambda_2(x, p) - \lambda_2^*(x, p) \right| \leq o_p(T^{-\rho_\pi}). \qquad \label{eq: lambda 2 error}
\end{align}
Therefore, we obtain 
$ \|\zeta(\cdot;\hat{\alpha}_{init}) - \zeta^* (\cdot;\alpha^*)\|_\infty \leq o_p(T^{-\rho_m \wedge \rho_\alpha} + T^{-\rho_\pi})$.

\textbf{Step 2: Score Decomposition and Neyman Orthogonality.}
For each fold $\ell\in [L]$, denote $\hat{\zeta}^{(\ell)}(x,p) = (\hat{\lambda}_1^{(\ell)}(x,p;\hat{\alpha}^{(\ell)}_{init}), \hat{\lambda}_2^{(\ell)}(x,p))$, $\Delta^{(\ell)}_{\lambda_1}(x, p) = \hat{\lambda}^{(\ell)}_1(x, p; \hat{\alpha}^{(\ell)}_{init}) - \lambda_1^*(x, p; \alpha^*)$ and $\Delta^{(\ell)}_{\lambda_2}(x, p) = \hat{\lambda}^{(\ell)}_2(x, p) - \lambda_2^*(x, p)$.
Utilizing the cross-fitting structure, we decompose
\begin{align*}
    &\sqrt{n_\ell h} \frac{1}{n_\ell} \sum_{t \in \mathcal{I}_\ell} \left( \psi(v_t; \theta^*, \hat{\zeta}^{(\ell)}(v_t)) - \psi(v_t; \theta^*, \zeta^*)\right) \\
    &=\sqrt{\frac{h}{n_\ell}} \sum_{t \in \mathcal{I}_\ell} \Bigg\{ \left[ \hat{\lambda}_1^{(\ell)}(x_t, \pi(x_t); \hat{\alpha}^{(\ell)}_{init}) - \lambda_1^*(x_t, \pi(x_t); \alpha^*) \right] \\
    & \qquad + \left[ \frac{K(\frac{p_t - \pi(x_t)}{h})}{h \hat{\lambda}_2^{(\ell)}(x_t, p_t)} (U(p_t y_t;\alpha^*) - \hat{\lambda}_1^{(\ell)}(x_t, p_t;\hat{\alpha}^{(\ell)}_{init})) - \frac{K(\frac{p_t - \pi(x_t)}{h})}{h \lambda_2^*(x_t, p_t)} (U(p_t y_t;\alpha^*) - \lambda_1^*(x_t, p_t;\alpha^*)) \right]  \Bigg\} \\
    &= \sqrt{\frac{h}{n_\ell}} \sum_{t \in \mathcal{I}_\ell} \left\{ \mathcal{T}_{1,t}^{(\ell)} + \mathcal{T}_{2,t}^{(\ell)} + \mathcal{T}_{DR,t}^{(\ell)} + \mathcal{T}_{prod,t}^{(\ell)} \right\},
\end{align*}
where 
\begin{align*}
    &\mathcal{T}_{1,t}^{(\ell)} = 
    \left[ \Delta^{(\ell)}_{\lambda_1}(x_t, \pi(x_t)) - \frac{K(\frac{p_t - \pi(x_t)}{h})}{h \lambda_2^{*}(x_t, p_t)} \Delta^{(\ell)}_{\lambda_1}(x_t, p_t) \right] 
    - \mathbb{E} \left[ \Delta^{(\ell)}_{\lambda_1}(X, \pi(X)) - \frac{K(\frac{P - \pi(X)}{h})}{h \lambda_2^*(X, P)} \Delta^{(\ell)}_{\lambda_1}(X, P) \middle| \mathcal{I}_\ell^c \right] \\
    & \mathcal{T}_{2,t}^{(\ell)} =  
    \frac{K(\frac{p_t - \pi(x_t)}{h})}{h} (U(p_t y_t;\alpha^*) - \lambda_1^*(x_t, p_t;\alpha^*)) \left( \frac{1}{\hat{\lambda}_2^{(\ell)}(x_t, p_t)} - \frac{1}{\lambda_2^*(x_t, p_t)} \right) \\
    & \qquad  \qquad  - \mathbb{E} \left[ \frac{K(\frac{P - \pi(X)}{h})}{h} (U(PY;\alpha^*) - \lambda_1^*(X, P;\alpha^*)) \left( \frac{1}{\hat{\lambda}_2^{(\ell)}(X, P)} - \frac{1}{\lambda_2^*(X, P)} \right) \middle| \mathcal{I}_\ell^c \right]\\
    & \mathcal{T}_{DR}^{(\ell)} = \mathbb{E} \left[ \Delta^{(\ell)}_{\lambda_1}(X, \pi(X)) - \frac{K(\frac{P - \pi(X)}{h})}{h \lambda_2^*(X, P)} \Delta^{(\ell)}_{\lambda_1}(X, P) \middle| \mathcal{I}_\ell^c \right] \\
    & \mathcal{T}_{prod,t}^{(\ell)} = -  \frac{K(\frac{p_t - \pi(x_t)}{h})}{h} \Delta^{(\ell)}_{\lambda_1}(x_t, p_t) \left( \frac{1}{\hat{\lambda}_2^{(\ell)}(x_t, p_t)} - \frac{1}{\lambda_2^*(x_t, p_t)} \right).
\end{align*}

By cross-fitting, $\hat{\zeta}^{(\ell)} = (\hat{\lambda}_1^{(\ell)}, \hat{\lambda}_2^{(\ell)})$ are trained on the out-of-fold data $\mathcal{I}_\ell^c$ and are thus fixed functions.
For the first term $\mathcal{T}_{1,t}^{(\ell)}$, since the summands are i.i.d. and mean-zero conditional on $\mathcal{I}_\ell^c$, the variance of the sum is
\begin{align*}
    \text{Var}\left( \sqrt{\frac{h}{n_\ell}} \sum_{t \in \mathcal{I}_\ell} \mathcal{T}_{1,t}^{(\ell)} \Big| \mathcal{I}_\ell^c \right) 
    &= h \cdot \text{Var} \left( \mathcal{T}_{1,t}^{(\ell)}  \big| \mathcal{I}_\ell^c \right) \\
    &\leq 2h \mathbb{E} \left[ \left(\Delta^{(\ell)}_{\lambda_1}(X, \pi(X))\right)^2 \Big| \mathcal{I}_\ell^c \right] + 2h \mathbb{E} \left[ \left( \frac{K(\frac{P - \pi(X)}{h})}{h \lambda_2^*(X, P)} \Delta^{(\ell)}_{\lambda_1}(X, P) \right)^2 \Big| \mathcal{I}_\ell^c \right] \\
    &\leq 2h \|\Delta^{(\ell)}_{\lambda_1}\|_{\infty}^2 + 2 \|\Delta^{(\ell)}_{\lambda_1}\|_{\infty}^2 \cdot \E\left[\int_\mathcal{P}  \frac{K^2(\frac{P - \pi(X)}{h})}{h} dp \right] \\
    &= 2h \|\Delta^{(\ell)}_{\lambda_1}\|_{\infty}^2 + 2 \|\Delta^{(\ell)}_{\lambda_1}\|_{\infty}^2 \cdot \E\left[\int_\mathcal{P}  K^2(u) du \right] \\
    &\leq 2h \|\Delta^{(\ell)}_{\lambda_1}\|_{\infty}^2 + 2 C_3 \|\Delta^{(\ell)}_{\lambda_1}\|_{\infty}^2  \\ 
    &= o_p(T^{-2\rho_m} + T^{-2\rho_\alpha}) = o_p(1),
\end{align*}
where the last equality follows from \eqref{eq: lambda 1 error}. By Chebyshev's inequality, it follows that
\begin{equation}
    \sqrt{\frac{h}{n_\ell}} \sum_{t \in \mathcal{I}_\ell} \mathcal{T}_{1,t}^{(\ell)} = o_p(1). \label{eq: T1 bound}
\end{equation}

For the second term $\mathcal{T}_{2,t}^{(\ell)}$, since $\E[U(PY;\alpha^*) |X, P] = \lambda_1^*(X, P;\alpha^*)$, we get a similar variance bound as 
\begin{align*}
    \text{Var}\left(\sqrt{\frac{h}{n_\ell}} \sum_{t \in \mathcal{I}_\ell} \mathcal{T}_{2,t}^{(\ell)} \Bigg| \mathcal{I}_\ell^c\right) 
    &= h \cdot \mathbb{E} \left[ \frac{K^2(\frac{P - \pi(X)}{h})}{h^2} (U(PY;\alpha^*)-\lambda_1^*(X, P;\alpha^*))^2  \left( \frac{1}{\hat{\lambda}_2^{(\ell)}(x_t, p_t)} - \frac{1}{\lambda_2^*(x_t, p_t)} \right)^2 \Bigg| \mathcal{I}_\ell^c \right] \\
    &\leq \frac{C_3}{\eta^3} \|\Delta^{(\ell)}_{\lambda_2}\|_{\infty}^2  = o_p(T^{-2 \rho_\pi}),
\end{align*}
where the last equality follows from \eqref{eq: lambda 2 error}.
Then by Chebyshev's inequality, it follows that
\begin{equation}
    \sqrt{\frac{h}{n_\ell}} \sum_{t \in \mathcal{I}_\ell} \mathcal{T}_{2,t}^{(\ell)} = o_p(1). \label{eq: T2 bound}
\end{equation}
For the third term $  \mathcal{T}_{DR}^{(\ell)} $, we discuss in two cases.
\underline{Case 1. First-order bounded derivative in Assumption \ref{assumption: bounded second-order derivative}.}
\begin{align}
    \sqrt{n_\ell h}\left| \mathcal{T}_{DR}^{(\ell)} \right| 
    &=  \sqrt{n_\ell h} \left| \mathbb{E} \left[ \Delta^{(\ell)}_{\lambda_1}(X, \pi(X)) - \frac{K(\frac{P - \pi(X)}{h})}{h \lambda_2^*(X, P)} \Delta^{(\ell)}_{\lambda_1}(X, P) \middle| \mathcal{I}_\ell^c \right] \right| \nonumber\\
    &=   \sqrt{n_\ell h} \left|\mathbb{E}_X \left[ \Delta^{(\ell)}_{\lambda_1}(X, \pi(X)) - \int_{\mathcal{P}} \frac{1}{h} K\left(\frac{p - \pi(X)}{h}\right) \Delta^{(\ell)}_{\lambda_1}(X, p) dp \right] \right| \nonumber\\
    &\leq  \sqrt{n_\ell h} \left| \mathbb{E}_X \left[  \int K(u)  \Delta^{(\ell)}_{\lambda_1}(X, \pi(X)) - \Delta^{(\ell)}_{\lambda_1}(X, \pi(X) + uh) \right| du \right] \nonumber\\
    &=  \sqrt{n_\ell h} \left| \mathbb{E}_X \left[ \int K(u)  uh \left\|\frac{\partial}{\partial p} \Delta^{(\ell)}_{\lambda_1}(X, \tilde{p}) \right\|_\infty du \right] \right| 
    \leq \mathcal{O}_p \left(\sqrt{n_\ell h^3}\right), \label{eq: DR term bound s1}
\end{align}
where we use Assumption \ref{assumption: kernel function}, Lipschitz continuity of $\Delta^{(\ell)}_{\lambda_1}$ in Assumption \ref{assumption: bounded second-order derivative} and \ref{assumption: Lipschitz Continuity of FZ}, and the initial estimate $\hat{\alpha}_{init} = \alpha^* + o_p(1)$.
\underline{Case 2. Second-order bounded derivative in Assumption \ref{assumption: bounded second-order derivative}.}
\begin{align}
    \sqrt{n_\ell h}\left| \mathcal{T}_{DR}^{(\ell)} \right| 
    &=  \sqrt{n_\ell h} \left| \mathbb{E} \left[ \Delta^{(\ell)}_{\lambda_1}(X, \pi(X)) - \frac{K(\frac{P - \pi(X)}{h})}{h \lambda_2^*(X, P)} \Delta^{(\ell)}_{\lambda_1}(X, P) \middle| \mathcal{I}_\ell^c \right] \right| \nonumber\\
    &=   \sqrt{n_\ell h} \left|\mathbb{E}_X \left[ \Delta^{(\ell)}_{\lambda_1}(X, \pi(X)) - \int_{\mathcal{P}} \frac{1}{h} K\left(\frac{p - \pi(X)}{h}\right) \Delta^{(\ell)}_{\lambda_1}(X, p) dp \right] \right| \nonumber\\
    &\leq  \sqrt{n_\ell h} \left| \mathbb{E}_X \left[  \int K(u)  \Delta^{(\ell)}_{\lambda_1}(X, \pi(X)) - \Delta^{(\ell)}_{\lambda_1}(X, \pi(X) + uh) \right| du \right] \nonumber\\
    &=  \sqrt{n_\ell h} \left| \mathbb{E}_X \left[ \int K(u) \frac{(uh)^2}{2} \frac{\partial^2}{\partial p^2} \Delta^{(\ell)}_{\lambda_1}(X, \tilde{p}) du \right] \right| 
    \leq \mathcal{O}_p \left(\sqrt{n_\ell h^5}\right). \label{eq: DR term bound s2}
\end{align}
where we use Assumption \ref{assumption: kernel function} and a second-order Taylor expansion of the nuisance error $\Delta^{(\ell)}_{\lambda_1}$ around $\pi(x)$, 
\begin{equation*}
    \Delta^{(\ell)}_{\lambda_1}(x, \pi(x) + uh) = \Delta^{(\ell)}_{\lambda_1}(x, \pi(x)) + (uh) \frac{\partial}{\partial p} \Delta^{(\ell)}_{\lambda_1}(x, \pi(x)) + \frac{(uh)^2}{2} \frac{\partial^2}{\partial p^2} \Delta^{(\ell)}_{\lambda_1}(x, \tilde{p}),
    \end{equation*}
where $\tilde{p} $ lies between $\pi(x)$ and $\pi(x) + uh$.
And the last inequality directly follows from Assumption \ref{assumption: bounded second-order derivative} and the initial estimate $\hat{\alpha}_{init} = \alpha^* + o_p(1)$.

For the fourth term $\mathcal{T}_{prod,t}^{(\ell)}$, by using the change of variables $u = (p-\pi(x))/h$ and \eqref{eq: lambda 1 error}-\eqref{eq: lambda 2 error}, we reduce its expectation to 
\begin{align*}
    \sqrt{Th}\left|  \E \left[ \mathcal{T}_{prod,t}^{(\ell)}  \Big| \mathcal{I}_\ell^c \right]\right|
    \leq \mathcal{O}(\sqrt{Th}\cdot \|\Delta_{\lambda_1}^{(\ell)}\|_{\infty} \cdot \|\Delta_{\lambda_2}^{(\ell)}\|_{\infty} )
    \leq  o_p( T^{\frac{1}{2}-\rho_\pi -\rho_\alpha \wedge \rho_m} \cdot h^{1/2})= o_p(1),
\end{align*}
and the variance is bounded by
\begin{align*}
    &\text{Var} \left( \sqrt{\frac{h}{n_\ell}} \sum_{t \in \mathcal{I}_\ell} \mathcal{T}_{prod,t}^{(\ell)} \Big| \mathcal{I}_\ell^c \right) \\
    &= h \cdot \text{Var}\left(\mathcal{T}_{prod,t}^{(\ell)} \Big| \mathcal{I}_\ell^c \right) 
    \leq h \cdot \mathbb{E} \left[ \left( \mathcal{T}_{prod,t}^{(\ell)}  \right)^2 \Big| \mathcal{I}_\ell^c \right] 
    \leq h \int \frac{1}{h^2} K^2\left(\frac{p - \pi(x)}{h}\right) \frac{\|\Delta^{(\ell)}_{\lambda_1} \|_{\infty}^2 \cdot \|\Delta^{(\ell)}_{\lambda_2} \|_{\infty}^2}{\eta^3} dp dF_X\\
    &= \frac{1}{\eta^3} \int K^2(u)\|\Delta^{(\ell)}_{\lambda_1} \|_{\infty}^2 \cdot \|\Delta^{(\ell)}_{\lambda_2} \|_{\infty}^2 du dF_X 
    \leq \frac{C_3}{\eta^3} \|\Delta^{(\ell)}_{\lambda_1} \|_{\infty}^2 \cdot \|\Delta^{(\ell)}_{\lambda_2} \|_{\infty}^2 \\
    &\leq  o_p( T^{-2\rho_\pi -2\rho_\alpha} + T^{-2\rho_\pi-2\rho_m})= o_p(1).
\end{align*}
By applying Chebyshev's inequality conditionally on $\mathcal{I}_\ell^c$, we obtain
$ \sqrt{n_\ell h} \left| \frac{1}{n_\ell} \sum_{t \in \mathcal{I}_\ell} \mathcal{T}_{prod,t}^{(\ell)} - \E \left[ \mathcal{T}_{prod,t}^{(\ell)} \big| \mathcal{I}_\ell^c \right] \right| 
    = o_p(1)$.
Furthermore, we get
\begin{align}
    \left|  \sqrt{\frac{h}{n_\ell}} \sum_{t \in \mathcal{I}_\ell} \mathcal{T}_{prod,t}^{(\ell)} \right| 
    \leq \left| \sqrt{n_\ell h} \left( \frac{1}{n_\ell} \sum_{t \in \mathcal{I}_\ell} \mathcal{T}_{prod,t}^{(\ell)} - \E \left[ \mathcal{T}_{prod,t}^{(\ell)} \big| \mathcal{I}_\ell^c \right] \right) \right| 
     + \sqrt{n_\ell h} \left| \E \left[ \mathcal{T}_{prod,t}^{(\ell)} \big| \mathcal{I}_\ell^c \right] \right| \leq o_p(1). \label{eq: prod term bound}
\end{align}
To proceed, picking $\sqrt{Th^{1+2s}} \to 0$ and combining \eqref{eq: lambda 1 error}-\eqref{eq: prod term bound},  we conclude that
\begin{equation}
    \sqrt{n_\ell h} \frac{1}{n_\ell} \sum_{t \in \mathcal{I}_\ell} \left( \psi(v_t; \theta^*, \hat{\zeta}^{(\ell)}(v_t)) - \psi(v_t; \theta^*, \zeta^*)\right) = o_p(1). \label{eq: first order reminder op 1}
\end{equation}
\textbf{Step 3: Jacobian Stability and Asymptotic Distribution.}
Denote $\theta^* = [\alpha^*, W_0^*, W_1^*,  R_\delta(\pi)]^\top$.
Since
\begin{align*}
    \E[\psi(V; \theta, \zeta^*)]
    & = \mathbb{E} \left[ \lambda_1^*(X, \pi(X); \alpha) + \frac{K(\frac{P - \pi(X)}{h})}{h \pi_0(P | X)} \left( U(PY;\alpha) - \lambda_1^*(X, P; \alpha) \right) + G(\theta) \right]\\
    & = \mathbb{E}[\lambda_1^*(X, \pi(X); \alpha)] + G(\theta) \\
    &= \begin{pmatrix}
       \E[\exp(- \pi(X) Y(\pi(X)) / \alpha)] - W_0 \\
       \E[\pi(X)Y(\pi(X)) \exp(- \pi(X) Y(\pi(X)) / \alpha)] - W_1 \\
        - \delta -\log W_0 - \frac{W_1}{\alpha W_0} \\
        -R_\delta(\pi) - \alpha \log W_0 - \alpha \delta
    \end{pmatrix},
\end{align*}
following the proof of Theorem 3.2 in \citet{kallus2022doubly}, by replacing $r$ with $\pi(X)Y(\pi(X))$, the Jacobian is 
\begin{equation}
    J(\theta) = \begin{bmatrix}
        \frac{1}{\alpha^2} \mathbb{E} \left[ \pi(X)Y(\pi(X)) \exp(-\pi(X)Y(\pi(X))/\alpha) \right] & -1 & 0 & 0 \\
        \frac{1}{\alpha^2} \mathbb{E} \left[ \pi^2(X)Y^2(\pi(X)) \exp(-\pi(X)Y(\pi(X))/\alpha) \right] & 0 & -1 & 0 \\
        \frac{W_1}{\alpha^2 W_0} & -\frac{1}{W_0} + \frac{W_1}{\alpha W_0^2} & -\frac{1}{\alpha W_0} & 0 \\
        -\log W_0 - \delta & -\frac{\alpha}{W_0} & 0 & -1
        \end{bmatrix},
\end{equation}
and substituting in the optimal value, we have $J_{\theta^*} = \mathbb{E} [ \nabla_\theta \psi(V; \theta, \zeta^*)] |_{\theta=\theta^*}$ is given by
\begin{equation*}
    J^*=  J_{\theta^*} = \begin{pmatrix}
    W_1^*/\alpha^{*2} & -1 & 0 & 0 \\
    W_2^*/\alpha^{*2} & 0 & -1 & 0 \\
    \frac{W_1^{*}}{\alpha^{*2}W_0^{*}} & -\frac{1}{W_0^*} + \frac{W_1^*}{\alpha^* W_0^{2}} & -\frac{1}{\alpha^* W_0^*} & 0 \\
    -\log W_0^* - \delta & -\frac{\alpha^*}{W_0^*} & 0 & -1
    \end{pmatrix},
\end{equation*}
and $J^*$ is invertible with ${\rm det} J^*= -\varphi^{''}(\pi, \alpha^*)>0$ from Lemma \ref{lemma: Strong Duality and concavity}, and the inverse of $J^*$ is given by
\begin{align*}
    J^{* -1} = - \frac{1}{\varphi^{''}(\pi, \alpha^*)} \begin{pmatrix}
    -\frac{1}{W_0^{*}}+\frac{W_1^{*}}{\alpha^{*}W_0^{*2}} & -\frac{1}{\alpha^{*}W_0^{*}} & 1 & 0 \\
    \frac{W_2^*/\alpha^* - W_1^*}{\alpha^{*2}W_0^{*}} & -\frac{W_1^*}{\alpha^{*3}W_0^{*}} & \frac{W_1^{*}}{\alpha^{*2}} & 0 \\
    \frac{W_2^{*}}{\alpha^{*3}W_0^{*}} \left(\frac{W_1^*}{W_0^*}-\alpha^* \right) & -\frac{W_1^{*2}}{\alpha^{*3} W_0^{*2}}  & \frac{W_2^*}{\alpha^{*2}} & 0 \\
   X_1 & \frac{X_2}{\alpha^* W_0^*}&  -X_2 & -\varphi^{''}(\pi, \alpha^*)
    \end{pmatrix},
\end{align*}
where $X_1 = \left( \frac{W_1^*}{\alpha^* W_0^{*2}} - \frac{1}{W_0^*}\right) (-\log W_0^* -\delta) - \frac{1}{\alpha^* W_0^{*2}} (W_1^*- W_2^*/\alpha^*)$
 and $X_2= W_1^*/(\alpha^* W_0^* ) + \log W_0^* + \delta$ .

Let $\hat{\theta} = [\hat{\alpha}, \hat{W}_0, \hat{W}_1, \hat{R}_{\rm DRO}]^\top$ be the solution to the empirical equation \begin{align}
    \mathbb{E}_T[\psi(V; \theta, \hat{\zeta})] \triangleq  \frac{1}{T} \sum_{\ell=1}^L \sum_{t \in \mathcal{I}_\ell} \psi(v_t; \theta, \hat{\zeta}^{(\ell)}) = \bm{0}. \label{eq: empirical equation}
\end{align}
By second order Taylor expansion around $(\theta^*, \zeta^*)$,
\begin{equation}
    \bm{0} = \mathbb{E}_T[\psi(V; \hat{\theta}, \hat{\zeta})] = \mathbb{E}_T [\psi(V; \theta^*, \zeta^*)] + J^* (\hat{\theta} - \theta^*) 
    + \mathbb{E}_T [\psi(V; \theta^*, \hat{\zeta}) - \psi(V; \theta^*, \zeta^*)] + \mathcal{R}_T, \label{eq:taylor_start}
\end{equation}
where $\mathcal{R}_T$ represents high-order Taylor remainders. Specifically,
\begin{align*}
    \mathcal{R}_T 
    &=  (\hat{J}_T(\hat{\zeta}) - J^*) (\hat{\theta} - \theta^*) + \frac{1}{2} \nabla_{\theta}^2 \mathbb{E}_T[\psi(V; \tilde{\theta}, \hat{\zeta})] (\hat{\theta} - \theta^*)^2, 
\end{align*}
where $\tilde{\theta}$ lies between $\theta^*$ and $\hat{\theta}$,  $\hat{J}_T(\hat{\zeta}): = \nabla_\theta \mathbb{E}_T [\psi(V; \theta, \hat{\zeta})] |_{\theta^*}$ denotes the empirical Jacobian matrix evaluated with localized nuisance estimators. 

Here we show the $\sqrt{Th}\mathcal{R}_T = o_p(1)$.
First, we show the $\hat{J}_T(\hat{\zeta}) = J^* + o_p(1)$.
\begin{align*}
    |\hat{J}_T(\hat{\zeta}) - J^* |
    &= \left| \frac{1}{T} \sum_{\ell=1}^L \sum_{t\in \mathcal{I}_\ell}  \left(  \nabla_\theta  \psi(v_t; \theta, \hat{\zeta}^{(\ell)})  -   \mathbb{E} [\nabla_\theta  \psi(V; \theta, \zeta^*)] \right)_{\theta=\theta^*}\right| \\
    &\leq \frac{1}{L} \left| \frac{1}{|\mathcal{I}_\ell|} \sum_{\ell=1}^L \sum_{t\in \mathcal{I}_\ell}   \nabla_\theta  \psi(v_t; \theta^*, \hat{\zeta}^{(\ell)}) - \E[ \nabla_\theta  \psi(V; \theta^*, \hat{\zeta}^{(\ell)})]\right|  
    +  \left|  \E[ \nabla_\theta  \psi(V; \theta^*, \hat{\zeta}^{(\ell)})- \mathbb{E} [\nabla_\theta  \psi(V; \theta^*, \zeta^*)]\right|
\end{align*}
By cross-fitting, $\hat{\zeta}^{(\ell)}$ is fixed given the training data $\mathcal{I}_\ell^c$.  Due to \eqref{eq: lambda 1 error}-\eqref{eq: lambda 2 error}, $\hat{\zeta}^{(\ell)} = \zeta^* + o_p(1)$ and using Assumption \ref{ass: Score Regularity}(ii), the variance of average over $\mathcal{I}_\ell$ samples scales as
\begin{align*}
    {\rm Var}\left( \frac{1}{|\mathcal{I}_\ell|} \sum_{t\in \mathcal{I}_\ell} \nabla_\theta  \psi(v_t; \theta^*, \hat{\zeta}^{(\ell)}) \Bigg| \mathcal{I}_\ell^c \right)
    = 
    \frac{1}{|\mathcal{I}_\ell|}  \text{Var}\left( \nabla_\theta  \psi(v_t; \theta^*, \hat{\zeta}^{(\ell)}) | \mathcal{I}_\ell^c \right) \leq  \mathcal{O}_p \left(\frac{1}{Th}\right)
\end{align*}
where we use the fact that $\E\left[\frac{K^2(\frac{P-\pi(X)}{h})}{h^2} \right] = \mathcal{O}(\frac{1}{h})$. Then by Chebyshev's inequality, as $Th \to \infty$, we have
\begin{equation}
    \left| \frac{1}{|\mathcal{I}_\ell|} \sum_{\ell=1}^L \sum_{t\in \mathcal{I}_\ell}   \nabla_\theta  \psi(v_t; \theta^*, \hat{\zeta}^{(\ell)}) - \E[ \nabla_\theta  \psi(V; \theta^*, \hat{\zeta}^{(\ell)})]\right|  = o_p(1), \label{eq: J 1 error}
\end{equation}

Consider the mapping $\alpha \mapsto \mathbb{E}[\nabla_\theta \psi(V; \theta^*, \zeta^*(x,p;\alpha))]$. Note that $\nabla_\theta \psi$ is composed of nuisance $\lambda_1^*$ and $\nabla_\theta G(\theta)$. $\zeta^*(x,p;\alpha)$ is Lipschitz in $\alpha$ from \eqref{eq: alpha lipschitz}. And $W_0 \geq \underline{\rho}/2$ by picking $h^s \leq \frac{\underline{\rho}}{2C_1 L_m}$ in \eqref{eq: tilde W'h -W leq h} and \eqref{eq: tilde W'h geq b}, then  $\nabla_\theta G(\theta)$ is uniformly bounded in the compact neighborhood $\mathcal{A}$ of $\alpha^*$.
Thus, the Jacobian of $ \mathbb{E}[\nabla_\theta \psi(V; \theta^*, \zeta^*(x,p;\alpha))]$ is Lipschitz in $\alpha$ with a bounded constant $C_{loc}$.
And using Assumption \ref{ass: Score Regularity} (iii), we have
\begin{align}
    &\left|  \E[ \nabla_\theta  \psi(V; \theta^*, \hat{\zeta}^{(\ell)})- \mathbb{E} [\nabla_\theta  \psi(V; \theta^*, \zeta^*)]\right| \nonumber\\
    &\leq  \left| \E[\nabla_\theta \psi(V; \theta^*, \hat{\zeta}^{(\ell)}(x,p;\hat{\alpha}^{(\ell)}_{init}))] - \E[\nabla_\theta \psi(V; \theta^*, \zeta^*(x,p; \hat{\alpha}^{(\ell)}_{init}))] \right| \nonumber\\
    &\quad + \left| \E[\nabla_\theta \psi(V;\theta^*, \zeta^*(x,p;\hat{\alpha}^{(\ell)}_{init}))] - \E[\nabla_\theta \psi(V;\theta^*, \zeta^*(x,p;\alpha^*))] \right| \nonumber\\
    &\leq C_\Psi \|\hat{\zeta}^{(\ell)}(x,p;\hat{\alpha}^{(\ell)}_{init}) - \zeta^*(x,p;\hat{\alpha}^{(\ell)}_{init})\|_\infty + C_{loc} \|\hat{\alpha}^{(\ell)}_{init} -\alpha^*\|_\infty 
    \leq o_p(1)  \label{eq: J 2 error}
\end{align}
Combining \eqref{eq: J 1 error} and \eqref{eq: J 2 error}, we have
\begin{equation}
    |\hat{J}_T(\hat{\zeta}) - J^* | = o_p(1). \label{eq: J consistency}
\end{equation}

Second, by weak law of large numbers and Assumption \ref{ass: Score Regularity} (i), we have $ \frac{1}{|\mathcal{I}_\ell|} \sum_{t\in \mathcal{I}_\ell} L(v_t, h) = \mathcal{O}_p(1)$. Thus,
\begin{align}
    \left\|\nabla_{\theta}^2 \mathbb{E}_T[\psi(V; \tilde{\theta}, \hat{\zeta})] \right\| 
    &= \left\| \frac{1}{T} \sum_{\ell=1}^L \sum_{t\in \mathcal{I}_\ell} \nabla_\theta^2 \psi(v_t; \tilde{\theta}, \hat{\zeta}^{(\ell)}) \right\| \nonumber\\
    &\leq \frac{1}{T} \sum_{\ell=1}^L \sum_{t\in \mathcal{I}_\ell} \left\|\nabla_\theta^2 \psi(v_t; \tilde{\theta}, \hat{\zeta}^{(\ell)})\right\|\leq \frac{1}{T} \sum_{\ell=1}^L \sum_{t\in \mathcal{I}_\ell} L(v_t, h)
    = \mathcal{O}_p(1). \label{eq: Hessian consistency}
\end{align}

Third, we show that $\|\hat{\theta} - \theta^*\| = \mathcal{O}_p((Th)^{-1/2})$. By a first order Taylor expansion in \eqref{eq: empirical equation} around $\theta^*$, we have
\begin{align*}
    \bm{0} = \mathbb{E}_T[\psi(V; \theta^*, \hat{\zeta})] + \bar{J}_T (\hat{\theta} - \theta^*),
\end{align*}
where $\bar{J}_T = \int_0^1 \nabla_\theta \mathbb{E}_T[\psi(V; \theta^* + u(\hat{\theta}-\theta^*), \hat{\zeta})] du$. Due to \eqref{eq: J consistency} and \eqref{eq: Hessian consistency}, we have $\bar{J}_T = J^* + o_p(1)$.
Recalling that $J^*$ is invertible, then
\begin{align}
    (\hat{\theta} - \theta^*) 
    &= -(J^* + o_p(1))^{-1} \mathbb{E}_T[\psi(V; \theta^*, \hat{\zeta})] . \label{eq: theta op 1}
\end{align}
We can decompose the empirical expectation as follows
\begin{align*}
    \mathbb{E}_T[\psi(V; \theta^*, \hat{\zeta})] 
    = \mathbb{E}_T[\psi(V; \theta^*, \zeta^*)] + (\mathbb{E}_T - \E)[\psi(V; \theta^*, \hat{\zeta}) - \psi(V; \theta^*, \zeta^*)]+ \E[\psi(V; \theta^*, \hat{\zeta}) - \psi(V; \theta^*, \zeta^*)].
\end{align*}
For the first term, using the fact that ${\rm Var } (\psi(V; \theta^*, \zeta^*)) = \mathcal{O}(1/h)$, then $\text{Var}(\mathbb{E}_T[\psi]) = O(1/Th)$. By the Central Limit Theorem, we have
\begin{align}
    \mathbb{E}_T[\psi(V; \theta^*, \zeta^*)] = \mathcal{O}_p((Th)^{-1/2}). \label{eq: first term op 1}
\end{align}

For the second term, we evaluate the conditional variance $\text{Var}\left( (\mathbb{E}_T - \E)[\psi(V; \theta^*, \hat{\zeta}^{(\ell)}) - \psi(V; \theta^*, \zeta^*)] \mid \mathcal{I}_\ell^c \right) $. Following the variance scaling argument in Equation (A.5) in \citet{colangelo2026double}, we have
\begin{align*}
    \text{Var}\left( (\mathbb{E}_T - \E)[\psi(V; \theta^*, \hat{\zeta}^{(\ell)}) - \psi(V; \theta^*, \zeta^*)] \mid \mathcal{I}_\ell^c \right)
     = \mathcal{O}_p\left( \frac{\|\hat{\zeta}^{(\ell)} - \zeta^*\|_\infty^2}{Th} \right).
\end{align*}
Under the assumption that the nuisance estimators are consistent, $\|\hat{\zeta}^{(\ell)} - \zeta^*\|_\infty = o_p(1)$, and $Th \to \infty$, we have
\begin{align}
     (\mathbb{E}_T - \E)[\psi(V; \theta^*, \hat{\zeta}^{(\ell)}) - \psi(V; \theta^*, \zeta^*)] = o_p((Th)^{-1/2}). \label{eq: second term op 1}
\end{align}

For the third term, by the Neyman orthogonality \ref{lemma:neyman_orthogonality}, we decompose the functional bias into
\begin{align*}
\mathbb{E}[\psi(V; \theta^*, \hat{\zeta}) - \psi(V; \theta^*, \zeta^*)] 
&=d M(\zeta^*; \hat{\zeta} - \zeta^*) + \frac{1}{2} d^2 M(\bar{\zeta}; \hat{\zeta} - \zeta^*) \qquad \qquad \text{ ($\bar{\zeta}$ lies between $\zeta^*$ and $\hat{\zeta}$)}\\
&= \mathcal{O}_p(h^s \|\hat{\zeta} - \zeta^*\|_\infty) + \mathcal{O}_p(\|\hat{\lambda}_1 - \lambda_1^*\|_\infty \|\hat{\lambda}_2 - \lambda_2^*\|_\infty)  \\
&= \mathcal{O}_p(h^{s} (T^{-\rho_m \wedge \rho_\alpha} + T^{-\rho_\pi})) + T^{-(\rho_m \wedge \rho\alpha) - \rho_\pi}). \qquad \text{(by \eqref{eq: lambda 1 error}, \eqref{eq: lambda 2 error}, \eqref{eq: d2M bound})}
\end{align*}
where $\bar{\lambda}_1$ lies between $\lambda_1^*$ and $\hat{\lambda}_1^{(\ell)}$, satisfying $\bar{\lambda}_1\geq \eta$ and $\|\lambda_1^* - \bar{\lambda}_1\|_\infty \leq \|\hat{\lambda}_1 - \lambda_1^*\|_\infty$,
\begin{align}
    d^2 M(\bar{\zeta}; \Delta_\zeta)
    & =  2 \mathbb{E} \left[ \frac{K\left(\frac{P - \pi(X)}{h}\right)}{h \bar{\lambda}_2^2(X, P)} \Delta_{\lambda_2}(X, P) \Delta_{\lambda_1}(X, P) \right] 
     + 2 \mathbb{E} \left[ \frac{K\left(\frac{P - \pi(X)}{h}\right) \Delta_{\lambda_2}^2(X, P)}{h \bar{\lambda}_2^3(X, P)} \Big( U(PY; \alpha^*) - \bar{\lambda}_1(X, P; \alpha^*) \Big) \right] \nonumber\\
     &\leq \mathcal{O} \left( \|\Delta_{\lambda_1}\|_\infty \cdot \|\Delta{\lambda_2}\|_\infty \right) + \mathcal{O}_p( \|\Delta_{\lambda_2}\|^2_\infty \cdot \|\Delta_{\lambda_1}\|_\infty ). \label{eq: d2M bound}
\end{align}
Under the bandwidth condition $\sqrt{Th^{1+2s}} \to 0$, $s=1,2$, and the product rate condition in Assumption \ref{ass: OPE mj error bound}, we have
\begin{align}
\mathbb{E}[\psi(V; \theta^*, \hat{\zeta}) - \psi(V; \theta^*, \zeta^*)] = o_p((Th)^{-1/2}). \label{eq: third term op 1}
\end{align}
Combining above \eqref{eq: first term op 1}-\eqref{eq: third term op 1}, we have
$\mathbb{E}_T[\psi(V; \theta^*, \hat{\zeta})] = O_p((Th)^{-1/2})$, 
then substituting \eqref{eq: theta op 1}, we have
\begin{align}
    \|\hat{\theta} - \theta^*\| = O_p((Th)^{-1/2}). \label{eq: |hat theta - theta*| op 2}
\end{align}
Hence, combining \eqref{eq: J consistency}, \eqref{eq: Hessian consistency}, and \eqref{eq: |hat theta - theta*| op 2}, we have
\begin{equation}
    \sqrt{Th} \mathcal{R}_T = o_p(1). \label{eq: R_T op 1}
\end{equation}
Based on \eqref{eq: first term op 1} and \eqref{eq: R_T op 1}, multiplying \eqref{eq:taylor_start} by $\sqrt{Th}$ yields
$$\sqrt{Th} (\hat{\theta} - \theta^*) = - J^{*-1} \left( \sqrt{Th} \mathbb{E}_T [\psi(V; \theta^*, \zeta^*)] \right) + o_p(1),$$ 
which implies that $\hat{\theta}$ is asymptotically linear.
Furthermore, by the central limit theorem, we have
\begin{equation*}
    \sqrt{Th} (\hat{\theta} - \theta^*) \xrightarrow{d} \mathcal{N}(\bm{0}, \Sigma^*),
    \quad \Sigma^* = J^{*-1} \Omega^* J^{*- \top}, 
    \end{equation*}
where the asymptotic covariance matrix $\Omega^*$ is defined as $\lim_{h\to 0}{\rm Var} (\sqrt{h} \psi(V; \theta^*, \zeta^*))$ . Since $\mathbb{E}[\psi(V; \theta^*, \zeta^*)] = 0$, the variance is given by
\begin{align*}
    \Omega^* 
    &= \lim_{h\to 0}\E \left[ (\sqrt{h} \psi(V; \theta^*, \zeta^*))^2 \right] 
    = \int K^2(u) du \cdot {\rm diag}(\omega_0, \omega_1, 0, 0), \\
    & \omega_j = \mathbb{E} \left[ \frac{{\rm Var}((PY)^j\exp(-PY/\alpha^*) | X, P = \pi(X))}{\pi_0(\pi(X)| X)} \right], \ j=0,1,
\end{align*}
where we use the change of variables formula $u=\frac{p-\pi(x)}{h}$ in integration, and only $K^2(\cdot)/h^2$ is non-zero when $h \to 0$.

Now, we establish that our proposed estimator $\hat{\theta}$ achieves the semiparametric efficiency lower bound. 
First, we show that we achieve semiparametric efficiency for the individual functionals $W_j(\pi, \alpha^*)$, $j=0,1$.
Following the theoretical framework in \citet{colangelo2026double} (see Supplement S2), the target parameters $\theta^*$ are defined via the functionals $W_j^* = \mathbb{E} [ (PY)^j \exp(-PY/\alpha^*) | P=\pi(X)]$, $j=0,1$.  
Note that at the fixed optimal dual parameter $\alpha^*$ (determined by the first-order condition \eqref{eq: first-order condition 3.1}), by treating the term $(PY)^j \exp(-PY/\alpha^*)$ as a square-integrable transformation on the outcome variable, each $W_j^*$ can be viewed as an average structural function. As established in \citet{colangelo2026double} (see Remark S5), the kernel-smoothed structure $\frac{K(\frac{p-\pi(x)}{h})}{h\lambda_2}$ is the unique local Riesz representer for the evaluation of the conditional expectation at the point $p=\pi(x)$. This implies that the diagonal elements $\omega_j$ of $\Omega^*$ defined in the expression above achieve the semiparametric efficiency lower bound for estimating $W_j^*$.

Second, we demonstrate that this efficiency propagates to the parameter vector $\theta^*$ through Hadamard differentiability.
The moment function $\psi$ is constructed as the doubly robust score. Specifically, the first two components of $\psi$ (corresponding to $W_0$ and $W_1$) use the local Riesz representer structure established in \citet{colangelo2026double}, which ensures that these components achieve semiparametric efficiency. The resulting influence function of the system is given by $-J^{*-1}\psi(V; \theta^*,\zeta^*)$. 
From the first-order condition \eqref{eq: first-order condition 3.1}, we can explicitly express $\alpha^*$ as $\alpha^* = \frac{W_1^*}{W_0^*(-\log W_0^* - \delta)}$, and $R_\delta^* = -\alpha^* \log W_0^* - \alpha^* \delta$ from \eqref{eq: robust policy value expression 3.1}. The mapping from $(W_0^*, W_1^*)$ to $(\alpha^*, R_\delta^*)$ is Hadamard differentiable. Specifically, since $W_1^*$ is bounded and the functions involved (logarithm and division) are smooth in the region where $W_0^* \geq \underline{\rho}/2$ (by choosing $h$ such that $h^2 \leq \frac{\underline{\rho}}{2 C_1 L_m}$ in \eqref{eq: tilde W'h geq b}), the mapping is Hadamard differentiable. 
By Theorems 25.20, 25.21, and 25.47 of \citet{van1998asymptotic}, the efficiency of $W_0^*$ and $W_1^*$ propagates to the efficiency of each component of $\theta^*$ through the delta method. Since the influence function $-J^{*-1}\psi$ has the structure of the efficient influence function for the parameter vector $\theta^*$, the asymptotic covariance matrix $\Sigma^* = J^{*-1} \Omega^* J^{*- \top}$ achieves the semiparametric efficiency lower bound for the entire parameter vector $\theta^*$, and in particular for each component $\alpha^*$, $W_0^*$, $W_1^*$, and $R_\delta^*$.
\end{proof}

\section{Proof of Results in Section \ref{section: distributionally robust policy learning}}\label{app: Proof of Section 4 Results}
To ease the presentation, we use notations $W$, $m$ and $\hat{m}^{(\ell)}$ to represent the true and estimated functions, $W_0$, $m_0$ and $\hat{m}_0^{(\ell)}$, respectively.

\begin{lemma}\label{lemma: first and second derivatives of hat W T}
    Let $\hat{W}_T(\alpha)$ be defined as in \eqref{eq: hat W T cross fit}. The first and second derivatives of the objective function $\hat{\varphi}_T(\alpha)$ required for the Newton update are:
    \begin{equation*}
    \begin{aligned}
      \frac{\partial}{\partial \alpha} \hat{\varphi}_T(\alpha) &= -\log \hat{W}_T(\alpha) - \alpha \frac{\hat{W}_T'(\alpha)}{\hat{W}_T(\alpha)} - \delta, \\
      \frac{\partial^2}{\partial \alpha^2} \hat{\varphi}_T(\alpha) &= \frac{\alpha (\hat{W}_T'(\alpha))^2 - 2\hat{W}_T'(\alpha)\hat{W}_T(\alpha) - \alpha \hat{W}_T''(\alpha)\hat{W}_T(\alpha)}{(\hat{W}_T(\alpha))^2},
    \end{aligned}
\end{equation*}
    where the derivatives of the weighted generating function are given by:
    \begin{align*}
        \hat{W}_T'(\alpha) &= \frac{1}{T} \sum_{\ell=1}^L \sum_{t \in \mathcal{I}_\ell} \frac{K\left(\frac{p_t - \pi(x_t)}{h}\right)}{\hat{S}_{\ell}^\pi h \hat{\pi}_0(p_t | x_t)} \exp\left(-\frac{p_t y_t}{\alpha}\right) \frac{p_t y_t}{\alpha^2}, \\
        \hat{W}_T''(\alpha) &= \frac{1}{T} \sum_{\ell=1}^L \sum_{t \in \mathcal{I}_\ell} \frac{K\left(\frac{p_t - \pi(x_t)}{h}\right)}{\hat{S}_{\ell}^\pi h \hat{\pi}_0(p_t | x_t)} \exp\left(-\frac{p_t y_t}{\alpha}\right) \left( \frac{(p_t y_t)^2}{\alpha^4} - \frac{2p_t y_t}{\alpha^3} \right).
    \end{align*}
\end{lemma}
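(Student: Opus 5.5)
The plan is a direct computation by the chain rule, done in two layers. First differentiate $\hat{W}_T(\alpha)$ in $\alpha$ term by term. Then substitute the result into the outer function $\hat{\varphi}_T(\alpha)=-\alpha\log\hat{W}_T(\alpha)-\alpha\delta$.

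\textbf{The $\alpha$-dependence of $\hat{W}_T$.} In \eqref{eq: hat W T cross fit} (equivalently \eqref{eq: estimated moment equation}), $\alpha$ enters in two places: the factor $\exp(-p_ty_t/\alpha)$ and the outcome regressions $\hat{m}_0^{(\ell)}$.

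The outcome regressions must be treated as frozen in $\alpha$ for the Newton update. In the localized scheme of Algorithm~\ref{alg:l_ks_dr_drope} they are evaluated at the fixed point $\hat{\alpha}_{init}^{(\ell)}$, so they are constant in $\alpha$ and contribute nothing to $\hat{W}_T'$ or $\hat{W}_T''$. The normalizer $\hat{S}_\ell^\pi$ and the propensity weights $K(\cdot)/(h\hat{\pi}_0)$ do not depend on $\alpha$ at all.

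The stated formulas assume this freezing. If one instead kept the continuum $\hat{m}_0^{(\ell)}(\cdot;\alpha)$ of \eqref{eq: continuum hat m}, its $\alpha$-derivatives would add further terms (weighted sums of the same form, with the forest weights $\hat{\omega}_t$). I would state this convention explicitly rather than derive those extra terms.

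With the convention fixed, the only $\alpha$-dependent factor in each summand is $g(\alpha)=\exp(-r/\alpha)$ with $r=p_ty_t$. Its derivatives are
\begin{align*}
g'(\alpha) &= \frac{r}{\alpha^2}e^{-r/\alpha},\\
g''(\alpha) &= \left(\frac{r^2}{\alpha^4}-\frac{2r}{\alpha^3}\right)e^{-r/\alpha}.
\end{align*}
Multiplying by the $\alpha$-free weights $K(\cdot)/(\hat{S}_\ell^\pi h\hat{\pi}_0)$ and averaging over folds gives the displayed $\hat{W}_T'$ and $\hat{W}_T''$. This uses linearity of the finite sum.

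\textbf{The outer layer.} Since $\hat{W}_T>0$ on the relevant range, the product and chain rules give
\begin{equation*}
\hat{\varphi}_T'=-\log\hat{W}_T-\alpha\,\frac{\hat{W}_T'}{\hat{W}_T}-\delta .
\end{equation*}
Differentiating once more,
\begin{equation*}
\hat{\varphi}_T''=-\frac{\hat{W}_T'}{\hat{W}_T}-\frac{\hat{W}_T'}{\hat{W}_T}-\alpha\,\frac{\hat{W}_T''\hat{W}_T-(\hat{W}_T')^2}{\hat{W}_T^2}.
\end{equation*}
Putting this over the common denominator $\hat{W}_T^2$ yields the stated numerator $\alpha(\hat{W}_T')^2-2\hat{W}_T'\hat{W}_T-\alpha\hat{W}_T''\hat{W}_T$.

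\textbf{Main obstacle.} There is no analytic difficulty; the one point needing care is the $\alpha$-independence of $\hat{m}_0^{(\ell)}$ discussed above. A minor remaining check is positivity of $\hat{W}_T$, so that $\log\hat{W}_T$ is defined. I would address it by noting that Assumption~\ref{ass: hat m convergence}(ii) keeps $\hat{m}_0^{(\ell)}\in[0,1]$. Together with the self-normalization by $\hat{S}_\ell^\pi$, this keeps $\hat{W}_T$ bounded away from zero with high probability.
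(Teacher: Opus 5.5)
Your proposal is correct and follows the paper's own route: the paper's proof is just direct differentiation of $\hat{\varphi}_T(\alpha)=-\alpha\log\hat{W}_T(\alpha)-\alpha\delta$, with the $\alpha$-free weights $K(\cdot)/(\hat{S}_\ell^\pi h\hat{\pi}_0)$ factored out and only $\exp(-p_ty_t/\alpha)$ differentiated. Your caveat that the displayed $\hat{W}_T'$ and $\hat{W}_T''$ hold only when $\hat{m}_0^{(\ell)}$ is held fixed in $\alpha$ is right, since the continuum estimate \eqref{eq: continuum hat m} would otherwise add extra terms, and the paper's one-line proof leaves this implicit; one small correction is that positivity of $\hat{W}_T$ is a standing requirement for $\hat{\varphi}_T$ to be defined, not something Assumption~\ref{ass: hat m convergence}(ii) plus self-normalization guarantees on its own.
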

\begin{proof}[Proof of Lemma \ref{lemma: first and second derivatives of hat W T}]
    This proof is straightforward by taking the derivatives of the function $\hat{\varphi}_T(\alpha)$ with respect to $\alpha$.
\end{proof}

\begin{proof}[Proof of Theorem \ref{thm:main result}]
    We can rewrite the DR estimator as follows,
    \begin{align*}
        \hat{W}^{(\ell)}(\pi, \alpha) =  \frac{1}{|\mathcal{I}_\ell| \hat{S}_{\ell}^\pi } \sum_{t \in \mathcal{I}_\ell}  \frac{K\left(\frac{p_t - \pi(x_t)}{h}\right)}{h \hat{\pi}_0(p_t | x_t)} \exp(-p_t y_t / \alpha)
        +   \frac{1}{|\mathcal{I}_\ell|} \sum_{t \in \mathcal{I}_\ell} \left( \hat{m}^{(\ell)}(x_t, \pi(x_t); \alpha)  -  \frac{1}{\hat{S}_{\ell}^\pi } \hat{m}^{(\ell)}(x_t, p_t; \alpha) \right) .
    \end{align*}
    
We also define an infeasible optimal dual objective function 
$\tilde{R}^h_{\delta}(\pi) \triangleq \sup_{\alpha \ge 0} \{ -\alpha \log \tilde{W}_{h}(\pi, \alpha) - \alpha\delta \}$, where
\begin{align*}
    \tilde{W}_{h}(\pi, \alpha) 
    \triangleq  \mathbb{E}_{\mathbf{P}_0 * \pi_0} \left[ m(X, \pi(X); \alpha) + \frac{K \left( \frac{P - \pi(X)}{h} \right)}{h \pi_0(P | X)} \left( \exp(-r(X,P,Z)/\alpha) - m(X, P; \alpha) \right) \right], 
\end{align*} 
the empirical dual objective function 
$\tilde{R}_{\delta}(\pi) \triangleq \sup_{\alpha\geq 0}\{-\alpha \log \tilde{W}_T(\pi,\alpha) - \alpha\delta\}$,
and the infeasible estimator is
\begin{align*}
     & \tilde{W}^{(\ell)}(\pi, \alpha) \triangleq \frac{1}{|\mathcal{I}_\ell| S_{\ell}^\pi } \sum_{t \in \mathcal{I}_\ell}  \frac{K\left(\frac{p_t - \pi(x_t)}{h}\right)}{h \pi_0(p_t | x_t)} \exp(-p_t y_t / \alpha)
        +   \frac{1}{|\mathcal{I}_\ell|} \sum_{t \in \mathcal{I}_\ell} \left( m(x_t, \pi(x_t); \alpha)  -  \frac{1}{S_{\ell}^\pi } \frac{K\left(\frac{p_t-\pi(x_t)}{h}\right)}{h \pi_0(p_t|x_t)} m(x_t, p_t; \alpha) \right) ,\\
    & S_{\ell}^\pi \triangleq \frac{1}{|\mathcal{I}_\ell|} \sum_{t \in \mathcal{I}_\ell} \frac{K\left(\frac{p_t-\pi(x_t)}{h}\right)}{h \pi_0(p_t|x_t)}, 
    \quad \text{ and } \quad 
    \tilde{W}_T(\pi, \alpha) \triangleq \frac{1}{L} \sum_{\ell=1}^L \tilde{W}^{(\ell)}(\pi, \alpha).
\end{align*}
Then we can decompose regret as 
\begin{align*}
    {\rm Reg}(\hat{\pi}) 
    &= R_{\delta}(\pi^*) - \hat{R}_{\delta}(\hat{\pi}) + \hat{R}_{\delta} (\hat{\pi}) - R_{\delta}(\hat{\pi})\\
    & \leq \left( R_{\delta}(\pi^*) - \hat{R}_{\delta}(\pi^*) \right) + \left(\hat{R}_{\delta} (\hat{\pi}) - R_{\delta}(\hat{\pi})  \right)
    \le 2 \sup_{\pi \in \Pi} |R_{\delta}(\pi) - \hat{R}_{\delta}(\pi)| \\
    &\le 2\sup_{\pi \in \Pi} |R_{\delta}(\pi) - \tilde{R}_{\delta}^h (\pi)|
    + 2\sup_{\pi \in \Pi} |\tilde{R}_{\delta}^{h}(\pi) - \tilde{R}_{\delta}(\pi)| 
    + 2 \sup_{\pi \in \Pi} |\tilde{R}_{\delta}(\pi) - \hat{R}_{\delta}(\pi)|.
\end{align*}
By utilizing Lemmas \ref{lemma: RDRO - tilde RDRO h}, \ref{lemma: sup pi tilde RDRO h - tildeR DRO} and \ref{lemma: tilde RDRO - hat RDRO}, we obtain with probability at least $1-7\epsilon$,
\begin{align*}
    {\rm Reg}(\hat{\pi}) 
    \lesssim
     \overline{\alpha} \left( \log \left( \frac{1}{h}\right) \cdot \left(
    \frac{d_\Pi}{ T h / L} + \sqrt{\frac{d_\Pi}{ T h / L}} 
    + \sqrt{\frac{\log(1/\epsilon)}{ T h / L}} + \frac{\log(1/\epsilon)}{ T h / L } 
    \right) + T^{-\gamma_\pi-\gamma_m} + h^s \right)
    + h^s,
\end{align*}
for $s=1,2$ in Assumption \ref{assumption: Lipschitz Continuity of FZ}, which completes the proof.
\end{proof}
\begin{lemma}\label{lemma: RDRO - tilde RDRO h}
    Under Assumptions \ref{assumption: Unconfoundedness}, \ref{ass: Positive probability}, \ref{assumption: Lipschitz Continuity of FZ}, \ref{assumption: kernel function}, we have
    $\sup_{\pi \in \Pi} |R_{\delta}(\pi) - \tilde{R}_{\delta}^h (\pi)| 
        \lesssim h^s$ \
    for $s=1,2$ in Assumption \ref{assumption: Lipschitz Continuity of FZ}.
\end{lemma}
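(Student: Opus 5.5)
Throughout, $W=W_0$ and $m=m_0$, as in the paper. The plan is to compare the two dual objectives pointwise in $\alpha$ and then pass the bound through the supremum over $\alpha\in(0,\bar\alpha]$. First, simplify $\tilde W_h(\pi,\alpha)$. Condition on $X$ and $P$, and use Assumption \ref{assumption: Unconfoundedness} together with the definition of $m$ as the conditional mean of $\exp(-PY/\alpha)$. The correction term $\frac{K((P-\pi(X))/h)}{h\pi_0(P|X)}\big(\exp(-r/\alpha)-m(X,P;\alpha)\big)$ then has conditional mean zero. Hence
\[
\tilde W_h(\pi,\alpha)=\E_X\big[m(X,\pi(X);\alpha)\big].
\]
This is exactly $W(\pi,\alpha)$, since $m(x,\pi(x);\alpha)=\E[\exp(-\pi(x)Y(\pi(x))/\alpha)\mid X=x]$. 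With the exact outcome regression the bias would therefore vanish.

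If the intended object is the kernel-smoothed version, I would write $\E[m(X,\pi(X)+uh;\alpha)]$ integrated against $K(u)\,du$, after the change of variables $u=(p-\pi(x))/h$ in the $\pi_0$-weighted integral. The bias is then
\[
\int K(u)\,\E_X\big[m(X,\pi(X)+uh;\alpha)-m(X,\pi(X);\alpha)\big]\,du .
\]
Note that $m(x,p;\alpha)=1-(1-e^{-p/\alpha})\big(1-F_Z(p-f(x))\big)$. Under Assumption \ref{assumption: Lipschitz Continuity of FZ}, $m$ is $C^s$ in $p$ with derivatives bounded uniformly in $x$. The first-order term vanishes because $\int uK=0$. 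The remainder is bounded by $C h^s\int |u|^sK(u)\,du$ using Assumption \ref{assumption: kernel function}(ii). This gives $\sup_{\pi,\alpha}|\tilde W_h-W|\le C_1L_mh^s$.

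\textbf{Main obstacle: uniformity in $\alpha$.} The derivatives of $e^{-p/\alpha}$ in $p$ carry factors $1/\alpha$ and $1/\alpha^2$, which blow up as $\alpha\to0$. I would handle this in two ways. First, $p\ge p_{\min}>0$ gives $\sup_\alpha \alpha^{-k}e^{-p_{\min}/\alpha}<\infty$, so these factors stay bounded. Second, I would exploit the fact that the objective is multiplied by $\alpha$.

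\textbf{Transfer to the dual value.} Assumption \ref{ass: Positive probability} gives $W(\pi,\alpha)\ge \Prob(Y=0)\ge\underline\rho$. For $h^s\le \underline\rho/(2C_1L_m)$ this yields $\tilde W_h\ge\underline\rho/2$, and for larger $h$ the bound is trivial because both values lie in $[0,1]$. Then, using $|\log a-\log b|\le |a-b|/\min(a,b)$,
\[
|\sup_\alpha\varphi-\sup_\alpha\tilde\varphi|\le\sup_{\alpha\le\bar\alpha}\alpha\,|\log W-\log\tilde W_h|\le \bar\alpha\,\frac{2}{\underline\rho}\,C_1L_mh^s\lesssim h^s .
\]
To restrict to $\alpha\le\bar\alpha$, I use Lemma \ref{lemma: Strong Duality and concavity}, which places the maximizer in $(0,\bar\alpha]$. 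The same argument, $\tilde\varphi\le 1-\alpha\delta$, applies to the surrogate because $\tilde W_h\ge e^{-1/\alpha}$ still holds. This bound is uniform over $\pi\in\Pi$.
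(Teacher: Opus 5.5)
Your proposal is correct, and its first paragraph alone is a complete proof of a stronger statement. Because $m(X,P;\alpha)=\E[\exp(-r(X,P,Z)/\alpha)\mid X,P]$, and the weight $K(\frac{P-\pi(X)}{h})/(h\pi_0(P\mid X))$ is a nonnegative integrable function of $(X,P)$ (its mean is at most $\int K=1$), the correction term in $\tilde W_h$ has mean zero for every $h>0$. Hence $\tilde W_h(\pi,\alpha)=\E_X[m(X,\pi(X);\alpha)]=W(\pi,\alpha)$, so $\tilde R^h_\delta(\pi)=R_\delta(\pi)$ for every $\pi$, and the supremum in the lemma is exactly $0$.

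The paper takes a different route. It inserts $\tilde W'_h$, which uses $m(X,\pi(X);\alpha)$ in place of $m(X,P;\alpha)$ inside the correction. It then bounds $|\tilde W'_h-W|$ and $|\tilde W_h-\tilde W'_h|$ separately by $C_1L_mh^s$ via a Taylor expansion in $p$. This is where Assumption~\ref{assumption: Lipschitz Continuity of FZ}, the kernel moments, the bound on $\sup_\alpha \alpha\,\partial_P^2 m$, and the restriction $h^s\le\underline\rho/(2C_1L_m)$ enter. Finally it passes through the logarithm using $\tilde W'_h\ge\underline\rho$ and $|\log(1+u)|\le 2|u|$.

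By the tower property, however, the two differences are exact negatives of each other:
$\tilde W'_h-W=-(\tilde W_h-\tilde W'_h)=\E[\frac{K((P-\pi(X))/h)}{h\pi_0(P\mid X)}(m(X,P;\alpha)-m(X,\pi(X);\alpha))]$.
So the paper's triangle inequality bounds a quantity that is identically zero.

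Your route therefore gives a sharper conclusion under fewer hypotheses:
\begin{itemize}
\item no smoothness of $F_Z$, no kernel moments, and no small-$h$ condition;
\item no sensitivity to the edges of $\mathcal{P}$. The paper's change of variables replaces the truncated range $u\in[(p_{\min}-\pi(X))/h,(1-\pi(X))/h]$ by $\mathbb{R}$. For $s=2$ this breaks the cancellation $\int uK(u)\,du=0$ for policies that put mass near an endpoint of $\mathcal{P}$.
\end{itemize}
The paper's Taylor-expansion template is genuinely needed elsewhere: for the smoothing bias that survives once $m$ is replaced by $\hat m^{(\ell)}$, namely the conditional-bias term $\mathcal{E}_{1,2}^{(\ell)}$ in Lemma~\ref{lemma: tilde RDRO - hat RDRO}, where no cancellation occurs. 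Your identity also gives $\tilde W_h=W\ge\underline\rho$ directly wherever the paper later invokes $\tilde W_h\ge\underline\rho/2$.

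You can drop the hedged ``kernel-smoothed'' fallback and the $\bar\alpha$-transfer step. Like the paper, they ignore the truncation of the kernel integral to $\mathcal{P}$, and they are not needed.

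One attribution fix. The mean-zero step uses only the definition of $m$ and the tower property. Assumption~\ref{assumption: Unconfoundedness} is needed for a different step. Together with overlap and the continuous version $m(x,p;\alpha)=1-(1-e^{-p/\alpha})(1-F_Z(p-f(x)))$, it justifies $m(x,\pi(x);\alpha)=\E[\exp(-\pi(x)Y(\pi(x))/\alpha)\mid X=x]$, and hence $\E_X[m(X,\pi(X);\alpha)]=W(\pi,\alpha)$.
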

\begin{proof}[Proof of Lemma \ref{lemma: RDRO - tilde RDRO h}]
    To ease the notation, let $W(\pi, \alpha) := \mathbb{E}_{\bP_0} \left[ \exp{(-r(X, \pi(X), Z)/\alpha)}\right] $.
    Define an alternative approach using $m(X,\pi(X);\alpha)$ instead of $m(X,P;\alpha)$,
    \[
    \tilde{W}'_h(\pi,\alpha)
        \triangleq
         \mathbb{E}_{\mathbf{P}_0 * \pi_0} \left[ m(X, \pi(X); \alpha) + \frac{K \left( \frac{P - \pi(X)}{h} \right)}{h \pi_0(P| X)} \left( \exp(-r(X,P,Z)/\alpha) - m(X, \pi(X); \alpha) \right) \right] .
    \]
    First, we show that $\sup_{\alpha>0} \left\{ \alpha |\tilde{W}'_h(\pi, \alpha) - W(\pi, \alpha)| \right\}\le \mathcal{O}(h)$. 
    Recalling that the conditional expectation of the transformed reward is
    $ m(X,P;\alpha) := \mathbb{E}_{\bP_0} \left[ \exp(-r(X,P,Z)/\alpha) \mid X, P \right]$.
    Then we can rewrite as 
    \begin{align}
        W(\pi, \alpha) &= \mathbb{E}_{\bP_0} \left[ \exp{(-r(X,\pi(X), Z)/\alpha)}\right] \nonumber\\
            &= \E_{X}\left[ \E_{Z}[\exp{(}-r(X,\pi(X), Z)/\alpha)|X] \right] \nonumber\\
            &= \E_X\left[ \E_Z[\exp{(}-r(X,P, Z)/\alpha)|X, P=\pi(X)] \right] \nonumber\\
            & = \mathbb{E}_{X} [m(X, \pi(X); \alpha)] \nonumber\\
            &= \E_X\left[ \E_Z[\exp{(-r(P, Z)/\alpha)}|X, P=p] \cdot \mathbf{1}\{\pi(X)=p\} \right] \nonumber\\
            &= \E_X\left[ \int_{p_{\min}}^1 \E_Z[\exp{(-r(P, Z)/\alpha)}|X, P=p] \cdot \mathbf{1}\{\pi(X)=p\} \mathrm{d}p \right], \nonumber\\
        \tilde{W}'_h(\pi, \alpha) 
        &=\E_{\bP_0} \left[ m(X, \pi(X); \alpha) + \frac{1}{h} \int_{p_{\min}}^1 K\left(\frac{p-\pi(X)}{h }\right) \left( e^{-r(X,p,Z)/\alpha} - m(X,\pi(X);\alpha) \right) \mathrm{d}p \right] \nonumber\\
        &=  \frac{1}{h}\int_{p_{\min}}^1  \E_{\bP_0} \left[ K\left(\frac{p-\pi(X)}{h }\right)  e^{-r(X,p,Z)/\alpha} \right] \mathrm{d}p  \quad \text{ Assumption \ref{assumption: kernel function} and Fubini-Tonelli} \nonumber\\
        &= \frac{1}{h}\int_{p_{\min}}^1  \E_{X} \left[  K\left(\frac{p-\pi(X)}{h }\right) \E_{\bP_0|X} \left[  e^{-r(X,p,Z)/\alpha} \big | X \right]  \right] \mathrm{d}p \label{eq: W'h expectation}\\
        &= \mathbb{E}_{X}\left[ \int_{p_{\min}}^1 \frac{K(\frac{p-\pi(X)}{h}) m(X,p;\alpha)  }{h}  \mathrm{d}p\right] 
        =\mathbb{E}_{X} \left[ \int_{-\infty}^{\infty} m(X, t h + \pi(X) ; \alpha) K(t) dt \right]. \nonumber
    \end{align}
    \underline{Case 1: Second-order bounded derivative holds in Assumption \ref{assumption: Lipschitz Continuity of FZ}.}
    Suppose that the second-order derivative of $m(X, P;\alpha)$ is bounded by $L_m$, and we delay the justification of $L_m$ later. By the second-order Taylor expansion of $m(X,P;\alpha)$ around $P = \pi(X)$, we have
    \begin{align}
        |\tilde{W}'_h(\pi, \alpha) - W(\pi, \alpha)| 
        = \left|\mathbb{E}_{\bP_0} \left[ \int  m(X, t h + \pi(X);\alpha) - m(X, \pi(X);\alpha)  K(t) dt \right]\right| 
        \leq C_1 L_m h^2  \label{eq: tilde W'h -W leq h}
    \end{align}
    where we use $\int u K(u) du = 0$ and $\int u^2 K(u) du \leq C_1 < \infty$ from Assumption \ref{assumption: kernel function}.
    Now, we provide the specific $L_m$. We derive the partial derivative as
    \begin{align}
        &-1 \leq
        \frac{\partial m(X,P;\alpha)}{\partial P} = -\frac{1}{\alpha }e^{-P/\alpha} \cdot \mathbb{P}(Z \ge P - f(X) | X) + \left(1 - e^{-P/\alpha}\right) f_{Z}(P - f(X)) 
        \leq L_Z, \label{eq: Lm lips of m} \\
        & 0 \leq \frac{\partial^2 m}{\partial P^2} = f'_Z(P-f(X)) (1 - e^{-P/\alpha}) + \frac{2}{\alpha} e^{-P/\alpha} f_Z(P-f(X)) + \frac{1}{\alpha^2} e^{-P/\alpha} \mathbb{P}(Z \geq P-f(X) | X) \nonumber\\
        &\leq L_Z + \frac{2}{e p_{\min}}+ \frac{4}{e^2 p_{\min}^2}
    \end{align}
    then $L_m \equiv \max\{1, L_Z + \frac{2}{e p_{\min}}+ \frac{4}{e^2 p_{\min}^2} \}$, where $f_{Z}, f'_Z$ are pdf and pdf derivative of $Z$, and $L_Z$-bounded from Assumption \ref{assumption: Lipschitz Continuity of FZ}.
    Multiplying by $\alpha$, we get
    \[
       \sup_{\alpha>0} \left\{ \alpha|\tilde{W}'_h(\pi, \alpha) - W(\pi, \alpha)| \right\}
        \le \sup_{\alpha>0}  \alpha \left| \mathbb{E} \left[ \int   m(X, t h + \pi(X);\alpha) - m(X, \pi(X);\alpha) K(t) dt \right]  \right| 
        \le C_1 h^2 \cdot\sup_{\alpha>0} \left\{ \left|\alpha \frac{\partial^2 m}{\partial P^2} \right| \right\}. 
    \] 
    The following step is to show that $\sup_{\alpha>0}\{\alpha  \frac{\partial^2 m}{\partial P^2}\}$ is a finite constant. 
    Note that $p\in[p_{\min},1]$, then
    \begin{align}
       0 \leq \alpha \frac{\partial^2 m}{\partial P^2}
        &= \alpha f'_Z(P-f(X)) (1 - e^{-P/\alpha}) + 2 e^{-P/\alpha} f_Z(P-f(X)) + \frac{1}{\alpha} e^{-P/\alpha} \mathbb{P}(Z \geq P-f(X) | X) \nonumber\\
        &\leq \alpha\left(1 - e^{-1/\alpha}\right) L_Z + 2 L_Z + \frac{1}{\alpha} L_Z + \frac{1}{e p_{\min}} 
        \leq 3L_Z + \frac{1}{e p_{\min}} , \quad \forall \alpha>0. \label{eq: Lm lips of alpha m}
    \end{align}
    Hence, we conclude that $\sup_{\alpha>0}\{\alpha  \frac{\partial^2 m}{\partial P^2}\} = 3L_Z + \frac{1}{e p_{\min}} $. With some abuse of notation, let $L_m := 1+ 3L_Z + \frac{2}{e p_{\min}} + \frac{4}{e^2 p_{\min}^2}$. Then, we get
    \begin{align}\label{eq: sup alpha tildeW'h-W leq h}
        \sup_{\alpha>0} \left\{ \alpha |\tilde{W}'_h(\pi, \alpha) - W(\pi, \alpha)| \right\} \le  C_1 L_m h^2.
    \end{align}  
    By a similar process, we obtain
    \begin{align}
        &\left| \tilde{W}_h(\pi, \alpha)- \tilde{W}'_h(\pi, \alpha) \right|
         =\left| \mathbb{E}_{\mathbf{P}_0 * \pi_0} \left[  \frac{K \left( \frac{P - \pi(X)}{h} \right)}{h \pi_0(P | X)} \left(  m(X, \pi(X); \alpha)- m(X, P; \alpha) \right) \right] \right| \nonumber\\
        &\leq  \left| \frac{1}{h} \E_X \left[ \int_{p_{\min}}^1 K \left( \frac{p - \pi(X)}{h} \right)  m(X, \pi(X); \alpha) - m(X, p; \alpha) \,  dp \Big| X\right]\right| \nonumber\\
        & =  \left| \E_X \left[  \int K(u)  m(X, \pi(X); \alpha) - m(X, \pi(X) + u h; \alpha) \,du \Big| X\right] \right|  
        \le C_1 L_m h^2 , \quad \text{ and} \label{eq: tilde W'h-tilde W leq h} \\
        &\sup_{\alpha>0} \left\{ \alpha |\tilde{W}'_h(\pi, \alpha) - \tilde{W}_h(\pi, \alpha)| \right\}  \le  C_1 L_m h^2.  \label{eq: sup alpha tilde W'h-tilde Wh leq h}
    \end{align}
    To proceed, we can pick $h^2 \leq \frac{\underline{\rho}}{2 C_1 L_m}$, the \eqref{eq: tilde W'h -W leq h} and \eqref{eq: tilde W'h-tilde W leq h} imply that \[
     \left| \tilde{W}'_h(\pi, \alpha) - W(\pi, \alpha) \right| \leq \frac{\underline{\rho}}{2 }, \quad
     \left| \tilde{W}_h(\pi, \alpha)- \tilde{W}'_h(\pi, \alpha) \right| \leq \frac{\underline{\rho}}{2 } .
    \]
    Therefore, we have
    \begin{align*} 
        | R_{\delta}(\pi) - \tilde{R}_{\delta}^h(\pi) | 
        &\le \sup_{\alpha>0} \left| (-\alpha \log \tilde{W}_h(\pi, \alpha) - \alpha\delta) - (-\alpha \log W(\pi, \alpha) - \alpha\delta) \right| \\ 
        &= \sup_{\alpha>0} \left| \alpha \log\left( \frac{W(\pi, \alpha)}{\tilde{W}_h(\pi, \alpha)} \right) \right| 
        \leq \sup_{\alpha>0} \left| \alpha \log\left( \frac{W(\pi, \alpha)}{\tilde{W}'_h(\pi, \alpha)} \right) \right| +  \sup_{\alpha>0} \left| \alpha \log\left( \frac{\tilde{W}_h(\pi, \alpha)}{\tilde{W}'_h(\pi, \alpha)} \right) \right| .
    \end{align*} 
    For the first term, \begin{align*}
        \sup_{\alpha>0} \left| \alpha \log\left( \frac{W(\pi, \alpha)}{\tilde{W}'_h(\pi, \alpha)} \right) \right|
        &= \sup_{\alpha>0} \left| \alpha \log\left( 1 + \frac{W(\pi, \alpha) - \tilde{W}'_h(\pi, \alpha)}{\tilde{W}'_h(\pi, \alpha)} \right) \right|\\
        &\overset{(a)}{\leq} 2 \sup_{\alpha>0} \left\{ \alpha \left| \frac{W(\pi, \alpha) - \tilde{W}'_h(\pi, \alpha)}{\tilde{W}'_h(\pi, \alpha)}  \right|  \right\} 
        \overset{(b)}{\leq} \frac{2}{\underline{\rho}} \sup_{\alpha>0} \left\{ \alpha \left| W(\pi, \alpha) - \tilde{W}'_h(\pi, \alpha)  \right|  \right\} 
        \overset{(c)}{\leq} \frac{2}{\underline{\rho}} C_1 L_m h^2 ,
    \end{align*}
    where step (a)-(b) follow from the inequality $|\log(1+u)| \le 2|u|$ for any $|u| \le 1/2$ and $\tilde{W}'_h(\pi, \alpha) \ge \underline{\rho}$, which is continued from \eqref{eq: W'h expectation}, 
    \begin{align}
        \tilde{W}'_h(\pi,\alpha)
        &= \frac{1}{h}\int_{p_{\min}}^1  \E_{X} \left[  K\left(\frac{p-\pi(X)}{h }\right) \E_{\bP_0|X} \left[  e^{-r(p,Z)/\alpha} \big | X \right]  \right] \mathrm{d}p \nonumber\\
        &= \frac{1}{h}\int_{p_{\min}}^1  \E_{X} \left[  K\left(\frac{p-\pi(X)}{h }\right) \left( e^{-p/\alpha} \cdot \mathbf{P_0}(v(X,Z) \ge p | X) + \mathbf{P_0}(v(X,Z) < p | X) \right) \right]  \mathrm{d}p  \nonumber\\
        &\geq \frac{\underline{\rho}}{ h}\int_{p_{\min}}^1  \E_{X} \left[  K\left(\frac{p-\pi(X)}{h }\right)  \right]  \mathrm{d}p \quad  \text{ by Assumption \ref{ass: Positive probability}} \nonumber\\
        &= \frac{\underline{\rho}}{ h}  \E_{X} \left[\int_{p_{\min}}^1  K\left(\frac{p-\pi(X)}{h }\right) \mathrm{d}p   \right]  
        \ge  \underline{\rho}. \label{eq: tilde W'h geq b}
    \end{align}
    And step (c) directly follows from \eqref{eq: sup alpha tildeW'h-W leq h}. So does the second item.
    Combining above two terms \eqref{eq: sup alpha tildeW'h-W leq h} and \eqref{eq: sup alpha tilde W'h-tilde Wh leq h}, we get
    $| R_{\delta}(\pi) - \tilde{R}_{\delta}^h(\pi) | 
        \le  \frac{4}{\underline{\rho}} C_1 L_m h^2$.

    \underline{Case 2: First-order bounded derivative holds in Assumption \ref{assumption: Lipschitz Continuity of FZ}.} By a similar process, we have
    $| R_{\delta}(\pi) - \tilde{R}_{\delta}^h(\pi) | 
        \le  \frac{4}{\underline{\rho}} C_1 L_m h$, 
    where we let $L_m \equiv \max\{1,L_Z\}$ from \eqref{eq: Lm lips of m}.
\end{proof}
\begin{lemma}\label{lemma: tilde RDRO h - tildeR DRO} 
    Under Assumption \ref{ass: Positive probability}, we have
    \begin{align*}
        |\tilde{R}_{\rm DRO}^h(\pi) - \tilde{R}_{\rm DRO}(\pi) | \le \frac{4}{\underline{\rho}} \frac{1}{L} \sum_{\ell=1}^L &\Big( 2d_{KS}(F_{\mathbb{Q}^\pi}, F_{\hat{\mathbb{Q}}_\ell^\pi})  + d_{KS}(F_{\bP_0}, F_{\hat{\mathbb{P}}_\ell}) \Big),
    \end{align*}
    where $\dKS(\cdot,\cdot)$ denotes the KS distance, defined as $\dKS(F_1, F_2)\triangleq \sup_{s\in[0,1]} |F_1(s)-F_2(s)|$,  $\hat{\mathbb{P}}_\ell$ is a uniform empirical measure on $\hat{\Omega}_\ell:= \{(z_t, x_t, p_t)\}_{t\in \mathcal{I}_\ell}$, $\ell=1,...,L$, and the kernel-weighted measures are defined as   
    \begin{align*}
        \frac{\mathrm{d}\mathbb{Q}^\pi}{\mathrm{d} (\bP_0 * \pi_0 ) } \triangleq \frac{K\left(\frac{P - \pi(X)}{h}\right)}{h \pi_0(P | X)}, \quad \text{and} \quad
        \frac{\mathrm{d}\hat{\mathbb{Q}}_\ell^\pi}{\mathrm{d}\hat{\mathbb{P}}_\ell} (s) \triangleq \frac{1}{S_\ell^\pi} \cdot \frac{K\left(\frac{p_t-\pi(x_t)}{h}\right)}{h\pi_0(p_t|x_t)} .
    \end{align*}
\end{lemma}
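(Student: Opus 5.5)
The plan is to reduce the comparison of the two dual objectives to a comparison of the two exponential moment functions $\tilde{W}_h(\pi,\alpha)$ and $\tilde{W}_T(\pi,\alpha)$, uniformly in $\alpha$ after multiplying by $\alpha$. The argument mirrors the end of the proof of Lemma \ref{lemma: RDRO - tilde RDRO h}. First, since both $\tilde{R}^h_{\delta}$ and $\tilde{R}_\delta$ are suprema over $\alpha\ge 0$ of $-\alpha\log(\cdot)-\alpha\delta$, we have
\[
|\tilde{R}^h_{\delta}(\pi)-\tilde{R}_{\delta}(\pi)|\le \sup_{\alpha>0}\Big|\alpha\log\frac{\tilde{W}_T(\pi,\alpha)}{\tilde{W}_h(\pi,\alpha)}\Big| .
\]
Second, we use $|\log(1+u)|\le 2|u|$ for $|u|\le 1/2$ together with a lower bound of order $\underline{\rho}$ on the denominator. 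This lower bound follows from \eqref{eq: tilde W'h geq b} and the choice of $h$ made there. These two facts reduce the claim to bounding $\sup_{\alpha>0}\alpha|\tilde{W}_T-\tilde{W}_h|$ by $\frac{1}{L}\sum_\ell(2\dKS(F_{\mathbb{Q}^\pi},F_{\hat{\mathbb{Q}}^\pi_\ell})+\dKS(F_{\bP_0},F_{\hat{\mathbb{P}}_\ell}))$. Since $\tilde{W}_T$ is an average over folds, it suffices to treat a single fold $\ell$ and then average.

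For a fixed fold, I will write both quantities as signed combinations of three expectations. The population quantity is
\[
\tilde{W}_h=\E_{\mathbb{Q}^\pi}[e^{-r/\alpha}]+\E_{\bP_0}[m(X,\pi(X);\alpha)]-\E_{\mathbb{Q}^\pi}[m(X,P;\alpha)].
\]
The empirical quantity $\tilde{W}^{(\ell)}$ is the same expression with $\mathbb{Q}^\pi$ replaced by $\hat{\mathbb{Q}}^\pi_\ell$ and $\bP_0$ replaced by $\hat{\mathbb{P}}_\ell$. This uses exactly the self-normalization by $S^\pi_\ell$ in the definition of $\tilde{W}^{(\ell)}$. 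The two $\mathbb{Q}$-terms account for the factor $2$ in the statement, and the single $\bP$-term accounts for the factor $1$.

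The key step is to control each difference by a KS distance, uniformly in $\alpha$ after the factor $\alpha$. For a scalar $s\in[0,1]$ (the revenue), the layer-cake identity gives
\[
e^{-s/\alpha}=e^{-1/\alpha}+\int_s^1\tfrac{1}{\alpha}e^{-u/\alpha}\,du .
\]
Hence for two distributions $F_1,F_2$ on $[0,1]$,
\[
\Big|\int e^{-s/\alpha}\,dF_1-\int e^{-s/\alpha}\,dF_2\Big|=\Big|\int_0^1\tfrac{1}{\alpha}e^{-u/\alpha}\big(F_1(u)-F_2(u)\big)\,du\Big|\le (1-e^{-1/\alpha})\,\dKS(F_1,F_2).
\]
Multiplying by $\alpha$ and using $\alpha(1-e^{-1/\alpha})\le 1$ for all $\alpha>0$ gives a bound by $\dKS$ that is free of $\alpha$.

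For the $m$-terms, I will use $m(x,p;\alpha)=\E[e^{-r(x,p,Z)/\alpha}\mid X=x,P=p]$ to write $\E_{\mathbb{Q}}[m(X,P;\alpha)]$ as $\int e^{-s/\alpha}\,dF(s)$. Here $F$ is the revenue CDF of the law obtained by drawing $(X,P)$ from $\mathbb{Q}$ (or $\hat{\mathbb{Q}}_\ell$) and $Z$ from $F_Z$ independently. The same identity then applies, with the corresponding revenue CDFs being those appearing in the statement. I read $F_{\mathbb{Q}^\pi}$, $F_{\bP_0}$, etc.\ as CDFs of the revenue, which matches the support $[0,1]$ in the definition of $\dKS$. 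The term $\E_{\bP_0}[m(X,\pi(X);\alpha)]$ is handled likewise, with the revenue evaluated at price $\pi(X)$. Summing the three bounds gives $\alpha|\tilde{W}^{(\ell)}-\tilde{W}_h|\le 2\dKS(F_{\mathbb{Q}^\pi},F_{\hat{\mathbb{Q}}^\pi_\ell})+\dKS(F_{\bP_0},F_{\hat{\mathbb{P}}_\ell})$.

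The main obstacle is the log-linearization, which requires $|u|=|\tilde{W}_T-\tilde{W}_h|/\tilde{W}_h\le 1/2$ and a lower bound on $\tilde{W}_h$ valid for all $\alpha$. I will handle this by cases. The lower bound on $\tilde{W}_h$ holds with constant $\underline{\rho}/2$, by \eqref{eq: tilde W'h geq b} and \eqref{eq: tilde W'h-tilde W leq h}. When the KS sum is at most $\underline{\rho}/4$, the ratio bound holds because $\alpha(1-e^{-1/\alpha})\le1$ and the unmultiplied difference is bounded by the same KS sum. This yields the constant $\frac{2}{\underline{\rho}/2}=\frac{4}{\underline{\rho}}$. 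When the KS sum exceeds $\underline{\rho}/4$, the right-hand side is at least $1$. The left-hand side is at most $1$, because both objectives lie in $[0,1]$: each is nonnegative (take $\alpha\to0$) and bounded by $1$ since revenues lie in $[0,1]$. So the inequality holds trivially in that case, which closes the argument.
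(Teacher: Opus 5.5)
Your route is the paper's: reduce to $\sup_{\alpha>0}\alpha|\tilde{W}_T-\tilde{W}_h|$ via $|\log(1+u)|\le 2|u|$ and a lower bound on $\tilde{W}_h$, split into the two $\mathbb{Q}$-terms and the $\bP$-term, and bound each by the layer-cake identity and $\alpha(1-e^{-1/\alpha})\le 1$. Your Case 1 secures $|u|\le 1/2$ more cleanly than the paper does. The paper invokes a separate concentration bound and a choice of $h$ to get $|\tilde{W}_T-\tilde{W}_h|\le\underline{\rho}/4$ on a high-probability event; you get it from the KS bound itself.

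\textbf{The gap is Case 2.} The claim that $\tilde{R}_\delta(\pi)\in[0,1]$ is false. $\tilde{W}^{(\ell)}(\pi,\alpha)$ is not an average of values $e^{-r/\alpha}$. It is the doubly robust combination $\E_{\hat{\mathbb{Q}}^\pi_\ell}[e^{-r/\alpha}]+\E_{\hat{\mathbb{P}}_\ell}[m(X,\pi(X);\alpha)]-\E_{\hat{\mathbb{Q}}^\pi_\ell}[m(X,P;\alpha)]$ of three numbers in $[e^{-1/\alpha},1]$. So it is only guaranteed to lie in $[2e^{-1/\alpha}-1,\,2-e^{-1/\alpha}]$, which contains $0$ for small $\alpha$. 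For example, suppose every kernel-weighted observation in a fold is a purchase. Then as $\alpha\to0$ the first term tends to $0$, and $\tilde{W}^{(\ell)}$ tends to an unweighted minus a kernel-weighted average of no-purchase probabilities, which need not be positive.

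Once $\tilde{W}_T(\pi,\cdot)$ approaches or crosses $0$, $-\alpha\log\tilde{W}_T$ is unbounded or undefined. Your bound $\tilde{R}_\delta\le 1$ silently used $\tilde{W}_T\ge e^{-1/\alpha}$. Meanwhile the right-hand side never exceeds $12/\underline{\rho}$. So the inequality cannot hold unconditionally, and no case split rescues it. It holds only on an event where the log-linearization is valid. The paper's proof restricts to such an event (a high-probability event plus a constraint on $h$), even though the lemma statement omits it.

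The fix is to drop Case 2 and state the lemma on the event $\{\frac{1}{L}\sum_{\ell}(2\dKS(F_{\mathbb{Q}^\pi},F_{\hat{\mathbb{Q}}^\pi_\ell})+\dKS(F_{\bP_0},F_{\hat{\mathbb{P}}_\ell}))\le\underline{\rho}/4\}$, which your Case 1 covers. You also need no restriction on $h$ for the denominator. The kernel correction in $\tilde{W}_h$ has conditional mean zero given $(X,P)$, so $\tilde{W}_h(\pi,\alpha)=\E[m(X,\pi(X);\alpha)]\ge\underline{\rho}$ exactly.

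Two further steps are not exact as written, though the paper's proof passes over them in the same way.

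First, consider the $m$-terms. Drawing $(X,P)$ from $\hat{\mathbb{Q}}^\pi_\ell$ with an independent $Z\sim F_Z$ gives the revenue CDF $s\mapsto\frac{1}{|\mathcal{I}_\ell|S^\pi_\ell}\sum_{t\in\mathcal{I}_\ell}\frac{K((p_t-\pi(x_t))/h)}{h\pi_0(p_t\mid x_t)}\Prob(r(x_t,p_t,Z)\le s)$. This is $F_{\hat{\mathbb{Q}}^\pi_\ell}$ averaged over the $z_t$, not $F_{\hat{\mathbb{Q}}^\pi_\ell}$ itself, which uses the realized $z_t$. The same holds for $\hat{\mathbb{P}}_\ell$ in $\E_{\hat{\mathbb{P}}_\ell}[m(X,\pi(X);\alpha)]$. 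So your argument bounds these two terms by KS distances to smoothed empirical CDFs, not by the ones in the statement.

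Second, cancelling the constant $e^{-1/\alpha}$ in your identity requires equal total mass. But $\mathbb{Q}^\pi$ has mass $\E_X\int_{(p_{\min}-\pi(X))/h}^{(1-\pi(X))/h}K(u)\,du$, generally below $1$ because kernel mass leaks outside $\mathcal{P}$, whereas $\hat{\mathbb{Q}}^\pi_\ell$ is self-normalized. Bounding the two $\mathbb{Q}$-terms separately then leaves $\alpha e^{-1/\alpha}$ times the mass gap, which is not bounded uniformly in $\alpha$. Pairing them first avoids this, since $\E_{\mathbb{Q}^\pi}[e^{-r/\alpha}]=\E_{\mathbb{Q}^\pi}[m(X,P;\alpha)]$ by the tower property.
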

\begin{proof}[Proof of Lemma \ref{lemma: tilde RDRO h - tildeR DRO}]
        For simplicity, we denote the first term in $\tilde{W}_T(\pi, \alpha)$ as \[
    \tilde{V}_1 (\pi, \alpha) \triangleq \frac{1}{L} \sum_{\ell=1}^L  \frac{1}{|\mathcal{I}_\ell| S_{\ell}^\pi } \sum_{t \in \mathcal{I}_\ell}  \frac{K\left(\frac{p_t - \pi(x_t)}{h}\right)}{h \pi_0(p_t | x_t)} \exp(-p_t y_t / \alpha)
        \]
    and the second term as \[
    \tilde{V}_2 (\pi, \alpha) \triangleq \frac{1}{L} \sum_{\ell=1}^L \frac{1}{|\mathcal{I}_\ell|} \sum_{t \in \mathcal{I}_\ell} \left( m(x_t, \pi(x_t); \alpha) - \frac{1}{S_{\ell}^\pi} \frac{K(\frac{p_t-\pi(x_t)}{h})}{h \pi_0(p_t|x_t)} m(x_t, p_t; \alpha) \right),
    \]
    so that $ \tilde{W}_T(\pi, \alpha) \equiv \tilde{V}_1 (\pi, \alpha) + \tilde{V}_2 (\pi, \alpha).$
    Similarly, we write 
    \[
    \tilde{V}^h_1 (\pi, \alpha) \triangleq \E_{\bP_0*\pi_0}\left[\frac{K(\frac{P-\pi(X)}{h})}{h \pi_0(P | X)} \exp (-P\cdot Y / \alpha) \right]
    , \quad
    \tilde{V}^h_2 (\pi, \alpha) \triangleq \E_{\bP_0*\pi_0} \left[ m(X, \pi(X); \alpha) -  \frac{K(\frac{P-\pi(X)}{h})}{h \pi_0(P | X)} m(X, P; \alpha) \right],
    \]
    so that $ \tilde{W}_h(\pi, \alpha)\equiv \tilde{V}^h_1 (\pi, \alpha) + \tilde{V}^h_2 (\pi, \alpha)$.
    
    We decompose $\tilde{W}_T(\pi, \alpha)$ and $\tilde{W}_h(\pi, \alpha)$ into three distinct components based on the following measures.
    Denote the CDF of $r(X,P,Z)$ under the measure $\mathbb{Q}^\pi$ as $F_{\mathbb{Q}^\pi}(s)$,
    \begin{align*}
    F_{\mathbb{Q}^\pi}(s) 
    \triangleq \mathbb{Q}^\pi(r(X,P,Z) \le s) 
    = \E_{\mathbb{Q}^\pi}[\mathbf{1}{\{r(X,P,Z) \le s\}}] 
    = \E_{\mathbf{P}_0 * \pi_0} \left[ \mathbf{
    1}{\{r(X,P,Z) \le s\}} \cdot \frac{K\left(\frac{P - \pi(X)}{h}\right)}{h \pi_0(P | X)} \right].
    \end{align*}
    Hence, we can calculate the expectation of the function $g(X,Z,P) = \exp(-r(X,P,Z)/\alpha)$ with respect to $\mathbb{Q}^\pi$ as follows:
    \begin{align*}
        \tilde{V}^h_1 (\pi, \alpha)
        = \E_{\mathbb{Q}^\pi}[\exp(-r(X,P,Z)/\alpha)].
    \end{align*}  
    Similarly, denote the CDF of $r(X,P,Z)$ under the measure $\hat{\mathbb{Q}}_\ell^\pi$ as $F_{\hat{\mathbb{Q}}_\ell^\pi}(s)$,
    \begin{align*}
    F_{\hat{\mathbb{Q}}_\ell^\pi}(s)
    &\triangleq \hat{\mathbb{Q}}_\ell^\pi(r \le s)
    = \E_{\hat{\mathbb{Q}}_\ell^\pi}[\mathbf{1}\{r \le s\}] 
    = \E_{\hat{\mathbb{P}}_\ell} \left[ \mathbf{1}\{r \le s\} \cdot \frac{\mathrm{d}\hat{\mathbb{Q}}_\ell^\pi}{\mathrm{d}\hat{\mathbb{P}}_\ell} (t) \right] 
    = \frac{1}{|\mathcal{I}_\ell| S_{\ell}^\pi} \sum_{t \in \mathcal{I}_\ell} \frac{K\left(\frac{p_t-\pi(x_t)}{h}\right)}{h\pi_0(p_t|x_t)}  \mathbf{1}\{r_t \le s\},
    \end{align*}
    which implies that
    \begin{align*}
    \E_{\hat{\mathbb{Q}}_\ell^\pi}[\exp(-r/\alpha)] 
    = \E_{\hat{\mathbb{P}}_\ell} \left[ \exp(-r/\alpha)      \frac{\mathrm{d}\hat{\mathbb{Q}}_\ell^\pi}{\mathrm{d}\hat{\mathbb{P}}_\ell} (t) \right] 
    = \frac{1}{|\mathcal{I}_\ell| S_{\ell}^\pi} \sum_{t \in \mathcal{I}_\ell}  \frac{K\left(\frac{p_t-\pi(x_t)}{h}\right)}{h\pi_0(p_t|x_t)}  \exp(-r_t/\alpha) ,
    \ \tilde{V}_1 (\pi, \alpha) = \frac{1}{L} \sum_{\ell=1}^L \E_{\hat{\mathbb{Q}}_\ell^\pi}[\exp(-r/\alpha)].
    \end{align*}
    Similarly, using the empirical measures $\hat{\mathbb{P}}_\ell$ and $\hat{\mathbb{Q}}_\ell^\pi$ defined above, we can rewrite
    \[
    \tilde{V}_2 (\pi, \alpha) 
        = \frac{1}{L} \sum_{\ell=1}^L \left( \mathbb{E}_{\hat{\mathbb{P}}_\ell} [m(X, \pi(X); \alpha)] - \mathbb{E}_{\hat{\mathbb{Q}}_\ell^\pi} [m(X, P; \alpha)] \right), \quad 
        \tilde{V}_2^h (\pi, \alpha) =  \mathbb{E}_{\bP_0} [m(X, \pi(X); \alpha)] - \mathbb{E}_{\mathbb{Q}^\pi} [m(X, P; \alpha)].
    \]
    
    Picking $h^s\leq \frac{\underline{\rho}}{2C_1 L_m}$ from \eqref{eq: tilde W'h-tilde W leq h}, we have
    $\left| \tilde{W}_h(\pi, \alpha) - \tilde{W}'_h(\pi, \alpha) \right|
        \leq \underline{\rho}/2$,
    and from \eqref{eq: tilde W'h geq b}, it follows from $\tilde{W}_h(\pi, \alpha) \geq \tilde{W}'_h(\pi, \alpha) -\underline{\rho}/2 \geq \underline{\rho}/2$. 
    Furthermore, by a similar process in \eqref{eq: tildel W'T W'h}, we can also get
    \begin{align*}
        \left| \tilde{W}_T(\pi,\alpha) - \tilde{W}_h(\pi,\alpha) \right|  = \mathcal{O}\left( \frac{d_\Pi \log(1/\epsilon)}{Th}\right),
    \end{align*}
    with the probability at least $1-\epsilon$.
    To proceed, we can pick $\Omega\left(\frac{d_\Pi \log(1/\epsilon)}{T \underline{\rho}} \right) \leq h \leq \left(\frac{\underline{\rho}}{2C_1 L_m}\right)^{1/s}$ such that
    $\left| \tilde{W}_T(\pi, \alpha) - \tilde{W}_h(\pi, \alpha)  \right| \leq \underline{\rho}/4$.
    By a similar argument in proof of Lemma \ref{lemma: RDRO - tilde RDRO h}, we can partition $  \alpha \left| \tilde{W}_T(\pi, \alpha) - \tilde{W}_h(\pi, \alpha)  \right| $ into three terms.
    \begin{align}
    &|\tilde{R}^h_{\rm DRO}(\pi) - \tilde{R}_{\rm DRO}(\pi) | 
    = \left| \sup_{\alpha>0} \left\{ -\alpha \log\tilde{W}_h(\pi, \alpha) -\alpha\delta \right\} - \sup_{\alpha>0} \left\{ -\alpha \log\tilde{W}_T(\pi, \alpha) -\alpha\delta \right\} \right| \nonumber\\
    &\le \sup_{\alpha > 0} \left| \alpha \log \left( \frac{\tilde{W}_T(\pi, \alpha)}{\tilde{W}_h(\pi, \alpha)} \right) \right| 
    \le \frac{4}{\underline{\rho}} \sup_{\alpha > 0} \left\{ \alpha \left| \tilde{W}_T(\pi, \alpha) - \tilde{W}_h(\pi, \alpha)  \right| \right\} \nonumber\\
    &\leq \frac{4}{\underline{\rho}} \sup_{\alpha > 0} \left\{ \alpha \left| \frac{1}{L} \sum_{\ell=1}^L ( \mathbb{E}_{\hat{\mathbb{Q}}_T^\pi} - \mathbb{E}_{\mathbb{Q}^\pi}) \left[ \exp(-r(X,P,Z)/\alpha) \right] \right| \right\} 
    + \frac{4}{\underline{\rho}} \sup_{\alpha > 0} \left\{ \alpha \left| \frac{1}{L} \sum_{\ell=1}^L ( \mathbb{E}_{\hat{\mathbb{Q}}_\ell^\pi} - \mathbb{E}_{\mathbb{Q}^\pi}) \left[ m(X,P;\alpha) \right]\right| \right\}  \nonumber\\
    & \quad + \frac{4}{\underline{\rho}} \sup_{\alpha > 0} \left\{ \alpha \left| \frac{1}{L} \sum_{\ell=1}^L ( \mathbb{E}_{\hat{\mathbb{P}}_\ell} -\mathbb{E}_{\bP_0} )[m(X, \pi(X); \alpha)] \right| \right\} .
    \label{eq:dual_bound}
    \end{align}

    Note that for any non-negative random variable $S \in [0, 1]$ with CDF $F_S$, and a differentiable function $g(s)$, we have $\mathbb{E}[g(S)] = g(0) + \int_0^1 g'(s)(1 - F_S(s)) ds$. 
    In our setting, $g(s) = \exp(-s/\alpha)$. 
    Using the definition of the Kolmogorov-Smirnov distance $d_{KS}(F_1, F_2) = \sup_s |F_1(s) - F_2(s)|$, we bound the first term $ \alpha ( \mathbb{E}_{\hat{\mathbb{Q}}_\ell^\pi} - \mathbb{E}_{\mathbb{Q}^\pi}) \left[ \exp(-r(P,Z)/\alpha) \right]$.
    \begin{align*}
    \left| \alpha ( \mathbb{E}_{\hat{\mathbb{Q}}_\ell^\pi} - \mathbb{E}_{\mathbb{Q}^\pi}) [e^{-r/\alpha}] \right|
    &= \left| \alpha \int_0^1 \left( -\frac{1}{\alpha} e^{-s/\alpha} \right) (F_{\mathbb{Q}^\pi}(s) - F_{\hat{\mathbb{Q}}_\ell^\pi}(s)) ds \right| \\
    &\le d_{KS}(F_{\mathbb{Q}^\pi}, F_{\hat{\mathbb{Q}}_\ell^\pi}) \cdot \int_0^1 e^{-s/\alpha} ds 
    = d_{KS}(F_{\mathbb{Q}^\pi}, F_{\hat{\mathbb{Q}}_\ell^\pi}) \cdot \alpha(1 - e^{-1/\alpha}).
    \end{align*}
    Taking the limit by L'Hôpital, $ \lim_{\alpha \to \infty} \frac{1 - e^{-1/\alpha}}{1/\alpha} = 1$.
    Thus, we have $\sup_{\alpha > 0} \left\{ \alpha \left( 1 - e^{-1/\alpha} \right) \right\} = 1$ from the monotone increasing.
    Therefore,
    \[
    \sup_{\alpha > 0} \left| \alpha (\mathbb{E}_{\hat{\mathbb{Q}}_\ell^\pi} - \mathbb{E}_{\mathbb{Q}^\pi}) [e^{-r/\alpha}] \right| \le d_{KS}(F_{\mathbb{Q}^\pi}, F_{\hat{\mathbb{Q}}_\ell^\pi}).\]
    Applying the same logic to $\alpha ( \mathbb{E}_{\hat{\mathbb{Q}}_\ell^\pi} - \mathbb{E}_{\mathbb{Q}^\pi}) \left[ m(X,P;\alpha) \right]$,
    \begin{align*}
    \left| \alpha ( \mathbb{E}_{\hat{\mathbb{Q}}_\ell^\pi} - \mathbb{E}_{\mathbb{Q}^\pi}) \left[ m(X,P;\alpha) \right] \right|
    &= \left| \alpha \int_0^1 \left( -\frac{1}{\alpha} e^{-s/\alpha} \right) (F_{\mathbb{Q}^\pi}(s) - F_{\hat{\mathbb{Q}}_\ell^\pi}(s)) ds \right| \\
    &\le d_{KS}(F_{\mathbb{Q}^\pi}, F_{\hat{\mathbb{Q}}_\ell^\pi}) \cdot \int_0^1 e^{-s/\alpha} ds 
    = d_{KS}(F_{\mathbb{Q}^\pi}, F_{\hat{\mathbb{Q}}_\ell^\pi}) \cdot \alpha(1 - e^{-1/\alpha}).
    \end{align*}
    Taking the average over $\ell = 1, \ldots, L$, we have
    \begin{align*}
        |\tilde{R}^h_{\rm DRO}(\pi) - \tilde{R}_{\rm DRO}(\pi) |
         \leq\frac{4}{\underline{\rho}}  \frac{1}{L} \sum_{\ell=1}^L \left( 2 d_{KS}(F_{\mathbb{Q}^\pi}, F_{\hat{\mathbb{Q}}_\ell^\pi}) + d_{KS}(F_{\bP_0}, F_{\hat{\mathbb{P}}_\ell}) \right).
    \end{align*}
\end{proof}
\begin{lemma}\label{lemma: sup pi tilde RDRO h - tildeR DRO}
    Under Assumptions \ref{assum:overlap} and \ref{assumption: kernel function}, we have with probability at least $1-3\epsilon$,
    \[ \sup_{\pi\in\Pi}|\tilde{R}_{\rm DRO}^h(\pi) - \tilde{R}_{\rm DRO}(\pi) | 
    \lesssim
    \frac{d_\Pi L}{T h}+ \sqrt{\frac{d_\Pi L}{T h}} 
     +\sqrt{\frac{\log(1/\epsilon) L}{T h}}+\frac{\log(1/\epsilon) L}{T h}
    , \]
\end{lemma}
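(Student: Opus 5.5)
The bound follows from Lemma~\ref{lemma: tilde RDRO h - tildeR DRO} once its right-hand side is controlled uniformly over $\pi\in\Pi$. That lemma gives, for every $\pi$,
\[
|\tilde{R}^h_{\rm DRO}(\pi)-\tilde{R}_{\rm DRO}(\pi)|\le \frac{4}{\underline{\rho}}\frac{1}{L}\sum_{\ell=1}^L\Big(2\dKS(F_{\mathbb{Q}^\pi},F_{\hat{\mathbb{Q}}^\pi_\ell})+\dKS(F_{\bP_0},F_{\hat{\mathbb{P}}_\ell})\Big).
\]
It therefore suffices to bound, for each fold, $\sup_{\pi}\dKS(F_{\mathbb{Q}^\pi},F_{\hat{\mathbb{Q}}^\pi_\ell})$ and $\sup_\pi\dKS(F_{\bP_0},F_{\hat{\mathbb{P}}_\ell})$ (the latter for the pushforward of $m(X,\pi(X);\alpha)$), and then union bound over the three events, which gives the probability $1-3\epsilon$. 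Each fold has $n_\ell=T/L$ i.i.d. samples, which explains the factor $L$ in the rates.

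\textbf{Step 1: the unnormalized kernel-weighted process.} Write $w_\pi(x,p)=K((p-\pi(x))/h)/(h\pi_0(p|x))$. By Assumption~\ref{assum:overlap} and boundedness of $K$, $0\le w_\pi\le \|K\|_\infty/(h\eta)$. By the change of variables $u=(p-\pi(x))/h$, its second moment satisfies $\E[w_\pi^2]\le \int K^2/\eta=O(1/h)$. Consider the class
\[
\mathcal{F}=\{(x,p,z)\mapsto w_\pi(x,p)\mathbf 1\{r(x,p,z)\le s\}:\pi\in\Pi,\ s\in[0,1]\}.
\]
Its envelope is $O(1/h)$ and its variance proxy is $O(1/h)$. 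Its uniform covering numbers are polynomial with exponent $O(d_\Pi)$: $K$ is Lipschitz (Assumption~\ref{assumption: kernel function}), so $\pi\mapsto K((p-\pi(x))/h)$ inherits the pseudo-dimension bound up to a factor $L_K/h$ in scale, which costs a $\log(1/h)$ factor; the thresholds $\{r\le s\}$ form a VC class; and products of such classes keep polynomial entropy. Talagrand's/Bousquet's inequality combined with a local Rademacher bound (Lemma~\ref{lemma: rademacher_bound_pdim}) then gives, with probability $1-\epsilon$,
\[
\sup_{f\in\mathcal F}|(\mathbb E_{n_\ell}-\E)f|\lesssim \sqrt{\tfrac{d_\Pi}{n_\ell h}}+\tfrac{d_\Pi}{n_\ell h}+\sqrt{\tfrac{\log(1/\epsilon)}{n_\ell h}}+\tfrac{\log(1/\epsilon)}{n_\ell h}.
\]
Taking $s=1$ gives the same bound for $\sup_\pi|S^\pi_\ell-1|$, since $\E[w_\pi]=1$ up to boundary effects absorbed as in \eqref{eq: tilde W'h geq b}.

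\textbf{Step 2: self-normalization.} Write $F_{\hat{\mathbb Q}^\pi_\ell}(s)-F_{\mathbb Q^\pi}(s)=\big(A_\ell(s)-S^\pi_\ell F_{\mathbb Q^\pi}(s)\big)/S^\pi_\ell$, where $A_\ell$ is the empirical mean of $w_\pi\mathbf 1\{r\le s\}$. The numerator is bounded by Step 1 (twice). On the event of Step 1, $S^\pi_\ell\ge 1/2$ once $Th/L\gtrsim d_\Pi+\log(1/\epsilon)$; otherwise the claimed bound exceeds a constant and holds trivially, since $\dKS\le1$. This step yields the $2\dKS$ term. The $\hat{\mathbb P}_\ell$ term involves unweighted bounded functions of $\pi$ and a threshold, so a standard VC bound gives $\sqrt{(d_\Pi+\log(1/\epsilon))L/T}$, which is dominated by the stated rate since $h\le1$. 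The $\mathbb Q$-part involving $m(X,P;\alpha)$ is handled in the same way, with the threshold taken on $m$'s level sets as in the previous lemma's CDF argument.

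\textbf{Main obstacle.} The hardest step is the entropy bound for $\{w_\pi\}$ at scale $h$: controlling covering numbers of $K((p-\pi(x))/h)$ via $\Pdim(\Pi)$ without losing powers of $1/h$ beyond a logarithm, so that Bernstein-type (variance $1/h$) rather than Hoeffding-type (envelope $1/h^2$) rates result. I would first try discretizing $\Pi$ at $L_\infty$-scale $h\epsilon'/L_K$ in empirical norm, using Haussler's packing bound for pseudo-dimension, and then apply Bernstein's inequality with a union bound plus chaining.
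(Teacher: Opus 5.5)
Your proposal is essentially the paper's proof: you reduce via Lemma~\ref{lemma: tilde RDRO h - tildeR DRO}, control $\sup_{\pi,s}$ of the kernel-weighted indicator class $\{w_\pi\mathbf{1}\{r\le s\}\}$ (with $s=1$ giving $\sup_\pi|S^\pi_\ell-1|$) and of the unweighted threshold class via pseudo-dimension covering bounds, Lemma~\ref{lemma: rademacher_bound_pdim} and Talagrand's inequality with envelope and second moment of order $1/(h\eta)$, and union bound over three events; the only difference is that the paper bounds the normalization error directly by $|S^\pi_\ell-1|$ (because $\frac{1}{|\mathcal{I}_\ell|S^\pi_\ell}\sum_{t\in\mathcal{I}_\ell}w_\pi(x_t,p_t)\mathbf{1}\{r_t\le s\}\le 1$), so it needs no $S^\pi_\ell\ge 1/2$ case split. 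One caveat applies equally to you and to the paper, and you yourself point to it: the $1/h$ rescaling costs a $\log(1/h)$ in the entropy, and this factor disappears from your Step~1 rate only because Lemma~\ref{lemma: rademacher_bound_pdim} suppresses it, whereas a localized Dudley bound with envelope $\asymp 1/h$ and standard deviation $\asymp h^{-1/2}$ keeps a $\sqrt{\log(1/h)}$ factor (already for constant policies, where $S^\pi_\ell$ is essentially a kernel density estimate whose sup-norm deviation is of exact order $\sqrt{\log(1/h)/(|\mathcal{I}_\ell|h)}$), which is harmless for Theorem~\ref{thm:main result} since that bound already carries $\log(1/h)$.
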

\begin{proof}[Proof of Lemma \ref{lemma: sup pi tilde RDRO h - tildeR DRO}]
    We decompose the KS distance in Lemma \ref{lemma: tilde RDRO h - tildeR DRO} using the triangle inequality,
    \begin{align}
        d_{KS}(F_{\mathbb{Q}^\pi}, F_{\hat{\mathbb{Q}}_\ell^\pi}) 
        &= \sup_{s \in [0,1]} \left| \frac{1}{T S_\ell^\pi} \sum_{t \in \mathcal{I}_\ell} \frac{K\left(\frac{p_t-\pi(x_t)}{h}\right)}{h \pi_0(p_t|x_t)} \mathbf{1}\{r_t \le s\} - \mathbb{E}\left[ \frac{K\left(\frac{P-\pi(X)}{h}\right)}{h \pi_0(P | X)} \mathbf{1}\{r(X,P;\alpha) \le s\} \right] \right| \nonumber\\
        &\leq \sup_{s \in [0,1]} \left| \frac{1}{|\mathcal{I}_\ell|} \sum_{t \in \mathcal{I}_\ell} \frac{K\left(\frac{p_t-\pi(x_t)}{h}\right)}{h \pi_0(p_t|x_t)} \mathbf{1}\{r_t \le s\} - \mathbb{E}\left[ \frac{K\left(\frac{P-\pi(X)}{h}\right)}{h \pi_0(P | X)} \mathbf{1}\{r(X,P;\alpha) \le s\} \right] \right| \nonumber \\
        & \quad + \sup_{s \in [0,1]} \left| \frac{S_\ell^\pi - 1}{|\mathcal{I}_\ell| S_\ell^\pi} \sum_{t \in \mathcal{I}_\ell} \frac{K\left(\frac{p_t-\pi(x_t)}{h}\right)}{h \pi_0(p_t|x_t)} \mathbf{1}\{r_t \le s\} \right|
        \label{eq: KS F},\\
        d_{KS}(F_{\bP_0}, F_{\hat{\mathbb{P}}_\ell}) &= \sup_{s\in[0,1]} \left| \frac{1}{|\mathcal{I}_\ell|} \sum_{t \in \mathcal{I}_\ell} \mathbf{1}\{r(x_t, \pi(x_t); \alpha) \le s\} - \E\left[\mathbf{1}\{r(X, \pi(X); \alpha) \le s\}\right] \right|  \label{eq: KS H}.
    \end{align}
    Note that\begin{equation}
        \left| \frac{S_\ell^\pi - 1}{|\mathcal{I}_\ell| S_\ell^\pi} \sum_{t \in \mathcal{I}_\ell} \frac{K\left(\frac{p_t-\pi(x_t)}{h}\right)}{h \pi_0(p_t|x_t)}  \mathbf{1}\{r_t \leq s\} \right| \leq |S_\ell^\pi - 1| \ \text{ and } \ \E[S_\ell^\pi] =1. \label{eq: S ell pi -1}
    \end{equation}
    Define \begin{align*}
        &  f_{\pi}(x,p) =  \frac{1}{h \pi_0(p | x)} K\left(\frac{p-\pi(x)}{h}\right), \quad
           f^1_{\pi, s}(x, p, z) \triangleq \frac{1}{h \pi_0(p | x)} K\left(\frac{p-\pi(x)}{h}\right) \mathbf{1}\{r(x,p,z) \le s\} \\
        & f^2_{\pi, s}(x,z) \triangleq \mathbf{1}\{r(x,\pi(x),z) \le s\} ,
    \end{align*}
    and the corresponding function classes
    \begin{align*}
        &\mathcal{F}_\Pi := \{f_\pi | \pi\in\Pi \} , \quad \G := \{g_\pi = f_\pi - \E[f_\pi] \big| \pi\in\Pi \} , \\
        &\mathcal{F}^1_{\Pi,s} \triangleq \left \{ f^1_{\pi,s} | \pi \in \Pi, s \in [0,1] \right \}, \quad \mathcal{G}^1_{\Pi,s} \triangleq \left \{ g^1_{\pi,s} = f^1_{\pi,s} - \E[f^1_{\pi,s}] \Big | \pi \in \Pi, s \in [0,1] \right \} ,\\
        &\mathcal{F}^2_{\Pi,s}  \triangleq \left \{ f^2_{\pi, s}(x,z) = f^2_{\pi, s} - \E[f^2_{\pi, s}] \Big | \pi \in \Pi, s \in [0,1] \right \}, \quad \mathcal{G}^2_{\Pi,s} \triangleq \left \{ g^2_{\pi,s} = f^2_{\pi,s} - \E[f^2_{\pi,s}] \Big | \pi \in \Pi, s \in [0,1] \right \} .
    \end{align*}
    Thus, we next provide the upper bounds for \begin{align*}
        \sup_{\pi \in \Pi} \left|\frac{1}{|\mathcal{I}_\ell|} \sum_{t \in \mathcal{I}_\ell} f_\pi(x_t,p_t) - \E[f_\pi] \right|, \
        \sup_{\pi \in \Pi, s \in [0,1]} \left|\frac{1}{|\mathcal{I}_\ell|} \sum_{t \in \mathcal{I}_\ell} f^1_{\pi, s}(x_t, p_t, z_t) - \E[f^1_{\pi, s}] \right|, \
        \sup_{\pi \in \Pi, s \in [0,1]} \left|\frac{1}{|\mathcal{I}_\ell|} \sum_{t \in \mathcal{I}_\ell} f^2_{\pi, s}(x_t) - \E[f^2_{\pi, s}] \right|.
    \end{align*}
\textbf{Step 1. Bound for $\sup_{\pi \in \Pi} \left|\frac{1}{|\mathcal{I}_\ell|} \sum_{t \in \mathcal{I}_\ell} f_\pi(x_t,p_t) - \E[f_\pi] \right|$.} 
Note that $f_\pi$ is uniformly bounded by a constant $B_\Pi:=\frac{ C_2}{h\eta} $ due to Assumption \ref{assumption: kernel function} and \ref{assum:overlap}.
Let $\{(x_t,p_t)\}_{t\in \mathcal{I}_\ell}$ be i.i.d. random variables.  Our goal is to derive a high-probability upper bound for
\begin{align}\label{eq: W}
    \sup_{\pi \in \Pi} \left|\frac{1}{|\mathcal{I}_\ell|} \sum_{t\in \mathcal{I}_\ell} f_\pi(x_t,p_t) - \E[f_\pi] \right| 
    = \frac{1}{|\mathcal{I}_\ell|} \sup_{g \in {\G}} \left| \sum_{t\in \mathcal{I}_\ell} g(x_t,p_t) \right| \equiv \Delta.
\end{align}
First, we provide the second-moment bound.
\begin{align}
     \sup_{\pi \in \Pi}\mathbb{E} [f^2_\pi]
    &= \sup_{\pi \in \Pi}\int_{\mathcal{X}} \int_{\mathcal{P}} \frac{1}{h^2 \pi_0(p | x)^2} K^2\left(\frac{p - \pi(x)}{h}\right) \pi_0(p | x) dp dQ(x) \nonumber\\
    &= \sup_{\pi \in \Pi}\int_{\mathcal{X}} \int_{\mathcal{P}} \frac{1}{h^2 \pi_0(p | x)} K^2\left(\frac{p - \pi(x)}{h}\right) dp dQ(x) 
    \leq  \sup_{\pi \in \Pi} \frac{1}{\eta h} \int_{\mathcal{X}}  \int_{-\infty}^{\infty} K^2(u) du 
    \leq \frac{C_2 }{h\eta}: = \sigma^2  \label{lem:variance_bound}
\end{align}
where the last inequality follows from $\int K^2(u) du <C_2$ in Assumption \ref{assumption: kernel function} and $\sigma^2 := \frac{C_2 }{h\eta}$. 

From the definition of covering number and uniform covering number (see Definition \ref{def: combined}), and Lemma \ref{lemma: pdim_stability}, we have
    \begin{align*}
        \Ncover(\epsilon, \mathcal{F}_\Pi, L_2(P_{T'}))
        \leq 
        \Ncover_1\left(\frac{\epsilon^2}{B_\Pi}, \mathcal{F}_\Pi, T'\right) 
        \leq \Ncover_1\left(\frac{\epsilon^2}{B_\Pi L_K}, \PiC, T'\right) ,
    \end{align*}
    which is directly by letting 
    \begin{align}\label{eq: F012 def}
        \mathcal{F}_0 = \PiC,\ \mathcal{F}_1 = \left\{(x,p) \mapsto \frac{p - \pi(x)}{h} | \pi \in \PiC \right\}, \ \mathcal{F}_\Pi = \Big\{ f_\pi: (x,p) \mapsto \frac{K\left(\frac{p - \pi(x)}{h}\right) }{ h \pi_0(p | x)} \Big| \pi \in \PiC \Big\}
    \end{align}
    and $\Pdim(\mathcal{F}_0) = \Pdim(\PiC):= d_{\PiC}$.
Combined with Lemma \ref{lemma:combined_metric_props}(ii), we conclude that
    \begin{align*}
        \log \Ncover(\epsilon, \mathcal{F}_\Pi, L_1(P_{T'}))  \leq
        \log \Ncover(\epsilon, \mathcal{F}_\Pi, L_2(P_{T'})) 
        \lesssim  d_{\PiC} \log \left(\frac{ B_\Pi}{\epsilon}\right) .
    \end{align*}
    On the other hand, from Lemma \ref{lemma:combined_metric_props}(iii), we have 
    \begin{align*}
        \log \mathcal{M}(\epsilon, \mathcal{F}_{\Pi}, L_1(P_{T'}) )\gtrsim  \Pdim( \mathcal{F}_{\Pi}) \log (1/\epsilon) ,
    \end{align*}
    for $T' \geq \Pdim( \mathcal{F}_{\Pi}) $.
    Combining the above two, which both hold for all sufficiently small $\epsilon > 0$, we can get 
    \begin{equation}
        \Pdim( \mathcal{F}_{\Pi}) \lesssim d_\Pi . \label{eq: F_pi leq K Pi}
    \end{equation}
By setting $T'= | \mathcal{I}_\ell | $ in Lemma \ref{lemma: rademacher_bound_pdim}, we obtain 
\begin{align*}
    \E[\Delta] \equiv
    \frac{1}{|\mathcal{I}_\ell|}\E\left[\sup_{g \in {\G}} \left| \sum_{t\in \mathcal{I}_\ell} g(x_t,p_t) \right| \right]
    \lesssim \frac{B_\Pi d_\Pi}{| \mathcal{I}_\ell |} + \sqrt{\frac{\sigma^2 d_\Pi}{| \mathcal{I}_\ell |}}.
\end{align*}
Now, we use Talagrand Inequality (see Lemma \ref{lemma:Talagrand’s Inequality}) with deviation $\beta(\epsilon)>0$,
 \begin{align*}
    \Prob\left( \Delta - \E[\Delta] \ge \beta(\epsilon) \right) \le 2 \exp\left\{ - \frac{\beta(\epsilon)^2 | \mathcal{I}_\ell |}{8e \sigma^2 + 16e B_\Pi \E[\Delta] + 4B_\Pi \beta(\epsilon)}  \right\}.
\end{align*}
To get a bound holding with probability $1-\epsilon$, we can pick $\beta(\epsilon) \lesssim \sqrt{\frac{\sigma^2\log(1/\epsilon)}{| \mathcal{I}_\ell |}} + \frac{B_\Pi \log(1/\epsilon)}{| \mathcal{I}_\ell |}$.
Hence, we have
\begin{align*}
    \sup_{\pi \in \Pi} \left|\frac{1}{| \mathcal{I}_\ell |} \sum_{t\in \mathcal{I}_\ell } f_\pi(x_t,p_t) - \E[f_\pi] \right|
    \lesssim
    \frac{d_{\PiC} }{| \mathcal{I}_\ell | h}+ \sqrt{\frac{d_{\PiC}}{| \mathcal{I}_\ell | h}} +\sqrt{\frac{\log(1/\epsilon)}{| \mathcal{I}_\ell | h}}+\frac{\log(1/\epsilon)}{| \mathcal{I}_\ell | h},
\end{align*}
with probability at least $1-\epsilon$.

\textbf{Step 2. Bound for $\sup_{\pi \in \Pi, s \in [0,1]} \left|\frac{1}{| \mathcal{I}_\ell |} \sum_{t\in \mathcal{I}_\ell} f^1_{\pi, s}(x_t, p_t, z_t) - \E[f^1_{\pi, s}] \right|$.}
It is same as Step 1 that $ \left |f^1_{\pi, s} \right | \le B_{\PiC} $, and the second-moment bound is bounded by $\sigma^2 := \frac{C_2 }{h\eta}$.
We first claim that
\begin{align*}
    \mathcal{N}(\epsilon, \mathcal{F}^1_{\Pi,s},L_2(P_{| \mathcal{I}_\ell |})) 
    \leq \mathcal{N}_2(\epsilon/2, \mathcal{F}_{\Pi},| \mathcal{I}_\ell |)  
    \sup_{\bP} \mathcal{N}(\eta \epsilon/(2C_2) , \mathscr{E}, \|\cdot \|_{\bP}),
\end{align*}
where $\mathscr{E}=\{i_s(\epsilon) \triangleq \mathbf{1}\{ \epsilon \leq s \} | s\in[0,1] \} $.
For ease of notation, let
\[N_{2,\mathcal{F}}(\epsilon):= 
\mathcal{N}_2(\epsilon/2, \mathcal{F}_{\Pi},| \mathcal{I}_\ell |) ,\quad
N_\mathscr{E}(\epsilon):=
\sup_{\bP} \mathcal{N}(\eta \epsilon/(2 C_2) , \mathscr{E}, \|\cdot \|_{\bP}).
\]
Suppose $\{ f_{\pi_1},...,f_{\pi_{N_{2,\mathcal{F}}(\epsilon)}} \}$ is a $\epsilon/2$-cover for $\mathcal{F}_{\Pi}$, and $\{\mathbf{1}\{\epsilon\leq s_1\},..., \mathbf{1}\{\epsilon\leq s_{N_\mathscr{E}(\epsilon)}\}\}$ is a $\epsilon/2$-cover for $\mathscr{E}$ under the distance $\hat{\mathbb{P}}_\ell$, defined as \[
\hat{\mathbb{P}}_\ell \triangleq \frac{1}{| \mathcal{I}_\ell |} \sum_{t\in  \mathcal{I}_\ell } \Delta(r(x_t, p_t, z_t)) .
\]
Then, we can prove that $\mathcal{F}^{1,\epsilon}_{\Pi,s}$ is a $\epsilon$-cover set for $\mathcal{F}^{1}_{\Pi,s}$, where $\mathcal{F}^{1,\epsilon}_{\Pi,s}$ is defined as
\begin{align*}
    \mathcal{F}^{1,\epsilon}_{\Pi,s} 
    &\triangleq \left \{  f^{1}_{\pi_i, s_j} = \frac{K\left(\frac{p-\pi_i(x)}{h}\right)}{h \pi_0(p | x)} \mathbf{1}\{r(x, p, z )\leq s_j\} \ \big | i \le N_{2,\mathcal{F}}(\epsilon), j \le N_{\mathscr{E}}(\epsilon)  \right \} .
\end{align*}
For $f^{1}_{\pi,s}(x,p,z) \in\mathcal{F}^{1}_{\Pi,s}$, we can pick $f^{1}_{\tilde{\pi},\tilde{s}}(x,p,z) \in\mathcal{F}^{1,\epsilon}_{\Pi,s}$, 
\[
\| f^1_{\pi} -f^1_{\tilde{\pi}} \|_{L_2(P_{|\mathcal{I}_\ell|})} \leq \epsilon/2
\quad\text{ and }\quad
\| \mathbf{1}\{r\leq s\} - \mathbf{1}\{r\leq \tilde{s}\}\|_{\hat{\bP}_\ell} \leq \eta \epsilon/(2 C_2) .
\]
Then, we have
\begin{align*}
   & \| f^1_{\pi,s}- f^1_{\tilde{\pi},\tilde{s}} \|_{L_2(P_{|\mathcal{I}_\ell|})} 
     = \sqrt{\frac{1}{|\mathcal{I}_\ell|} \sum_{t\in \mathcal{I}_\ell} \left( f^{1}_{\pi,s}(x_t,p_t,z_t) - f^{1}_{\tilde{\pi},\tilde{s}}(x_t,p_t,z_t) \right)^2} \\
    &=\sqrt{\frac{1}{|\mathcal{I}_\ell|} \sum_{t\in \mathcal{I}_\ell} \left( f^{1}_{\pi}(x_t,p_t,z_t) \mathbf{1}\{r(x_t, p_t, z_t) \leq s\}- f^{1}_{\tilde{\pi}}(x_t,p_t,z_t) \mathbf{1}\{r(x_t, p_t, z_t) \leq \tilde{s}\} \right)^2 } \\
    &= \sqrt{\frac{1}{|\mathcal{I}_\ell|} \sum_{t\in \mathcal{I}_\ell} (f^{1}_{\pi,s}(x_t,p_t,z_t))^2 \left(\mathbf{1}\{r(x_t, p_t, z_t) \leq s\} - \mathbf{1}\{r(x_t, p_t, z_t) \leq \tilde{s}\}  \right)^2 
    + \mathbf{1}\{r(x_t, p_t, z_t) \leq \tilde{s}\} \left(  f^{1}_{\pi}(x_t,p_t,z_t)  - f^{1}_{\tilde{\pi}}(x_t,p_t,z_t) \right)^2
    } \\
    & \leq \frac{ C_2}{\eta } \sqrt{\frac{1}{|\mathcal{I}_\ell|}\sum_{t\in \mathcal{I}_\ell} \left(\mathbf{1}\{r(x_t, p_t, z_t) \leq s\} - \mathbf{1}\{r(x_t, p_t, z_t) \leq \tilde{s}\}  \right)^2  } + \sqrt{\frac{1}{|\mathcal{I}_\ell|} \sum_{t\in \mathcal{I}_\ell} \left(  f^{1}_{\pi}(x_t,p_t,z_t)  - f^{1}_{\tilde{\pi}}(x_t,p_t,z_t) \right)^2 } \\
    &\leq \epsilon
\end{align*}
From Lemma 19.15 and Example 19.16 in \citet{van1998asymptotic}, we have 
$
\sup_{\bP} \mathcal{N}(\epsilon, \mathscr{E}, \|\cdot \|_{\bP}) \lesssim \left(\frac{1}{\epsilon} \right)^2 $.
Hence, we obtain
\begin{align}\label{eq: I in Van der Vaart 2000}
    \mathcal{N}(\epsilon, \mathcal{F}^1_{\Pi,s}, L_2(P_{|\mathcal{I}_\ell|})) 
    \lesssim 
     \mathcal{N}_1\left(\frac{\epsilon^2}{4 B(\mathcal{F}_\Pi)}, \mathcal{F}_\Pi, |\mathcal{I}_\ell| \right) \cdot\left(\frac{1}{\epsilon} \right)^2, 
\end{align} 
which is directly from Lemma \ref{lemma:combined_metric_props}(v), and $B(\mathcal{F}_\Pi)$ is the uniform maximum of $\mathcal{F}_\Pi$.
Combined with Lemma \ref{lemma:combined_metric_props}(ii), we have
\begin{align*}
    \log  \mathcal{N}(\epsilon, \mathcal{F}^1_{\Pi,s}, L_2(P_{|\mathcal{I}_\ell|}))  
    \lesssim \Pdim(\mathcal{F}_\Pi) \log\left( \frac{1}{\epsilon}\right).
\end{align*}
On the other hand, from Lemma \ref{lemma:combined_metric_props}(iii), we have 
\begin{align*}
    \log \mathcal{M}_1(\epsilon, \mathcal{F}^1_{\Pi,s}, |\mathcal{I}_\ell| )\gtrsim  \Pdim( \mathcal{F}^1_{\Pi,s}) \log (1/\epsilon) ,
\end{align*}
for $\mathcal{I}_\ell \geq \Pdim( \mathcal{F}^1_{\Pi,s}) $.
Combining the above two inequalities, which both hold for all sufficiently small $\epsilon > 0$, we can get 
\begin{equation}
    \Pdim( \mathcal{F}^1_{\Pi,s}) \leq K \cdot \Pdim( \mathcal{F}_{\Pi}) \label{eq: F_pi,s leq K F_pi} 
\end{equation}
for some positive constant $K$.
Note that,  \[
\mathcal{N}(\epsilon, \mathcal{G}^1_{\Pi,s}, L_2(P_{|\mathcal{I}_\ell|})) 
\leq  \mathcal{N}(\epsilon/2, \mathcal{F}^1_{\Pi,s}, L_2(P_{|\mathcal{I}_\ell|})) +  N(\epsilon/2, \overline{\mathcal{F}}^1_{\Pi,s}, L_2(P_{|\mathcal{I}_\ell|})),
\]
where \[
\overline{\mathcal{F}}^1_{\Pi,s} \triangleq \left\{\overline{f}^{1}_{\pi,s} =\E_{\bP_0*\pi_0} [f^{1}_{\pi,s}(X,P,Z) ]  \ \big | \pi \in \Pi, s \in [0,1] 
\right\} ,
\]
is a function class of constant functions, satisfying 
\begin{align*}
    \| \overline{f}^{1}_{\pi_1,s_1} - \overline{f}^{1}_{\pi_2,s_2}\|_{L_2(P_{|\mathcal{I}_\ell|})} 
    = |\overline{f}^{1}_{\pi_1,s_1} - \overline{f}^{1}_{\pi_2,s_2}| 
    \leq  \E_{\bP_0*\pi_0} \left|f^{1}_{\pi_1,s_1}(X,P,Z) - f^{1}_{\pi_2,s_2}(X,P,Z) \right| .
\end{align*}
Consequently, we have\[
    \mathcal{N}(\epsilon/2, \overline{\mathcal{F}}^1_{\Pi,s}, L_2(P_{|\mathcal{I}_\ell|})) \le   \mathcal{N} (\epsilon/2, \mathcal{F}^1_{\Pi,s}, L_1(\bP_0*\pi_0))
\]
Using Definition \ref{def: combined} and Lemma \ref{lemma:combined_metric_props}(iv), we have
\begin{align}
    \log \mathcal{N} (\epsilon/2, \mathcal{F}^1_{\Pi,s}, L_1(\bP_0*\pi_0)) 
    &\leq
    \log\mathcal{M} (\epsilon/2, \mathcal{F}^1_{\Pi,s}, L_1(\bP_0*\pi_0)) \nonumber\\
    &\lesssim \Pdim( \mathcal{F}^1_{\Pi,s}) \log \left(\frac{1}{\epsilon} \log\left(\frac{1}{\epsilon}\right) \right) 
    \lesssim \Pdim( \mathcal{F}_{\Pi}) \log \left(\frac{1}{\epsilon} \log\left(\frac{1}{\epsilon}\right) \right) \qquad \text{ by \eqref{eq: F_pi,s leq K F_pi}}. \label{eq: F Pi 1}
\end{align}
Therefore, we conclude that
\begin{align*}
    \log \mathcal{N}(\epsilon, \mathcal{G}^1_{\Pi,s}, L_2(P_{|\mathcal{I}_\ell|})) 
    &\leq 2\log \mathcal{N}(\epsilon/2, \mathcal{F}^1_{\Pi,s}, L_2(P_{|\mathcal{I}_\ell|})) 
       + 2\log N(\epsilon/2, \overline{\mathcal{F}}^1_{\Pi,s}, L_2(P_{|\mathcal{I}_\ell|})) \\
     &\lesssim  \Pdim(\mathcal{F}_\Pi) \log\left( \frac{1}{\epsilon}  \right) 
     \lesssim \Pdim(\Pi) \log\left( \frac{1}{\epsilon}  \right) \qquad \text{ by \eqref{eq: F_pi leq K Pi}}.
\end{align*}
Hence, we have an upper bound like \eqref{eq: G_Pi cover num}, implying a Dudley’s integral and a Rademacher complexity bound like the proof of Lemma \ref{lemma: rademacher_bound_pdim}. That is,
\[
\sup_{\pi \in \Pi, s \in [0,1]} \left|\frac{1}{|\mathcal{I}_\ell|} \sum_{t\in \mathcal{I}_\ell} f^1_{\pi, s}(x_t, p_t, z_t) - \E[f^1_{\pi, s}] \right|
\lesssim
\frac{d_{\PiC} }{|\mathcal{I}_\ell| h}+ \sqrt{\frac{d_{\PiC}}{|\mathcal{I}_\ell| h}} +\sqrt{\frac{\log(1/\epsilon)}{|\mathcal{I}_\ell| h}}+\frac{\log(1/\epsilon)}{|\mathcal{I}_\ell| h},
\]
with probability at least $1-\epsilon$.

\textbf{Step 3. Bound for $\sup_{\pi \in \Pi, s \in [0,1]} \left|\frac{1}{T} \sum_{t=1}^T f^2_{\pi, s}(x_t, z_t) - \E[f^2_{\pi, s}] \right|$.}
Using an analogous argument before, we can get
\[
\sup_{\pi \in \Pi, s \in [0,1]} \left|\frac{1}{|\mathcal{I}_\ell|} \sum_{t\in \mathcal{I}_\ell} f^2_{\pi, s}(x_t, p_t) - \E[f^2_{\pi, s}] \right|
\lesssim
\frac{d_{\PiC} }{|\mathcal{I}_\ell|}+ \sqrt{\frac{d_{\PiC}}{|\mathcal{I}_\ell|}} +\sqrt{\frac{\log(1/\epsilon)}{|\mathcal{I}_\ell|}}+\frac{\log(1/\epsilon)}{|\mathcal{I}_\ell|},
\]
with probability at least $1-\epsilon$.
Specifically, $ \left |f^2_{\pi, s} \right | \le 1 $, and the second-moment bound is bounded by 1.
For the indicator class $\mathcal{F}^2_{\Pi,s}$, we first note that for any binary-valued function class, its pseudo-dimension is identical to its VC-dimension \citep[see][]{anthony2009neural}.
Note that \begin{align*}
    \mathcal{F}^2_{\Pi,s}
    &=\{ (x,z)\mapsto \mathbf{1}\{ \pi(x) \cdot \mathbf{1}\{z\geq \pi(x) - f(x)\}\leq s  \} | \pi\in \Pi, s\in[0,1]\} \\
    &= \mathbf{1}\{ (\pi(x) -f(x) > z) \text{ or } (\pi(x) -f(x) \le z \text{ and } \pi(x) \le s) \}
\end{align*}
can be expressed as a finite Boolean combination of atomic level-set classes derived from $\Pi$. Since the VC-dimension of each atomic class is equivalent to $\Pdim(\Pi) = d_\Pi$, it follows from the closure properties of VC-classes \citep[Lemma 2.6.17][]{van1996weak} that\[
\Pdim(\mathcal{F}_{\Pi,s}^2) = {\rm VCdim}(\mathcal{F}_{\Pi,s}^2) \lesssim d_\Pi. \]
Consequently, we can now directly apply Lemma 19.15 in \citet{van1998asymptotic} to obtain the uniform covering number bound \eqref{eq: I in Van der Vaart 2000}, consistent with the methodology used in Step 1 and Step 2.
Note that $|\mathcal{I}_\ell| = T/L$, we get 
\begin{align*}
    \sup_{\pi\in\Pi}|\tilde{R}_{\rm DRO}^h(\pi) - \tilde{R}_{\rm DRO}(\pi) | 
    &\lesssim \frac{4}{\underline{\rho}} \frac{1}{L} 
    \sum_{\ell=1}^L  
    \frac{d_{\PiC} }{|\mathcal{I}_\ell|  h}+ \sqrt{\frac{d_{\PiC}}{T h}} 
     +\sqrt{\frac{\log(1/\epsilon)}{|\mathcal{I}_\ell|  h}}+\frac{\log(1/\epsilon)}{|\mathcal{I}_\ell|  h}
     + \frac{d_{\PiC} }{|\mathcal{I}_\ell| }+ \sqrt{\frac{d_{\PiC}}{|\mathcal{I}_\ell| }} \\
    &\lesssim  \frac{d_\Pi L}{T h}+ \sqrt{\frac{d_\Pi L}{T h}} 
    +\sqrt{\frac{\log(1/\epsilon) L}{T h}} + \frac{\log(1/\epsilon) L}{T h} 
\end{align*}
which completes the proof.
\end{proof}
\begin{lemma}\label{lemma: tilde RDRO - hat RDRO}
    Let Assumptions \ref{assumption: kernel function}, \ref{assum:overlap} and \ref{ass: hat m convergence} hold. Picking $h \geq \Omega\left(\max\left(\frac{ d_\Pi \log(1/\epsilon)}{T \underline{\rho}^2}, \sqrt{\frac{ d_\Pi + \log(1/\epsilon)}{2 T/L}}\right)\right) $, we have
    \begin{align*}
        \left| \tilde{R}_{\delta}(\pi) - \hat{R}_{\delta}(\pi) \right| 
        \lesssim \overline{\alpha} \left( \log \left( \frac{1}{h}\right) \cdot \left(
            \frac{d_\Pi}{ T h / L} + \sqrt{\frac{d_\Pi}{ T h / L}} 
            + \sqrt{\frac{\log(1/\epsilon)}{ T h / L}} + \frac{\log(1/\epsilon)}{ T h / L } 
            \right) + T^{-\gamma_\pi-\gamma_m} + h^s \right), 
    \end{align*}
    for $s=1,2$ in Assumption \ref{assumption: Lipschitz Continuity of FZ}, which holds with probability at least $1-4\epsilon$. 
\end{lemma}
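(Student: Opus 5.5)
We compare the infeasible cross-fitted estimator $\tilde{W}_T$, built with the true $\pi_0$ and $m$, to the feasible $\hat{W}_T$ of Definition~\ref{def: kernel-based doubly robust estimator}, uniformly in $\alpha\in(0,\bar{\alpha}]$ and $\pi\in\Pi$. We then transfer the gap to the dual values through the same log-ratio argument used in Lemma~\ref{lemma: RDRO - tilde RDRO h}.

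\textbf{Step 1 (lower bounds and log step).} First we verify that both $\tilde{W}_T$ and $\hat{W}_T$ stay bounded below by a constant multiple of $\underline{\rho}$, uniformly in $\pi$ and $\alpha$, with high probability.
\begin{itemize}
\item For $\tilde{W}_T$ this follows from $\tilde{W}_h\ge\underline{\rho}/2$ together with the uniform deviation $|\tilde{W}_T-\tilde{W}_h|\le\underline{\rho}/4$. That deviation is exactly what the lower bound on $h$ (the $d_\Pi\log(1/\epsilon)/(T\underline{\rho}^2)$ term) is designed to guarantee.
\item For $\hat{W}_T$ it follows once the difference bounded in Step 2 is at most $\underline{\rho}/4$.
\end{itemize}
Then, as in the proof of Lemma~\ref{lemma: RDRO - tilde RDRO h}, $|\log(1+u)|\le 2|u|$ gives
\[
|\tilde{R}_\delta(\pi)-\hat{R}_\delta(\pi)|\lesssim \underline{\rho}^{-1}\sup_{\alpha\in(0,\bar{\alpha}]}\alpha|\tilde{W}_T(\pi,\alpha)-\hat{W}_T(\pi,\alpha)|.
\]
We can restrict to $\alpha\le\bar{\alpha}$ because both maximizers lie there by the argument of Lemma~\ref{lemma: Strong Duality and concavity}. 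This restriction is the source of the factor $\bar{\alpha}$ in the bound.

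\textbf{Step 2 (decomposition within each fold).} Write $\Delta_m=\hat{m}^{(\ell)}-m$ and $\Delta_\pi=1/\hat{\pi}_0^{(\ell)}-1/\pi_0$. Expanding $\hat{W}^{(\ell)}-\tilde{W}^{(\ell)}$ produces four kinds of terms.
\begin{itemize}
\item An oracle-weighted DR error in $\Delta_m$: $\frac1{|\mathcal{I}_\ell|}\sum_t[\Delta_m(x_t,\pi(x_t))-f_\pi(x_t,p_t)\Delta_m(x_t,p_t)]$.
\item A propensity-error term $\frac1{|\mathcal{I}_\ell|}\sum_t K_h(\cdot)\Delta_\pi\,(e^{-p_ty_t/\alpha}-m(x_t,p_t;\alpha))$.
\item A product term in $\Delta_m\Delta_\pi$.
\item Self-normalization corrections involving $\hat{S}_\ell^\pi-1$ and $S_\ell^\pi-1$.
\end{itemize}
By cross-fitting, $\hat{m}^{(\ell)}$ and $\hat{\pi}_0^{(\ell)}$ are fixed given $\mathcal{I}_\ell^c$, so each term splits into its conditional mean plus a centered empirical process.

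\begin{itemize}
\item \emph{DR term.} Its conditional mean equals $\E_X\int K(u)[\Delta_m(X,\pi(X))-\Delta_m(X,\pi(X)+uh)]\,du$. By the Lipschitz (or $C^2$) property of $\hat{m}$ and $m$ in $p$ (Assumption~\ref{ass: hat m convergence}(iii) and \eqref{eq: Lm lips of m}) and the kernel moments, this is $O(h^s)$. Multiplying by $\alpha$ keeps it bounded, as in \eqref{eq: Lm lips of alpha m}.
\item \emph{Propensity term.} Its conditional mean vanishes by unconfoundedness, since $\E[e^{-PY/\alpha}-m\mid X,P]=0$.
\item \emph{Product term.} Its conditional mean is $\lesssim\|\Delta_m\|_\infty\|\Delta_\pi\|_\infty$. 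By Assumption~\ref{ass: hat m convergence}(i) and Markov's inequality this is $O(T^{-\gamma_\pi-\gamma_m})$ with probability $1-\epsilon$, up to constants depending on $\epsilon$.
\item \emph{Normalization terms.} These reduce to $\sup_\pi|S_\ell^\pi-1|$ and $\sup_\pi|\hat{S}_\ell^\pi-1|$. Step~1 of Lemma~\ref{lemma: sup pi tilde RDRO h - tildeR DRO} controls them at the rate $\frac{d_\Pi}{Th/L}+\sqrt{\frac{d_\Pi}{Th/L}}+\cdots$, plus a $\|\Delta_\pi\|_\infty$ contribution that is absorbed.
\end{itemize}

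\textbf{Step 3 (uniform empirical-process control; main obstacle).} The centered parts must be bounded uniformly over $\pi\in\Pi$ and over the continuum $\alpha\in(0,\bar{\alpha}]$, and the summands have envelope of order $1/(h\eta)$ and variance of order $1/h$.

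For the $\alpha$-index we plan to use the monotonicity of $\hat{m}^{(\ell)}$ and $m$ in $\alpha$ from Assumption~\ref{ass: hat m convergence}(iii), together with the Lipschitz bound \eqref{eq: alpha lipschitz}. Together these let us discretize $\alpha$ on a grid of mesh of order $h$, which adds a $\log(1/h)$ factor to the entropy. Combining that grid with the pseudo-dimension bound $\Pdim(\mathcal{F}_\Pi)\lesssim d_\Pi$ from \eqref{eq: F_pi leq K Pi}, Lemma~\ref{lemma: rademacher_bound_pdim}, and Talagrand's inequality (Lemma~\ref{lemma:Talagrand’s Inequality}) yields the rate
\[
\log(1/h)\Big(\tfrac{d_\Pi}{Th/L}+\sqrt{\tfrac{d_\Pi}{Th/L}}+\sqrt{\tfrac{\log(1/\epsilon)}{Th/L}}+\tfrac{\log(1/\epsilon)}{Th/L}\Big).
\]
The delicate points are two:
\begin{itemize}
\item making the $\alpha$-discretization compatible with the multiplicative factor $\alpha\le\bar{\alpha}$, and
\item handling the estimated nuisances inside the function class, which is done by conditioning on $\mathcal{I}_\ell^c$.
\end{itemize}

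Finally, a union bound over the four high-probability events (the deviation of $\tilde{W}_T$, the deviation of $\hat{S}$, the Talagrand step, and the Markov step) gives probability at least $1-4\epsilon$, and averaging over the $L$ folds yields the stated bound.
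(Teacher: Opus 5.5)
Your architecture matches the paper's: a fold-wise decomposition of $\hat W^{(\ell)}-\tilde W^{(\ell)}$ conditional on $\mathcal I_\ell^c$, kernel--Taylor bias $h^s$, Cauchy--Schwarz/Markov for the product term, pseudo-dimension entropy with Dudley and Talagrand uniformly in $(\pi,\alpha)$, and the log-ratio step using $\tilde W^{(\ell)}\gtrsim\underline{\rho}$. However, your treatment of the self-normalization has a genuine gap. Splitting $1/\hat S_\ell^\pi=1+(1/\hat S_\ell^\pi-1)$ produces the correction
\[
\Bigl(\frac{1}{\hat S_\ell^\pi}-1\Bigr)\frac{1}{|\mathcal I_\ell|}\sum_{t\in\mathcal I_\ell}\frac{K\bigl(\frac{p_t-\pi(x_t)}{h}\bigr)}{h\,\hat\pi_0^{(\ell)}(p_t\mid x_t)}\Bigl(e^{-p_ty_t/\alpha}-\hat m^{(\ell)}(x_t,p_t;\alpha)\Bigr).
\]
You bound this by $|\hat S_\ell^\pi-1|$, using only $|e^{-py/\alpha}-\hat m|\le 1$, and that is too crude. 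The conditional mean satisfies $\E[\hat S_\ell^\pi\mid\mathcal I_\ell^c]-1=\E_X\int K(u)\bigl(\pi_0/\hat\pi_0^{(\ell)}-1\bigr)(\pi(X)+uh\mid X)\,du$, which is in general of first order in $\|\hat\pi_0^{(\ell)}-\pi_0\|_\infty$, i.e. of order $T^{-\gamma_\pi}$. The target bound has nothing to absorb this into: it contains only the product $T^{-\gamma_\pi-\gamma_m}$, the empirical-process terms, and $h^s$. As written, you would prove the bound with an extra standalone $T^{-\gamma_\pi}$, which is exactly the loss of double robustness the lemma rules out.

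The fix is to keep the factor that multiplies $1/\hat S_\ell^\pi-1$ rather than discard it. Write $e^{-py/\alpha}-\hat m=(e^{-py/\alpha}-m)-\Delta_m$, which splits the correction into two pieces.
\begin{itemize}
\item The piece with $e^{-p_ty_t/\alpha}-m(x_t,p_t;\alpha)$ is, conditionally on $\mathcal I_\ell^c$, a centered empirical process with envelope $\lesssim 1/(h\eta)$ and second moment $\lesssim 1/h$. It is therefore of the empirical-process rate, and you only need $\inf_\pi\hat S_\ell^\pi$ to be bounded away from zero.
\item The other piece is at most $|1-\hat S_\ell^\pi|\,\|\Delta_m\|_\infty\le\bigl(|\hat S_\ell^\pi-S_\ell^\pi|+|S_\ell^\pi-1|\bigr)\|\Delta_m\|_\infty$. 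By \eqref{eq: S - hatS leq infty norm sup S}, $\sup_\pi|\hat S_\ell^\pi-S_\ell^\pi|\le\frac{3}{2\eta}\|\hat\pi_0^{(\ell)}-\pi_0\|_\infty$, so this piece is a product term plus the rate.
\end{itemize}
This is what the paper's grouping builds in. It keeps the estimated normalizer inside $\Delta_v^{(\ell)}=\frac1h K(\cdot)\bigl(\frac{1}{\hat S_\ell^\pi\hat\pi_0^{(\ell)}}-\frac{1}{S_\ell^\pi\pi_0}\bigr)$. Then $\hat S_\ell^\pi$ multiplies only the mean-zero residual $e^{-py/\alpha}-m$ (its $\mathcal E_2^{(\ell)}$) or $\Delta_m$ (its $\mathcal E_3^{(\ell)}$). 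The only normalization error bounded crudely is the oracle one, $|S_\ell^\pi-1|$, inside the DR term $\mathcal E_1^{(\ell)}$. With that regrouping the rest of your plan goes through. A minor point: the propensity term is mean zero by the definition of $m$ as $\E[e^{-PY/\alpha}\mid X,P]$, not by unconfoundedness.
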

    \begin{proof}[Proof of Lemma \ref{lemma: tilde RDRO - hat RDRO}]
        For each fold $\ell$, denote  $ \Delta_m^{(\ell)} (x, p; \alpha) = \hat{m}^{(\ell)}(x, p; \alpha) - m(x, p; \alpha)$, 
        $v(x, p; \pi) = \frac{K(\frac{p - \pi(x)}{h})}{h \pi_0(p | x)} / S_{\ell}^\pi$,
        and $\Delta_{v}^{(\ell)}(x, p; \pi) =  \hat{v}^{(\ell)} - v  $, where $\hat{v}^{(\ell)}$ uses the estimated propensity score $\hat{\pi}_0^{(\ell)}(p | x)$.
        We decompose  $\hat{W}^{(\ell)}(\pi, \alpha) - \tilde{W}^{(\ell)}(\pi, \alpha) $ into three components
        \begin{align*}
            \hat{W}^{(\ell)}(\pi, \alpha) - \tilde{W}^{(\ell)}(\pi, \alpha)  
            = \mathcal{E}_1^{(\ell)} + \mathcal{E}_2^{(\ell)} - \mathcal{E}_3^{(\ell)},
        \end{align*}
        where
        \begin{align*}
            & \mathcal{E}_1^{(\ell)} = \frac{1}{|\mathcal{I}_\ell|} \sum_{t \in \mathcal{I}_\ell} \left( \Delta_m^{(\ell)}(x_t, \pi(x_t); \alpha) - \frac{1 }{S_{\ell}^\pi} \frac{K(\frac{p_t - \pi(x_t)}{h})}{h \pi_0(p_t | x_t)} \Delta_m^{(\ell)}(x_t, p_t; \alpha) \right) , \\
            & \mathcal{E}_2^{(\ell)} = \frac{1}{|\mathcal{I}_\ell|} \sum_{t \in \mathcal{I}_\ell} \left( \Delta_v^{(\ell)}(x_t, p_t; \pi) \left(  \exp(-p_t y_t/\alpha) - m(x_t, p_t; \alpha) \right) \right) , \\
            &\mathcal{E}_3^{(\ell)} = \frac{1}{|\mathcal{I}_\ell|} \sum_{t \in \mathcal{I}_\ell} \left( \Delta_v^{(\ell)}(x_t, p_t; \pi) \cdot \Delta_m^{(\ell)}(x_t, p_t; \alpha) \right) .
        \end{align*}
        Conditioning on dataset $\mathcal{I}_\ell^c$, the nuisance estimates $\hat{m}^{(\ell)}$ and $\hat{\pi}_0^{(\ell)}$ are fixed.
        
        \textbf{Step 1. Bias of $\sup_{\pi \in \Pi, \alpha \in (0, \overline{\alpha}]} \left| \mathcal{E}_1^{(\ell)} \right|$ , $\forall \ell \in [L]$.}
        Let $\ell\in [L]$ be fixed for now.
        We decompose the error $\mathcal{E}_1^{(\ell)}$ into two parts using the triangle inequality,
        \begin{align*}
             \left| \mathcal{E}_1^{(\ell)} \right|
             & = \left| \frac{1}{|\mathcal{I}_\ell|} \sum_{t \in \mathcal{I}_\ell} \left( \Delta_m^{(\ell)}(x_t, \pi(x_t); \alpha) - \frac{1 }{S_{\ell}^\pi} \frac{K(\frac{p_t - \pi(x_t)}{h})}{h \pi_0(p_t | x_t)} \Delta_m^{(\ell)}(x_t, p_t; \alpha) \right) \right| \\
            &\leq \left| \frac{1}{|\mathcal{I}_\ell|} \sum_{t \in \mathcal{I}_\ell} \left( \Delta_m^{(\ell)}(x_t, \pi(x_t); \alpha) - \frac{K(\frac{p_t - \pi(x_t)}{h})}{h \pi_0(p_t | x_t)} \Delta_m^{(\ell)}(x_t, p_t; \alpha) \right) \right| 
            + \left| \frac{S_{\ell}^\pi -1 }{S_{\ell}^\pi |\mathcal{I}_\ell|}  \sum_{t \in \mathcal{I}_\ell}  \frac{K(\frac{p_t - \pi(x_t)}{h})}{h \pi_0(p_t | x_t)}  \Delta_m^{(\ell)}(x_t, p_t; \alpha) \right|
        \end{align*}
        For the second term, using the fact that $m$ and $\hat{m}^{(\ell)}$ are bounded by 1, it is bounded by $\left| S_{\ell}^\pi -1 \right|$. From the concentration results in Lemma \ref{lemma: sup pi tilde RDRO h - tildeR DRO}, we have
         \begin{align*}
            \left| \frac{S_{\ell}^\pi -1 }{S_{\ell}^\pi |\mathcal{I}_\ell|}  \sum_{t \in \mathcal{I}_\ell}  \frac{K(\frac{p_t - \pi(x_t)}{h})}{h \pi_0(p_t | x_t)}  \Delta_m^{(\ell)}(x_t, p_t; \alpha) \right| 
            \leq \left| S_{\ell}^\pi -1 \right| 
            \lesssim
            \frac{d_{\PiC} }{| \mathcal{I}_\ell | h}+ \sqrt{\frac{d_{\PiC}}{| \mathcal{I}_\ell | h}} +\sqrt{\frac{\log(1/\epsilon)}{| \mathcal{I}_\ell | h}}+\frac{\log(1/\epsilon)}{| \mathcal{I}_\ell | h}
            ,
         \end{align*}
         with probability at least $1-\epsilon$.

         For the first term, we further split it into a zero-mean empirical process $\mathcal{E}_{1,1}^{(\ell)}$ and a conditional bias $\mathcal{E}_{1,2}^{(\ell)}$,
        \begin{align*}
            &  \phi_{\pi,\alpha}^{(\ell)}(x,p) = \Delta_m^{(\ell)}(x, \pi(x); \alpha) - \frac{K\left(\frac{p - \pi(x)}{h}\right)}{h \pi_0(p | x)} \Delta_m^{(\ell)}(x, p; \alpha) , \\
            &  \mathcal{E}_{1,1}^{(\ell)} = \frac{1}{|\mathcal{I}_\ell|} \sum_{t \in \mathcal{I}_\ell} \left( \phi_{\pi,\alpha}^{(\ell)}(\pi, \alpha) - \mathbb{E}\left[\phi_{\pi,\alpha}^{(\ell)} | \mathcal{I}_\ell^c \right] \right) , \quad
            \mathcal{E}_{1,2}^{(\ell)} = \mathbb{E}\left[\phi_{\pi,\alpha}^{(\ell)} | \mathcal{I}_\ell^c \right].
        \end{align*}
        For the conditional bias $\mathcal{E}_{1,2}^{(\ell)}$, we have
        \begin{align*}
            \left| \mathcal{E}_{1,2}^{(\ell)} \right| 
            &= \left| \mathbb{E} \left[ \Delta_m^{(\ell)}(X, \pi(X); \alpha) \Big | \  \mathcal{I}_\ell^c  \right] - \mathbb{E} \left[ \mathbb{E} \left[ \frac{K\left(\frac{P - \pi(X)}{h}\right)}{h \pi_0(P | X)} \Delta_m^{(\ell)}(X, P; \alpha)\Big | \ X, \mathcal{I}_\ell^c \right] \right] \right| \\
            &= \left| \mathbb{E} \left[ \Delta_m^{(\ell)}(X, \pi(X); \alpha) \Big | \  \mathcal{I}_\ell^c  \right]
             - \mathbb{E} \left[  \int_{p_{\min}}^1 \frac{1}{h } K\left(\frac{p - \pi(X)}{h}\right) \Delta_m^{(\ell)}(X, p; \alpha)  \mathrm{d}p \Big | \  \mathcal{I}_\ell^c \right] \right|\\
            &= \left| \mathbb{E} \left[ \Delta_m^{(\ell)}(X, \pi(X); \alpha) \Big | \  \mathcal{I}_\ell^c  \right]
            - \mathbb{E} \left[  \int \Delta_m^{(\ell)}(X, \pi(X) + th; \alpha) K(t)  \mathrm{d}t \Big | \  \mathcal{I}_\ell^c \right] \right| \\
            & = \left| \mathbb{E} \left[ \int \left( \Delta_m^{(\ell)}(X, \pi(X); \alpha) - \Delta_m^{(\ell)}(X, \pi(X) + th; \alpha) \right) K(t) \mathrm{d}t \right] \right|.
        \end{align*}
        Under $s$-order bounded derivative in Assumptions \ref{assumption: Lipschitz Continuity of FZ} and \ref{ass: hat m convergence}, 
        Following from the $s$-order Taylor expansion of $\Delta_m^{(\ell)}$ and Assumptions \ref{assumption: kernel function}, \ref{assumption: Lipschitz Continuity of FZ},\ref{ass: hat m convergence}, we obtain $\left| \mathcal{E}_{1,2}^{(\ell)} \right| 
           \lesssim h^s$.
        To bound $\mathcal{E}_{1,1}^{(\ell)}$, we examine the Pseudo-dimension of the function class 
         \begin{align*}
            \Phi_1 :=\left\{ (x,p) \mapsto \phi_{\pi,\alpha}^{(\ell)} - \mathbb{E}\left[\phi_{\pi,\alpha}^{(\ell)} \Big| \mathcal{I}_\ell^c \right] \Big | \ \pi \in \Pi, \alpha \in (0,\overline{\alpha}] \right\}.
        \end{align*}
        Let $\mathbf{M} := \{m_{\pi,\alpha}: x \mapsto m(x, \pi(x); \alpha) | \pi \in \Pi, \alpha \in (0, \overline{\alpha}]\}$. Note that for any fixed $(x,p)$, $ m(x, p; \alpha)$ is a monotonic function of $\alpha$ on $(0, \overline{\alpha}]$, then the function class $\mathcal{F}_{\mathcal{A}} := \left\{ f_\alpha : (x, p) \mapsto m(x, p; \alpha) \mid \alpha \in (0, \overline{\alpha}] \right\}$ has a Pseudo-dimension of 1 \citep[Theorem 11.3 in][]{anthony2009neural}. By Lemma \ref{lemma:combined_metric_props}(ii), the covering number of a one-parameter monotonic family satisfies $\log \mathcal{N}_2(\epsilon/2, \mathcal{F}_{\mathcal{A}}, |\mathcal{I}_{\ell}| ) \lesssim \log(1/\epsilon)$. Crucially, because the monotonicity is uniform, we can select a grid of $\alpha$ values that forms an $ \epsilon/2$-cover for all $x$.
        We first claim that 
        \begin{align*}
            \mathcal{N}(\epsilon, \mathbf{M}, L_2(P_{|\mathcal{I}_{\ell}|})) 
            \leq \mathcal{N}_2 \left(\frac{\epsilon}{2L_m}, \Pi, |\mathcal{I}_{\ell}| \right) \times \mathcal{N}_2\left(\frac{\epsilon}{2}, \mathcal{F}_{\mathcal{A}},  |\mathcal{I}_{\ell}| \right).
        \end{align*}
        For ease of notation, let \begin{align*}
            N_{2,\Pi} \left(\frac{\epsilon}{2L_m}\right) :=\mathcal{N}_2 \left(\frac{\epsilon}{2L_m}, \Pi, |\mathcal{I}_{\ell}| \right),
            \quad 
            N_{2,\mathcal{A}} \left(\frac{\epsilon}{2}\right) :=  \mathcal{N}_2\left(\frac{\epsilon}{2}, \mathcal{F}_{\mathcal{A}},  |\mathcal{I}_{\ell}| \right).
        \end{align*}
        Suppose $\{\pi_1,..., \pi_{N_{2,\Pi} (\frac{\epsilon}{2L_m})} \}$ is a $\frac{\epsilon}{2L_m}$-cover for $\Pi$, and $ f_{\alpha_1},...,f_{\alpha_{N_{2,\mathcal{A}} (\frac{\epsilon}{2}) }}$ is a $\frac{\epsilon}{2}$-cover for $\mathcal{F}_{\mathcal{A}}$. Then, we can prove that $\mathbf{M}^{\epsilon}$ is a $\epsilon$-cover set for $\mathbf{M}$, where $\mathbf{M}^{\epsilon}$ is defined as
        \begin{align*}
            \mathbf{M}^{\epsilon} = \left\{ m_{\pi_i,\alpha_j} : x \mapsto m(x, \pi_i(x); \alpha_j) : i \leq N_{2,\Pi} \left(\frac{\epsilon}{2L_m}\right), j\leq N_{2,\mathcal{A}} \left(\frac{\epsilon}{2}\right) \right\}.
        \end{align*}
        For $m(x,\pi(x);\alpha) \in \mathbf{M}$, we can pick $m(x,\pi'(x);\alpha') \in \mathbf{M}^{\epsilon}$ such that
        \begin{align*}
            \|\pi - \pi' \|_{L_2(P_{|\mathcal{I}_\ell|})} \leq \frac{\epsilon}{2L_m}, 
            \quad 
            \|f_{\alpha} - f_{\alpha'} \|_{L_2(P_{|\mathcal{I}_\ell|})} \leq \frac{\epsilon}{2}.
        \end{align*}
        Then, we have
        \begin{align*}
            &\|m_{\pi,\alpha} - m_{\pi',\alpha'} \|_{L_2(P_{|\mathcal{I}_\ell|})}
            =\sqrt{\frac{1}{|\mathcal{I}_\ell|} \sum_{t\in \mathcal{I}_\ell}\left( m_{\pi,\alpha}(x_t) - m_{\pi',\alpha'} (x_t)\right)^2 } \\
            &\leq \sqrt{\frac{1}{|\mathcal{I}_\ell|} \sum_{t\in \mathcal{I}_\ell}\left( m(x_t, \pi(x_t); \alpha) - m(x_t, \pi'(x_t); \alpha)\right)^2 + \left( m(x_t, \pi'(x_t); \alpha) - m(x_t, \pi'(x_t); \alpha')\right)^2} \\
            &\leq  \sqrt{\frac{1}{|\mathcal{I}_\ell|} \sum_{t\in \mathcal{I}_\ell} L^2_m ( \pi(x_t)-\pi'(x_t))^2 + \left( f_{\alpha}(x_t, \pi'(x_t)) - f_{\alpha'}(x_t, \pi'(x_t))\right)^2} 
            \leq \epsilon.
        \end{align*}
        Thus, by Lemma \ref{lemma:combined_metric_props}, we obtain
        \begin{align}
            \log \mathcal{N}\left( \epsilon, \mathbf{M},L_2(P_{|\mathcal{I}_\ell|})  \right)
            \lesssim d_\Pi \log \left( \frac{1}{\epsilon}\right). \label{eq: M cov num}
        \end{align}
        And let $ \overline{\mathbf{M}}:= \left\{\E[m_{\pi,\alpha}(X)] | \pi \in \Pi, \alpha \in (0, \overline{\alpha}] \right\}$. Then
        \begin{align*}
            \left\|\E[m_{\pi,\alpha}] - \E[m_{\pi',\alpha'}] \right\|_{L_2(P_{|\mathcal{I}_\ell|})}
            = \left|\E[m_{\pi,\alpha}] - \E[m_{\pi',\alpha'}] \right|
            \leq \E | m_{\pi,\alpha} - m_{\pi',\alpha'}|
        \end{align*}
        implies that 
        $\mathcal{N}(\epsilon,  \overline{\mathbf{M}}, L_2(P_{|\mathcal{I}_\ell|})) 
            \leq \mathcal{N}(\epsilon,  \mathbf{M}, L_1(\mathbf{P}_0))$.
        Consequently, by Lemma \ref{lemma:combined_metric_props}(iv), we have
        \begin{align}
            \log\mathcal{N}\left(\epsilon,\overline{\mathbf{M}}, L_2(P_{|\mathcal{I}_\ell|}) \right) \lesssim d_\Pi  \log\left(\frac{1}{\epsilon} \log \left(\frac{1}{\epsilon}  \right)\right) . \label{eq: bar M cov num}
        \end{align}
        Conditional on $\mathcal{I}_\ell^c$, the estimated regression $\hat{m}^{(\ell)}(x, p; \alpha)$ is a fixed continuous function. Similarly, we obtain 
        \begin{align}
            \log\mathcal{N}\left(\epsilon,\hat{\mathbf{M}} , L_2(P_{|\mathcal{I}_\ell|}) \right) \lesssim d_\Pi  \log\left(\frac{1}{\epsilon} \right) , \quad
            \log\mathcal{N}\left(\epsilon,\overline{\hat{\mathbf{M}}}, L_2(P_{|\mathcal{I}_\ell|}) \right) \lesssim d_\Pi  \log\left(\frac{1}{\epsilon}  \log \left(\frac{1}{\epsilon}  \right) \right) . \label{eq: hat M cov num}
        \end{align}
        for the classes \begin{align*}
            \hat{\mathbf{M}} := \left\{\hat{m}^{(\ell)}_{\pi, \alpha}: x\mapsto \hat{m}^{(\ell)}(x,\pi(x);\alpha) | \pi\in \Pi, \alpha \in (0,\overline{\alpha}]\right\} \ \text{ and } 
            \ \overline{\hat{\mathbf{M}}} := \left\{ \E \left[ \hat{m}^{(\ell)}_{\pi, \alpha}(X) \right] | \pi\in \Pi, \alpha \in (0,\overline{\alpha}] \right\}.
        \end{align*}
        Let $\mathcal{K}:= \left\{ k_{\pi,\alpha}: (x,p) \mapsto \frac{K\left(\frac{p-\pi(x)}{h}\right) }{h \pi_0(p | x)} \Delta_m^{(\ell)}(x,p;\alpha)  | \pi\in\Pi, \alpha\in (0,\overline{\alpha}]\right\}$. 
        Then we can claim that
        \begin{align*}
            \mathcal{N}\left(\epsilon, \mathcal{K}, L_2(P_{|\mathcal{I_\ell}|}) \right) 
            \leq \mathcal{N}_2\left(\epsilon/2, \mathcal{F}_\Pi,|\mathcal{I_\ell}| \right) \times \mathcal{N}_2\left(\frac{ h \eta \epsilon}{4C_2}, \mathcal{F}_{\mathcal{A}},|\mathcal{I_\ell}| \right)  
            \times \mathcal{N}_2\left(\frac{ h \eta \epsilon}{4C_2}, \hat{\mathcal{F}}_{\mathcal{A}},|\mathcal{I_\ell}| \right)
            ,
        \end{align*}
        where $\mathcal{F}_\Pi$ is defined in \eqref{eq: F012 def} and $ \hat{\mathcal{F}}^{(\ell)}_{\mathcal{A}}:= \left\{ \hat{f}^{(\ell)}_\alpha : (x, p) \mapsto \hat{m}^{(\ell)}(x, p; \alpha) \mid \alpha \in (0, \overline{\alpha}] \right\}$. It follows from
        \begin{align*}
            &\| k_{\pi,\alpha} -  k_{\pi',\alpha'}\|_{L_2(P_{|\mathcal{I}_\ell|})}\\
            &\leq \sqrt{\frac{1}{|\mathcal{I}_\ell|} \sum_{t\in \mathcal{I}_\ell} \left(\frac{K\left(\frac{p_t-\pi(x_t)}{h}\right) }{h \pi_0(p_t|x_t)}  - \frac{K\left(\frac{p_t-\pi'(x_t)}{h}\right) }{h \pi_0(p_t|x_t)}  \right)^2 } +\frac{C_2}{h \eta} \sqrt{\frac{1}{|\mathcal{I}_\ell|} \sum_{t\in \mathcal{I}_\ell} \left( m(x_t,p_t;\alpha) - m(x_t,p_t;\alpha') \right)^2} \\
            &+ \frac{C_2}{h \eta} \sqrt{\frac{1}{|\mathcal{I}_\ell|} \sum_{t\in \mathcal{I}_\ell} \left( \hat{m}^{(\ell)}(x_t,p_t;\alpha) - \hat{m}^{(\ell)}(x_t,p_t;\alpha') \right)^2}.
        \end{align*}
        Hence, we can bound the covering number as 
        \begin{align}
            \log  \mathcal{N}\left(\epsilon, \mathcal{K}, L_2(P_{|\mathcal{I_\ell}|}) \right) 
            \lesssim  d_\Pi \log\left( \frac{1}{h \epsilon} \right). \label{eq: K cov num}
        \end{align}
        Also for $\overline{\mathcal{K}}:= \left\{ \E\left[k_{\pi,\alpha}\right]  | \pi\in\Pi, \alpha\in (0,\overline{\alpha}]\right\}$, we have
        \begin{align}
            \log  \mathcal{N}\left(\epsilon, \overline{\mathcal{K}}, L_2(P_{|\mathcal{I_\ell}|}) \right) 
            \lesssim  d_\Pi \log\left( \frac{1}{h \epsilon} \log\left( \frac{1}{h \epsilon} \right)\right). \label{eq: bar K cov num}
        \end{align}
        Finally, we note that $\Phi_1$ is a subset of the vector space spanned by these classes $\mathbf{M}, \hat{\mathbf{M}}, \overline{\mathbf{M}}, \overline{\hat{\mathbf{M}}}, \mathcal{K}, \overline{\mathcal{K}} $. Combining \eqref{eq: M cov num}-\eqref{eq: bar K cov num}, we obtain 
        \begin{align*}
            \log \mathcal{N}\left(\epsilon, \Phi_1, L_2(P_{|\mathcal{I}_\ell|}) \right) \lesssim  d_\Pi \log\left( \frac{1}{h \epsilon} \log\left( \frac{1}{h \epsilon} \right)\right),
        \end{align*}
        which is a similar upper bound like \eqref{eq: G_Pi cover num}.
        We further have the bound on the functions in $\Phi_1$,
       \begin{align*}
           \left|  \phi_{\pi,\alpha}^{(\ell)}(x,p) \right| \leq 1 + \frac{C_2}{h\eta}, \quad \text{and} \quad
           \E\left[ \left( \phi_{\pi,\alpha}^{(\ell)}(x,p) \right)^2 \right] \leq 1 + \frac{C_2}{h\eta}.
       \end{align*} 
        It implies a similar Dudley’s integral in Lemma \ref{lemma: rademacher_bound_pdim},
        \begin{align*}
            \E \left[\sup_{\pi \in \Pi, \alpha \in (0, \overline{\alpha}]} \left|\mathcal{E}_{1,1}^{(\ell)}\right| \right]
            \lesssim \log \left( \frac{1}{h}\right) \cdot \left( \frac{d_\Pi}{ |\mathcal{I}_\ell| h} + \sqrt{\frac{d_\Pi}{ |\mathcal{I}_\ell| h}} + \frac{d_\Pi}{ |\mathcal{I}_\ell| } + \sqrt{\frac{d_\Pi}{ |\mathcal{I}_\ell| }}\right).
        \end{align*}
        Combining the above Rademacher complexity bound with Talagrand's Inequality (Lemma \ref{lemma:Talagrand’s Inequality}), we obtain with probability at least $1-\epsilon$
        \begin{align*}
            \sup_{\pi \in \Pi, \alpha \in (0, \overline{\alpha}]}\left|\mathcal{E}_{1,1}^{(\ell)}\right| 
            \lesssim \E \left[\sup_{\pi \in \Pi} \left|\mathcal{E}_{1,1}^{(\ell)}\right| \right]
            + \sqrt{\frac{\log(1/\epsilon)}{ |\mathcal{I}_\ell| h}} + \frac{\log(1/\epsilon)}{ |\mathcal{I}_\ell|h } 
            + \sqrt{\frac{\log(1/\epsilon)}{ |\mathcal{I}_\ell| }} + \frac{\log(1/\epsilon)}{ |\mathcal{I}_\ell| } ,
        \end{align*}
        Hence, for $s=1,2$ in Assumption \ref{assumption: Lipschitz Continuity of FZ}, we have
        \begin{align*}
            \sup_{\pi \in \Pi, \alpha \in (0, \overline{\alpha}]}\left| \mathcal{E}_{1}^{(\ell)} \right| \lesssim 
            \log \left( \frac{1}{h}\right) \left(
            \frac{d_\Pi}{ |\mathcal{I}_\ell| h} + \sqrt{\frac{d_\Pi}{ |\mathcal{I}_\ell| h}} 
            + \sqrt{\frac{\log(1/\epsilon)}{ |\mathcal{I}_\ell| h}} + \frac{\log(1/\epsilon)}{ |\mathcal{I}_\ell|h } 
            + \frac{d_\Pi}{ |\mathcal{I}_\ell| } + \sqrt{\frac{d_\Pi}{ |\mathcal{I}_\ell| }} + \sqrt{\frac{\log(1/\epsilon)}{ |\mathcal{I}_\ell| }} + \frac{\log(1/\epsilon)}{ |\mathcal{I}_\ell| }  
            \right)
             + h^s.
        \end{align*}
        \textbf{Step 2. Bound for $\sup_{\pi \in \Pi, \alpha \in (0, \overline{\alpha}]} \left| \mathcal{E}_{2}^{(\ell)} \right|$, $\forall \ell \in [L]$.} Let $ \ell \in [L]$ be fixed for now.
        For any $\pi \in \Pi$ and $\alpha \in (0, \overline{\alpha}]$, we define 
        \begin{align*}
            \psi_{\pi, \alpha}^{(\ell)}(x, p, y) = \Delta_v^{(\ell)}(x, p; \pi) \cdot (\exp(-p y/\alpha) - m(x, p; \alpha)) , \\
            \Delta_v^{(\ell)}(x, p; \pi) = \frac{1}{h } K\left( \frac{p-\pi(x)}{h} \right) \left(  \frac{1}{\hat{S}_\ell^{\pi} \hat{\pi}^{(\ell)}_0(p | x) }  - \frac{1}{S_\ell^\pi  \pi_0(p | x)} \right),
        \end{align*}
        and the nuisance estimator $\hat{\pi}^{(\ell)}_0$ is fixed conditioned on the auxiliary dataset $\mathcal{I}_\ell^c$, 
        We observe that each summand of $\mathcal{E}_{2}^{(\ell)}$ is zero-mean, by the definition of $m$. Now we analyze the Pseudo-dimension of the function class
        \begin{align*}
            \Psi_2 := \left\{ (x, p, y) \mapsto \psi_{\pi, \alpha}^{(\ell)}(x, p, y) | \ \pi \in \Pi, \alpha \in (0, \overline{\alpha}] \right\}.
        \end{align*}
        We decompose $\psi_{\pi, \alpha}^{(\ell)}$ into three components involving $\pi$ and $\alpha$,
        \begin{align*}
            & \left\|  \psi_{\pi, \alpha}^{(\ell)} -  \psi_{\pi', \alpha'}^{(\ell)}\right\|_{L_2(P_{|\mathcal{I}_\ell|})} 
            = \sqrt{\frac{1}{|\mathcal{I}_\ell|} \sum_{t\in \mathcal{I}_\ell}\left( \psi_{\pi, \alpha}^{(\ell)}(x_t, p_t, y_t) - \psi_{\pi', \alpha'}^{(\ell)}(x_t, p_t, y_t)\right)^2 } \\ 
            &\leq \sqrt{\frac{1}{|\mathcal{I}_\ell|} \sum_{t\in \mathcal{I}_\ell} \left( \Delta_v^{(\ell)}(x_t, p_t; \pi) - \Delta_v^{(\ell)}(x_t, p_t; \pi') \right)^2 (\exp(-p y/\alpha) - m(x, p; \alpha))^2} \\
            &  \quad  +\sqrt{\frac{1}{|\mathcal{I}_\ell|} \sum_{t\in \mathcal{I}_\ell} \left( \Delta_v^{(\ell)}(x_t, p_t; \pi') \right)^2 (\exp(-p_t y_t/\alpha)  - \exp(-p_t y_t/\alpha') )^2} \\
            &  \quad  +\sqrt{\frac{1}{|\mathcal{I}_\ell|} \sum_{t\in \mathcal{I}_\ell} \left( \Delta_v^{(\ell)}(x_t, p_t; \pi') \right)^2 ( m(x_t, p_t; \alpha)  - m(x_t, p_t; \alpha'))^2}.
        \end{align*}
       
        For the first term, we use the fact that $(a+b)^2 \le 2(a^2+b^2)$ and $k_\pi(x,p):=\frac{1}{h } K\left( \frac{p-\pi(x)}{h} \right)$,
        \begin{align*}
            &\sqrt{\frac{1}{|\mathcal{I}_\ell|} \sum_{t\in \mathcal{I}_\ell} \left( \Delta_v^{(\ell)}(x_t, p_t; \pi) - \Delta_v^{(\ell)}(x_t, p_t; \pi') \right)^2 (\exp(-p_t y_t/\alpha) - m(x_t, p_t; \alpha))^2} \\
            &\leq  
            \sqrt{\frac{2}{|\mathcal{I}_\ell|} \sum_{t\in \mathcal{I}_\ell} \left( \frac{k_\pi(x_t,p_t)}{S_\ell^\pi \pi_0 (p_t|x_t)} - \frac{k_{\pi'} (x_t,p_t)}{S_\ell^{\pi'} \pi_0 (p_t|x_t)} \right)^2 } 
             +  \sqrt{\frac{2}{|\mathcal{I}_\ell|} \sum_{t\in \mathcal{I}_\ell} \left( \frac{k_\pi(x_t,p_t)}{\hat{S}_\ell^{\pi} \hat{\pi}^{(\ell)}_0(p_t|x_t) } - \frac{k_{\pi'} (x_t,p_t)}{\hat{S}_\ell^{\pi'} \hat{\pi}^{(\ell)}_0(p_t|x_t)} \right)^2 } 
        \end{align*}
        By Step 1 in Lemma \ref{lemma: sup pi tilde RDRO h - tildeR DRO}, we can pick $ h \geq \sqrt{\frac{d_\Pi + \log(1/\epsilon)}{2 |\mathcal{I}_\ell|}}$ 
        such that $\sup_{\pi \in \Pi} | S_\ell^\pi| \leq \frac{3}{2}$ and $\inf_{\pi \in \Pi} \left|S_\ell^{\pi}\right| \geq 1/2$
        with probability at least $1-\epsilon$.
        Then, for any $\pi, \pi' \in \Pi$, we have
        \begin{align*}
            &\sqrt{\frac{1}{|\mathcal{I}_\ell|} \sum_{t\in \mathcal{I}_\ell} \left( \frac{k_\pi(x_t,p_t)}{S_\ell^\pi \pi_0(p_t|x_t)} - \frac{k_{\pi'}(x_t,p_t)}{S_\ell^{\pi'} \pi_0(p_t|x_t) } \right)^2 } 
            = \sqrt{\frac{1}{|\mathcal{I}_\ell|} \sum_{t\in \mathcal{I}_\ell} \left( \frac{k_\pi(x_t, p_t) S_\ell^{\pi'} - k_{\pi'}(x_t, p_t) S_\ell^\pi}{S_\ell^\pi S_\ell^{\pi'} \pi_0(p_t|x_t)} \right)^2 } \\
            &= \sqrt{\frac{1}{|\mathcal{I}_\ell|} \sum_{t\in \mathcal{I}_\ell} \left( \frac{(k_\pi(x_t, p_t) - k_{\pi'}(x_t, p_t)) S_\ell^{\pi'} + k_{\pi'}(x_t, p_t) (S_\ell^{\pi'} - S_\ell^\pi)}{S_\ell^\pi S_\ell^{\pi'} \pi_0(p_t|x_t)} \right)^2 } \\
            &\leq \frac{2}{\eta} \|k_\pi - k_{\pi'}\|_{L_2(P_{|\mathcal{I}_\ell|})} + 2 |S_\ell^\pi - S_\ell^{\pi'}|
            \leq \frac{4}{\eta}\|k_\pi - k_{\pi'}\|_{L_2(P_{|\mathcal{I}_\ell|})} ,
        \end{align*}
        where we use $\sum_t k_{\pi'}^2/(S_{\ell}^{\pi'} \pi_0 )^2 \leq 1$, and
        \begin{align*}
            |S_\ell^\pi - S_\ell^{\pi'}|
            = \left| \frac{1}{|\mathcal{I}_\ell|} \sum_{t \in \mathcal{I}_\ell} \frac{1}{\pi_0(p_t|x_t)} (k_\pi(x_t, p_t) - k_{\pi'}(x_t, p_t)) \right|
            \leq \frac{1}{\eta}\sqrt{\frac{1}{|\mathcal{I}_\ell|} \sum_{t \in \mathcal{I}_\ell} (k_\pi(x_t, p_t) - k_{\pi'}(x_t, p_t))^2}
            = \frac{1}{\eta} \|k_\pi - k_{\pi'}\|_{L_2(P_{|\mathcal{I}_\ell|})}.
        \end{align*}     
        Furthermore, we note that
        \begin{align}
            \sup_{\pi\in \Pi}  |S_\ell^\pi - \hat{S}_\ell^\pi|
            &=  \sup_{\pi\in \Pi}  \left| \frac{1}{|\mathcal{I}_\ell|} \sum_{t \in \mathcal{I}_\ell} \frac{\hat{\pi}^{(\ell)}_0(p_t|x_t) - \pi_0(p_t|x_t)}{\hat{\pi}^{(\ell)}_0(p_t|x_t) \pi_0(p_t|x_t)} k_\pi(x_t, p_t) \right|  
            &\leq  \frac{1}{\eta} \|\hat{\pi}^{(\ell)}_0 - \pi_0\|_\infty \cdot  \sup_{\pi\in \Pi} \left( \frac{1}{|\mathcal{I}_\ell|} \sum_{t \in \mathcal{I}_\ell} \frac{k_\pi(x_t, p_t)}{\pi_0(p_t|x_t)} \right) \nonumber\\
            &= \frac{1}{\eta} \|\hat{\pi}^{(\ell)}_0 - \pi_0\|_\infty \cdot \sup_{\pi\in \Pi} S_\ell^\pi  
            \leq \frac{3}{2\eta} \|\hat{\pi}^{(\ell)}_0 - \pi_0\|_\infty,\label{eq: S - hatS leq infty norm sup S}
        \end{align}
        by picking $ h \geq  \sqrt{\frac{d_\Pi+\log(1/\epsilon)}{2 |\mathcal{I}_\ell| }}$, we have with probability  $1- \epsilon$
        \begin{align*}
            \sup_{\pi \in \Pi}  \left| \E \left[ S_\ell^\pi \right] - \E \left[ \hat{S}_\ell^{\pi}\right] \right|  
            \leq
            \E \left[ \sup_{\pi \in \Pi} \left|S_\ell^\pi - \hat{S}_\ell^{\pi}\right| \right]
            \leq \frac{3}{2\eta} \cdot \left( \E \left[ \left\| \hat{\pi}^{(\ell)}_0(p | x) - \pi_0(p | x) \right\|_\infty^2 \right] \right)^{1/2}  \lesssim \frac{3 T^{-\gamma_\pi}}{2 \eta}.
        \end{align*}
        Then, there exists a sufficiently large $T$ such that $\frac{3 T^{-\gamma_\pi}}{2 \eta} \leq 1/4$ such that $ \inf_{\pi \in \Pi} \E\left[\hat{S}_\ell^\pi\right] \geq  \inf_{\pi \in \Pi} \E\left[S_\ell^\pi\right]  -1/4 \ge 1/4$ with probability at least $1-\epsilon $.
        Furthermore, we define \begin{align*}
             \hat{f}^{(\ell)}_{\pi}(x,p) =  \frac{1}{h \hat{\pi}^{(\ell)}_0(p | x)} K\left(\frac{p-\pi(x)}{h}\right) ,\ \hat{\mathcal{F}}^{(\ell)}_\Pi := \{f^{(\ell)}_{\pi} | \pi\in\Pi \} , \quad 
             \hat{\mathcal{G}}_\Pi^{(\ell)} := \{\hat{g}^{(\ell)}_{\pi} = \hat{f}^{(\ell)}_{\pi} - \E[\hat{f}^{(\ell)}_{\pi}] \big| \pi\in\Pi \} . 
        \end{align*}
        Using Lemma \ref{lemma: rademacher_bound_pdim}, we have
        \begin{align*}
            \E \left[ \sup_{\pi\in \Pi}  \left| \frac{1}{|\mathcal{I}_\ell|}\sum_{t\in \mathcal{I}_\ell} \hat{f}^{(\ell)}_{\pi}(x_t,p_t) - \E[\hat{f}^{(\ell)}_{\pi}(x,p)]\right| \right]
            \lesssim 
            \frac{d_\Pi}{ |\mathcal{I}_\ell| h} + \sqrt{\frac{d_\Pi}{ |\mathcal{I}_\ell| h}}.
        \end{align*}
        By a similar argument in Step 1 of the proof of Lemma \ref{lemma: sup pi tilde RDRO h - tildeR DRO}, we have
        \begin{align}
            \sup_{\pi\in \Pi}  |\hat{f}^{(\ell)}_{\pi}(x,p) | \leq \frac{C_2}{h \eta},
            \text{ and } \ \sup_{\pi\in \Pi} \E \left[ \left(\hat{f}^{(\ell)}_{\pi}(x,p) \right)^2 \right] 
            \leq \frac{C_2}{h \eta^2}. \label{eq: hat f pi 1 2 bound}
        \end{align}
        Recalling that $\hat{\pi}^{(\ell)}_0$ is trained on $T-|\mathcal{I}_\ell|$ data points which is fixed, and $\{\hat{f}^{(\ell)}_{\pi}(x_t,p_t)\}_{t\in \mathcal{I}_\ell}$ is i.i.d., we can get
        \begin{align*}
            \sup_{\pi\in \Pi}  \left| \frac{1}{|\mathcal{I}_\ell|}\sum_{t\in \mathcal{I}_\ell} \hat{f}^{(\ell)}_{\pi}(x_t,p_t) - \E[\hat{f}^{(\ell)}_{\pi}(x,p)]\right|
            \lesssim 
            \frac{d_\Pi}{ |\mathcal{I}_\ell| h} + \sqrt{\frac{d_\Pi}{ |\mathcal{I}_\ell| h}} + 
            \sqrt{\frac{\log(1/\epsilon)}{ |\mathcal{I}_\ell| h}} + \frac{\log(1/\epsilon)}{ |\mathcal{I}_\ell|h } .
        \end{align*}
      To proceed, picking $ h \geq \sqrt{\frac{d_\Pi + \log(1/\epsilon)}{8 |\mathcal{I}_\ell|}}$, we conclude $\inf_{\pi \in \Pi} \left|\hat{S}_\ell^{\pi}\right| \geq \inf_{\pi\in\Pi} \E[\hat{S}_\ell^{\pi}]-1/8 \geq 1/8$ with probability at least $1-2\epsilon $.
        Thus, we reduce the second term to
        \begin{align*}
            \sqrt{\frac{1}{|\mathcal{I}_\ell|} \sum_{t\in \mathcal{I}_\ell} \left( \frac{k_\pi(x_t,p_t)}{\hat{S}_\ell^{\pi} \hat{\pi}^{(\ell)}_0(p_t|x_t) } - \frac{k_{\pi'} (x_t,p_t)}{\hat{S}_\ell^{\pi'} \hat{\pi}^{(\ell)}_0(p_t|x_t)} \right)^2 } 
            \leq \frac{8}{\eta} \|k_\pi - k_{\pi'}\|_{L_2(P_{|\mathcal{I}_\ell|})} + 8 |\hat{S}_\ell^\pi - \hat{S}_\ell^{\pi'}|
            \leq \frac{16}{\eta} \|k_\pi - k_{\pi'}\|_{L_2(P_{|\mathcal{I}_\ell|})} .
        \end{align*}

        For the second and third terms, using Assumption \ref{assumption: kernel function}, there exists a constant $C>0$ such that $|K(u)| \leq C$, above results $\inf_{\pi \in \Pi} |\hat{S}_\ell^\pi|\geq \frac{1}{8}$ and $\inf_{\pi \in \Pi} |S_\ell^\pi|\geq \frac{1}{2}$ satisfied with probability at least $1-2\epsilon$, and the assumption that $\hat{\pi}_0$ and $\pi_0$ are lower bounded by $\eta$,
        \begin{align*}
            \sup_{\pi\in\Pi}|\Delta_v^{(\ell)}(x_t, p_t; \pi) |
            \leq \frac{16 C}{h\eta}  \sup_{\pi\in\Pi} |S_\ell^\pi - \hat{S}_\ell^\pi| + \frac{2 C}{h \eta^2} \|\pi_0 -\hat{\pi}_0\|_\infty 
            \overset{\text{ by \eqref{eq: S - hatS leq infty norm sup S}} }{\leq} \frac{26 C}{h\eta^2} \| \hat{\pi}^{(\ell)}_0 - \pi_0\|_\infty  
            \leq  \frac{26 C}{h\eta^2}.
        \end{align*}
        Thus, we reduce the second and third terms to
        \begin{align*}
            &\sqrt{\frac{1}{|\mathcal{I}_\ell|} \sum_{t\in \mathcal{I}_\ell} \left( \Delta_v^{(\ell)}(x_t, p_t; \pi') \right)^2 (\exp(-p_t y_t/\alpha)  - \exp(-p_t y_t/\alpha') )^2} 
            \leq \frac{26 C}{h\eta^2} \sqrt{\frac{1}{|\mathcal{I}_\ell|} \sum_{t\in \mathcal{I}_\ell} (\exp(-p_t y_t/\alpha)  - \exp(-p_t y_t/\alpha') )^2}, \\
            & \sqrt{\frac{1}{|\mathcal{I}_\ell|} \sum_{t\in \mathcal{I}_\ell} \left( \Delta_v^{(\ell)}(x_t, p_t; \pi') \right)^2 ( m(x_t, p_t; \alpha)  - m(x_t, p_t; \alpha'))^2} 
             \leq \frac{26 C}{h\eta^2} \sqrt{\frac{1}{|\mathcal{I}_\ell|} \sum_{t\in \mathcal{I}_\ell} ( m(x_t, p_t; \alpha)  - m(x_t, p_t; \alpha'))^2}.
        \end{align*}

        Combined with the finite pseudo-dimension of $\mathcal{K}_\Pi = \left\{ f_\pi: (x,p) \mapsto \frac{1 }{ h} K\left(\frac{p - \pi(x)}{h}\right) | \pi \in \PiC \right\}$ and the covering number of a one-parameter monotonic family, this inequality implies that
        \begin{align*}
            \log \mathcal{N}\left(\epsilon, \Psi_2, L_2(P_{|\mathcal{I}_\ell|}) \right)
            \lesssim d_\Pi \log\left( \frac{1}{h \epsilon} \right).
        \end{align*}
        By the similar argument in \eqref{eq: hat f pi 1 2 bound}, we get with probability at least $1-3\epsilon$,
        \begin{align*}
            \sup_{\pi\in \Pi, \alpha \in (0, \overline{\alpha}]} \left| \mathcal{E}_2^{(\ell)} \right|
            &\lesssim \log\big(\frac{1}{h}\big) \left(
            \frac{d_\Pi}{ |\mathcal{I}_\ell| h} + \sqrt{\frac{d_\Pi}{ |\mathcal{I}_\ell| h}} 
            + \sqrt{\frac{\log(1/\epsilon)}{ |\mathcal{I}_\ell| h}} + \frac{\log(1/\epsilon)}{ |\mathcal{I}_\ell|h } 
            + \frac{d_\Pi}{ |\mathcal{I}_\ell| } + \sqrt{\frac{d_\Pi}{ |\mathcal{I}_\ell| }} + \sqrt{\frac{\log(1/\epsilon)}{ |\mathcal{I}_\ell| }} + \frac{\log(1/\epsilon)}{ |\mathcal{I}_\ell| }  \right)
            \\
            &\lesssim 
            \log\big(\frac{1}{h}\big) \left( \frac{d_\Pi}{ |\mathcal{I}_\ell| h} + \sqrt{\frac{d_\Pi}{ |\mathcal{I}_\ell| h}} 
                + \sqrt{\frac{\log(1/\epsilon)}{ |\mathcal{I}_\ell| h}} + \frac{\log(1/\epsilon)}{ |\mathcal{I}_\ell|h } \right).
        \end{align*}

        \textbf{Step 3. Bound for $\sup_{\pi\in \Pi, \alpha \in (0, \overline{\alpha}]} \left| \mathcal{E}_3^{(\ell)} \right|$, $ \forall \ell \in [L]$ .} Let $ \ell \in [L]$ be fixed for now.
        By previous step 2, picking $ h \geq \sqrt{\frac{d_\Pi + \log(1/\epsilon)}{2|\mathcal{I}_\ell|}}$ 
        we have $\sup_{\pi \in \Pi} | S_\ell^\pi| \leq \frac{3}{2}$ and $\inf_{\pi \in \Pi} \left|S_\ell^{\pi}\right| \geq 1/2$
        with probability at least $1-\epsilon$.
        Then, utilizing the normalization of $S_\ell^\pi$ and $\hat{S}_\ell^\pi$, we have
        \begin{align*}
            &\frac{1}{|\mathcal{I}_\ell|}\sum_{t\in \mathcal{I}_\ell} \frac{K\left( \frac{p_t-\pi(x_t)}{h}\right)}{h} \left| \frac{1}{ S_\ell^\pi \pi_0(p_t|x)} - \frac{1}{ \hat{S}_\ell^\pi \hat{\pi}^{(\ell)}_0(p_t|x)} \right| 
            = \frac{1}{|\mathcal{I}_\ell|}\sum_{t\in \mathcal{I}_\ell} \frac{K\left( \frac{p_t-\pi(x_t)}{h}\right)}{h}  \left| \frac{\hat{S}_\ell^\pi (\hat{\pi}^{(\ell)}_0 - \pi_0) + \pi_0 (\hat{S}_\ell^\pi - S_\ell^\pi)}{S_\ell^\pi \hat{S}_\ell^\pi \pi_0 \hat{\pi}^{(\ell)}_0} \right| \\
            &\le  \frac{1}{\eta}\frac{1}{|\mathcal{I}_\ell|}\sum_{t\in \mathcal{I}_\ell} \frac{K\left( \frac{p_t-\pi(x_t)}{h}\right)}{h \pi_0(p_t|x_t)} \frac{1}{S_\ell^\pi} \|\hat{\pi}^{(\ell)}_0 - \pi_0\|_\infty + 2 \frac{1}{|\mathcal{I}_\ell|}\sum_{t\in \mathcal{I}_\ell} \frac{K\left( \frac{p_t-\pi(x_t)}{h}\right)}{h \hat{\pi}^{(\ell)}_0(p_t|x_t) }  \frac{1}{\hat{S}_\ell^\pi}  |S_\ell^\pi - \hat{S}_\ell^\pi|  \\
            & \leq \frac{1}{\eta} \|\hat{\pi}^{(\ell)}_0 - \pi_0\|_\infty + 2 |S_\ell^\pi - \hat{S}_\ell^\pi|.
        \end{align*}       
Furthermore, we reduce the expectation to
        \begin{align*}
            &\E\left[ \sup_{\pi\in \Pi, \alpha \in (0, \overline{\alpha}]} \left| \mathcal{E}_3^{(\ell)} \right| \right] \\
            &= \E\left[ \sup_{\pi\in \Pi, \alpha \in (0, \overline{\alpha}]} \left|
                \frac{1}{|\mathcal{I}_\ell| } \sum_{t\in \mathcal{I}_\ell } \frac{K\left( \frac{p_t-\pi(x_t)}{h}\right)}{h} \left( \frac{1}{ S_\ell^\pi \pi_0 (p_t|x_t)} - \frac{1}{ \hat{S}_\ell^\pi \hat{\pi}^{(\ell)}_0(p_t|x_t)} \right)  \left( \hat{m}^{(\ell)}(x_t, p_t; \alpha) - m(x_t, p_t; \alpha) \right)
                \right| \right] \\
            &\leq E\left[  \left( \frac{1}{\eta} \|\hat{\pi}^{(\ell)}_0 - \pi_0\|_\infty + 2 \sup_{\pi\in \Pi}  |S_\ell^\pi - \hat{S}_\ell^\pi|  \right) \cdot \sup_{\alpha \in (0, \overline{\alpha}]} \left\| \hat{m}^{(\ell)}- m \right\|_\infty \right]\\
            &\leq \left( \frac{1}{\eta} \left(\E[\|\hat{\pi}^{(\ell)}_0  - \pi_0\|_\infty^2 ] \right)^{1/2} + 2 \E \left[ \left(\sup_{\pi\in \Pi}  |S_\ell^\pi - \hat{S}_\ell^\pi|  \right)^2 \right]^{1/2}  \right)\left(\E\left[ \left(\sup_{\alpha \in (0, \overline{\alpha}]} \left\| \hat{m}^{(\ell)}- m \right\|_\infty \right)^2 \right] \right)^{1/2}.
        \end{align*}
        Using definition of estimation rate, and the fact that $\hat{\pi}^{(\ell)}_0 (p | x), \hat{m}^{(\ell)}(x, p; \alpha)$ is trained on $T-|\mathcal{I}_\ell|$ data points (due to cross-fitting), we continue to bound \eqref{eq: S - hatS leq infty norm sup S}
        \begin{align*}
            \E \left[ \left(\sup_{\pi\in \Pi}  |S_\ell^\pi - \hat{S}_\ell^\pi|  \right)^2 \right]
            &\leq \frac{1}{\eta^2} \E \left[ \left\| \hat{\pi}^{(\ell)}_0 - \pi_0 \right\|_\infty^2 \right] \cdot \E \left[ \left(\sup_{\pi\in \Pi} S_\ell^\pi \right)^2 \right] 
            \leq \frac{9}{4 \eta^2} \E \left[ \left\| \hat{\pi}^{(\ell)}_0 - \pi_0 \right\|_\infty^2 \right].
        \end{align*}
        Thus, we have
        \begin{align*}
            \E\left[ \sup_{\pi\in \Pi, \alpha \in (0, \overline{\alpha}]} \left| \mathcal{E}_3^{(\ell)} \right| \right]
            &\leq  \frac{4}{\eta} \left(\E[\|\hat{\pi}^{(\ell)}_0  - \pi_0\|_\infty^2 ] \right)^{1/2} \left(\E\left[ \left(\sup_{\alpha \in (0, \overline{\alpha}]} \left\| \hat{m}^{(\ell)}- m \right\|_\infty \right)^2 \right] \right)^{1/2} 
            \lesssim \frac{4}{\eta} T^{-\gamma_\pi-\gamma_m}.
        \end{align*}
    Following step (a)-(b) in the proof of Lemma \ref{lemma: RDRO - tilde RDRO h},
        \begin{align*}
            \left| \tilde{R}_{\delta}(\pi) - \hat{R}_{\delta}(\pi) \right| 
            \leq \sup_{\alpha > 0}  \alpha\left| \log \frac{\frac{1}{L}\sum_{\ell=1}^L\hat{W}^{(\ell)}(\pi, \alpha) }{\frac{1}{L}\sum_{\ell=1}^L \tilde{W}^{(\ell)}(\pi, \alpha)} \right|
            \leq \frac{4}{\underline{\rho} L} \sum_{\ell=1}^L \sup_{\alpha > 0}  \alpha \left| \hat{W}^{(\ell)}(\pi, \alpha) - \tilde{W}^{(\ell)}(\pi, \alpha)\right|, 
        \end{align*}
    and $\tilde{W}^{(\ell)}(\pi,\alpha) \geq \underline{\rho}/2$ with probability at least $1-\epsilon$ and $ \Omega\left(\sqrt{\frac{ d_\Pi + \log(1/\epsilon)}{T }} \right)\leq h \leq \mathcal{O}\left(\frac{\underline{\rho}}{8 L_m C_1} \right)$ by Lemma \ref{lemma: tilde WT geq b/2},
    we obtain with probability at least $1-4\epsilon$, by combining Steps 1-3,
    \begin{align*}
        \left| \tilde{R}_{\delta}(\pi) - \hat{R}_{\delta}(\pi) \right| 
        \lesssim \overline{\alpha} \Big( \log \Big( \frac{1}{h}\Big)  \Big(
            \frac{d_\Pi}{ T h / L} + \sqrt{\frac{d_\Pi}{ T h / L}} 
            + \sqrt{\frac{\log(1/\epsilon)}{ T h / L}} + \frac{\log(1/\epsilon)}{ T h / L } 
            \Big) + T^{-\gamma_\pi-\gamma_m}  + h^s \Big), 
    \end{align*}
    for $s=1,2$ in Assumption \ref{assumption: Lipschitz Continuity of FZ}, which completes the proof.
\end{proof}

\begin{lemma}[Rademacher Complexity Bound via Pseudo-dimension]
    \label{lemma: rademacher_bound_pdim}
    Define a function class $\mathcal{F}_{\PiC} = \{f_\pi | \pi\in \Pi\}$, which has finite pseudo dimension $\Pdim( \mathcal{F}_{\Pi}) \leq C \cdot \Pdim(\Pi)$ for some constant $C$, and is uniformly bounded by $B_\Pi$. Let its second-moment be bounded by $\sigma^2$. Under Assumption \ref{assumption: pdim_dimension}, the expectation of the Rademacher complexity is bounded as:
    \begin{align*}
        \E\Big[\sup_{\pi \in \Pi} \Big| \frac{1}{T'}\sum_{t=1}^{T'} f_\pi(x_t,p_t) - \E[f_\pi] \Big|\Big] \lesssim 
         \Pdim(\PiC) \frac{B_\Pi}{T'}+   \sqrt{\frac{\sigma^2 \Pdim(\PiC) }{ T'} }, \ \text{ for any } T'  \geq \Pdim( \mathcal{F}_{\Pi}).
    \end{align*}
    \end{lemma}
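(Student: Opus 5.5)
The plan is the standard localized-Rademacher route: symmetrization, then a local Dudley entropy integral fed by a pseudo-dimension covering bound, and finally a fixed-point step that absorbs the resulting self-referential term.

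\emph{Symmetrization.} Write $\mathcal{G}=\{f_\pi-\E f_\pi:\pi\in\Pi\}$. By the usual symmetrization argument with i.i.d.\ Rademacher signs $\varepsilon_t$,
\[
\E\Big[\sup_{\pi}\Big|\tfrac{1}{T'}\textstyle\sum_t f_\pi(x_t,p_t)-\E f_\pi\Big|\Big]\le 2\,\E\Big[\sup_{\pi}\Big|\tfrac{1}{T'}\textstyle\sum_t\varepsilon_t f_\pi(x_t,p_t)\Big|\Big].
\]
Only the uncentered class $\mathcal{F}_\Pi$ then needs to be controlled.

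\emph{Entropy bound.} Conditionally on the sample, I would bound the empirical $L_2(P_{T'})$ covering numbers of $\mathcal{F}_\Pi$ through its pseudo-dimension. Haussler's packing bound, i.e.\ Lemma~\ref{lemma:combined_metric_props}(ii), gives
\[
\log\Ncover(\epsilon,\mathcal{F}_\Pi,L_2(P_{T'}))\lesssim \Pdim(\mathcal{F}_\Pi)\log(B_\Pi/\epsilon)\lesssim d_\Pi\log(B_\Pi/\epsilon),
\]
where the last step uses the hypothesis $\Pdim(\mathcal{F}_\Pi)\le C\,d_\Pi$. I would then apply the local (truncated) Dudley integral at the empirical radius $\hat\sigma^2=\sup_\pi \frac{1}{T'}\sum_t f_\pi^2$. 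This yields
\[
\E_\varepsilon\sup_\pi\Big|\tfrac{1}{T'}\textstyle\sum_t\varepsilon_t f_\pi\Big|\lesssim \frac{1}{\sqrt{T'}}\int_0^{\hat\sigma}\sqrt{d_\Pi\log(B_\Pi/\epsilon)}\,d\epsilon\lesssim \hat\sigma\sqrt{\frac{d_\Pi\log(B_\Pi/\hat\sigma)}{T'}}.
\]

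\emph{Fixed-point step (main obstacle).} The right-hand side involves the random $\hat\sigma$, while the target bound involves only the deterministic $\sigma$, so the two have to be linked. By contraction applied to $u\mapsto u^2$ on $[-B_\Pi,B_\Pi]$ (Lipschitz constant $2B_\Pi$),
\[
\E\hat\sigma^2\le\sigma^2+\E\sup_\pi\Big|\tfrac{1}{T'}\textstyle\sum_t f_\pi^2-\E f_\pi^2\Big|\le \sigma^2+4B_\Pi\,\E\,\mathcal{R}_{T'}(\mathcal{F}_\Pi).
\]
Writing $R=\E\,\mathcal{R}_{T'}(\mathcal{F}_\Pi)$ and using Jensen's inequality, this gives the self-bounding inequality
\[
R\lesssim\sqrt{(\sigma^2+B_\Pi R)\,d_\Pi/T'}.
\]
Solving the quadratic in $R$ gives $R\lesssim B_\Pi d_\Pi/T'+\sqrt{\sigma^2d_\Pi/T'}$, which is the claimed bound. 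This is the approach of Gin\'e--Guillou and of Theorem~3.12 in Koltchinskii.

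The logarithmic factor $\log(B_\Pi/\sigma)$ needs care. Because $B_\Pi/\sigma\asymp h^{-1/2}$, it is bounded by a constant only if it is absorbed into $\lesssim$. In the applications to Lemmas~\ref{lemma: sup pi tilde RDRO h - tildeR DRO} and \ref{lemma: tilde RDRO - hat RDRO} it reappears as the $\log(1/h)$ factor, and I would keep track of it explicitly there. The restriction $T'\ge\Pdim(\mathcal{F}_\Pi)$ is what makes the packing bound valid in the regime where $\hat\sigma$ is small relative to $B_\Pi$.
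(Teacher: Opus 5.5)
Your route is the paper's: symmetrization, the pseudo-dimension entropy bound $\log\Ncover(\epsilon,\mathcal{F}_\Pi,L_2(P_{T'}))\lesssim d_\Pi\log(B_\Pi/\epsilon)$, and a localization step. The paper's closing expression $B_\Pi\sqrt{T'd_\Pi}\,\big(\sqrt{d_\Pi/T'}+\sqrt{d_\Pi/T'+\sigma^2/B_\Pi^2}\big)$, which it imports from Lemma A.5 of Kitagawa--Tetenov, is (up to constants) exactly $T'$ times the larger root of your quadratic $R^2\lesssim(\sigma^2+B_\Pi R)\,d_\Pi/T'$.

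The gap is the one you flag at the end, and it is a genuine gap, not bookkeeping. Your local Dudley bound gives $\hat\sigma\sqrt{d_\Pi\log(eB_\Pi/\hat\sigma)/T'}$. The self-bounding inequality you write next drops the $\sqrt{\log}$ factor without justification. Carried through honestly, your argument proves
\[
\E\Big[\sup_{\pi\in\Pi}\Big|\frac{1}{T'}\sum_{t=1}^{T'} f_\pi(x_t,p_t)-\E[f_\pi]\Big|\Big]\lesssim \frac{B_\Pi d_\Pi\log(eB_\Pi/\sigma)}{T'}+\sqrt{\frac{\sigma^2 d_\Pi\log(eB_\Pi/\sigma)}{T'}}.
\]
(Jensen and the monotonicity step are legitimate here because $v\mapsto\sqrt{v\log(e^2B_\Pi^2/v)}$ is concave and nondecreasing on $(0,B_\Pi^2]$.) This is the Gin\'e--Guillou form, not the displayed statement.

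No sharper argument can remove the factor, because the log-free bound with a universal constant fails under the lemma's hypotheses. Take $m\asymp 1/h$ constant policies whose kernel bumps $f_\pi$ (with compactly supported $K$) have disjoint supports. Then $\Pdim(\Pi)=\Pdim(\mathcal{F}_\Pi)=1$ and $B_\Pi\asymp\sigma^2\asymp 1/h$. The supremum is over $m$ nearly independent centered averages, each of variance $\asymp 1/(T'h)$, so it is of order $\sqrt{\log(1/h)/(T'h)}$ once $T'h\gg\log(1/h)$. The claimed right-hand side is only $\asymp 1/(T'h)+1/\sqrt{T'h}$.

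The paper's proof has the same omission. Its Dudley integral over $[0,2B_\Pi]$ only yields $B_\Pi\sqrt{d_\Pi/T'}$, and the log-free localized form is simply asserted when it ``replaces (A.5)''. So the repair is the one you propose: state the lemma with the factor $\log(eB_\Pi/\sigma)\asymp\log(1/h)$, and carry it into Lemma~\ref{lemma: sup pi tilde RDRO h - tildeR DRO}, which currently omits it. Lemma~\ref{lemma: tilde RDRO - hat RDRO} and Theorem~\ref{thm:main result} already carry $\log(1/h)$, so the regret rate survives.

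Separately, your explanation of the condition $T'\ge\Pdim(\mathcal{F}_\Pi)$ is off. Haussler's upper bound holds for every sample size, and for $T'<d_\Pi$ the claimed bound exceeds $B_\Pi$ and is trivial anyway.
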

    
    \begin{proof}[Proof of Lemma \ref{lemma: rademacher_bound_pdim}]
    The proof structure closely follows that of Lemma A.5 in \citet{kitagawa2018} and Lemma B.8 in \citet{ai2026datadrivenpolicylearningcontinuous}. \citet{kitagawa2018} assumes that the function class $\G$ has the VC dimension $V$, and \citet{ai2026datadrivenpolicylearningcontinuous} relaxes this condition to the exponential bound on the covering number. In our case, we instead replace the VC-dimension-based covering number bound with the pseudo-dimension-based one.
    Here we provide the detailed proof.
    
    We can bound this term by the standard symmetrization lemma \citep[Proposition 4.11 in][]{wainwright2019high}. 
    Define $\G := \{g_\pi = f_\pi - \E[f_\pi] \big| \pi\in\Pi \}$ as a centered function class.
    Introduce i.i.d. Rademacher variables $\epsilon_t \in \{-1, +1\}$ to get
    \begin{align}\label{eq: supremum of centered empirical process leq expected Rademacher complexity}
        \E\Big[\sup_{g \in \G} \big| \sum_{t=1}^{T'} g(x_t,p_t) \big|\Big] 
        \le 2 \E\Big[\sup_{g \in \G} \big| \sum_{t=1}^{T'} \epsilon_t g(x_t,p_t) \big|\Big]
    \end{align}
    The expectation of the supremum of the Rademacher process is bounded by Dudley Theorem \citep[Equation (5.48) in][]{wainwright2019high}: \begin{align}\label{eq: Dudley Theorem}
        \E\Big[\sup_{g \in \G} \big| \sum_{t=1}^{T'} \epsilon_t g(x_t,p_t) \big|\Big]
        \lesssim 
        \E\Big[ \sqrt{T'} \int_0^{2B_\Pi} \sqrt{\log N(\epsilon, \G, L_2(P_{T'}))} \,d\epsilon \Big]
    \end{align}
    where $L_2(P_{T'})$ is the empirical norm, $\Ncover(\epsilon, \G, L_2(P_{T'}))$ is the $L_2(P_{T'})$-covering number. 
    Note that\[
    \mathcal{N}(\epsilon, \mathcal{G}_{\Pi}, L_2(P_{T'})) 
    \leq  \mathcal{N}(\epsilon/2, \mathcal{F}_{\Pi}, L_2(P_{T'})) +  N(\epsilon/2, \overline{\mathcal{F}}_{\Pi}, L_2(P_{T'})),
    \]
    where \[
    \overline{\mathcal{F}}_{\Pi} \triangleq \left\{\overline{f}_{\pi} =\E [f_{\pi}(X,P) ]  \ \big | \pi \in \Pi 
    \right\} 
    \]
    is a function class of constant functions, implying 
    \begin{align*}
        \| \overline{f}_{\pi_1} - \overline{f}_{\pi_2}\|_{L_2(P_{T'})} 
        = |\overline{f}_{\pi_1,} - \overline{f}_{\pi_2}| 
        \leq  \E_{\bP_0*\pi_0} \left|f_{\pi_1,}(X,P) - f_{\pi_2}(X,P) \right|.
    \end{align*}
    Consequently, we have\[
        \mathcal{N}(\epsilon/2, \overline{\mathcal{F}}_{\Pi}, L_2(P_{T'})) \le \mathcal{N} (\epsilon/2, \mathcal{F}_{\Pi}, L_1(\bP_0*\pi_0))
    \]
    To proceed, we use Definition \ref{def: combined} and Lemma \ref{lemma:combined_metric_props},
    \begin{align*}
        \log \mathcal{N} (\epsilon/2, \mathcal{F}_{\Pi}, L_1(\bP_0*\pi_0)) 
        &\leq
        \log\mathcal{M} (\epsilon/2, \mathcal{F}_{\Pi}, L_1(\bP_0*\pi_0))
        \lesssim \Pdim( \mathcal{F}_{\Pi}) \log \left(\frac{1}{\epsilon} \log\left(\frac{1}{\epsilon}\right) \right) 
        \lesssim d_\Pi \log \left(\frac{1}{\epsilon} \log\left(\frac{1}{\epsilon}\right) \right)  .
    \end{align*} 
    Hence, we conclude that
    \begin{align}
        \log \mathcal{N}(\epsilon, \mathcal{G}_{\Pi}, L_2(P_{T'})) 
        \leq 2\log \mathcal{N}(\epsilon/2, \mathcal{F}_{\Pi}, L_2(P_{T'})) 
           + 2\log N(\epsilon/2, \overline{\mathcal{F}}_{\Pi}, L_2(P_{T'})) 
        \lesssim  d_\Pi \log\left( \frac{B_\Pi}{\epsilon}  \right)\label{eq: G_Pi cover num},
    \end{align}
    where we use $\log a + \log b \leq 2\log (a+b)$ and $\log(x\log x) \leq 2 \log x$.
   
    The following is slightly different from the last inequality in the proof of Lemma A.5 in \citet{kitagawa2018}. In particular, replacing (A.5) by \eqref{eq: G_Pi cover num} gives
    \begin{align*}
        \E\Big[\sup_{g \in \G}  \sum_{t=1}^{T'} g(x_t,p_t) \Big]
        \lesssim
        B_\Pi\sqrt{T'}\sqrt{d_{\G}} \Big( \sqrt{\frac{d_{\PiC} }{T'}} + \sqrt{\frac{d_{\PiC} }{T'} + \frac{\sigma^2}{B^2_{\G}}}
        \Big) 
        \lesssim B_\Pi d_{\PiC}  +  \sqrt{T'} \sqrt{\sigma^2 d_{\PiC}  },
    \end{align*}
    which completes the proof.
\end{proof}

\begin{lemma}\label{lemma: tilde WT geq b/2}
    Under Assumptions \ref{assumption: kernel function} and \ref{assum:overlap}, picking $ \Omega\left(\sqrt{\frac{ d_\Pi + \log(1/\epsilon)}{T } } \right)\leq h \leq \mathcal{O}\left(\frac{\underline{\rho}}{8 L_m C_1} \right)$, then we have
    $\tilde{W}^{(\ell)}(\pi,\alpha)  \geq \underline{\rho}/2 $
    with probability at least $1-\epsilon$.
    \end{lemma}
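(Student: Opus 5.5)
The plan is a triangle inequality through the population quantity $\tilde W_h(\pi,\alpha)$, which is already known to be at least $\underline{\rho}/2$. We then show that the fold-wise empirical version $\tilde W^{(\ell)}(\pi,\alpha)$ lies within $\underline{\rho}/2$ of its population counterpart, with a safety margin.

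\emph{Step 1 (population lower bound).} By \eqref{eq: tilde W'h geq b}, $\tilde W'_h(\pi,\alpha)\ge\underline{\rho}$. By \eqref{eq: tilde W'h-tilde W leq h} (with $s=1$ or $2$), $|\tilde W_h-\tilde W'_h|\le C_1L_mh^s$. Choosing $h\le \underline{\rho}/(8L_mC_1)$ (and $h\le1$, so $h^s\le h$) gives $\tilde W_h(\pi,\alpha)\ge \tfrac{7}{8}\underline{\rho}$ uniformly in $\pi,\alpha$.

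\emph{Step 2 (empirical deviation).} It suffices to show $|\tilde W^{(\ell)}(\pi,\alpha)-\tilde W_h(\pi,\alpha)|\le \tfrac{3}{8}\underline{\rho}$ with probability at least $1-\epsilon$. I would not bound the $\alpha$-dependent terms directly. Instead I would use the structure of $\tilde W^{(\ell)}$, which is a positive combination of bounded quantities. Write
\[
\tilde W^{(\ell)}=\frac{1}{|\mathcal I_\ell|}\sum_t m(x_t,\pi(x_t);\alpha)+\frac{1}{|\mathcal I_\ell|S_\ell^\pi}\sum_t \frac{K(\cdot)}{h\pi_0}\bigl(e^{-p_ty_t/\alpha}-m(x_t,p_t;\alpha)\bigr).
\]
The first average lies in $[\underline{\rho},1]$ pointwise, because $m(x,p;\alpha)\ge \bP_0(v<p\mid X)\ge\underline{\rho}$ by Assumption \ref{ass: Positive probability}. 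The second term is a self-normalized average of terms bounded in $[-1,1]$. It equals the difference of two empirical expectations under $\hat{\mathbb Q}^\pi_\ell$, and each of these is bounded by the corresponding KS-distance controls from Lemma \ref{lemma: tilde RDRO h - tildeR DRO}. One caveat: that lemma bounds $\alpha$ times the difference, so here I would use the plain CDF bound $|\E_{\hat{\mathbb Q}}g-\E_{\mathbb Q}g|\le d_{KS}$ for the monotone $g(s)=e^{-s/\alpha}$, which holds with constant $1$. The $|S_\ell^\pi-1|$ discrepancy is handled as in \eqref{eq: S ell pi -1}.

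\emph{Step 3 (concentration).} Steps 1--3 of the proof of Lemma \ref{lemma: sup pi tilde RDRO h - tildeR DRO} control the KS deviations, $\sup_\pi|S_\ell^\pi-1|$, and the $\mathbf M$-class deviation. Together they give, with probability at least $1-\epsilon$, an error of order $\sqrt{(d_\Pi+\log(1/\epsilon))L/(Th)}$ plus lower-order terms. The lower bound $h\gtrsim\sqrt{(d_\Pi+\log(1/\epsilon))/T}$ is then used to make this quantity at most a constant fraction of $\underline{\rho}$. The lemma's stated rate is in $h$ rather than $h^2$, so I would either absorb constants (the $\Omega(\cdot)$ hides $\underline{\rho}$ and $L$) or match the exact way Lemma \ref{lemma: tilde RDRO - hat RDRO} already used this threshold to ensure $S_\ell^\pi\in[1/2,3/2]$. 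Uniformity over $\alpha\in(0,\bar\alpha]$ needs no extra covering: the KS argument is $\alpha$-free, and $m$ is monotone in $\alpha$ (pseudo-dimension $1$).

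The main obstacle is Step 3. The stated lower bound on $h$ must be strong enough to beat the $1/(Th)$-scaled variance terms. If it is not, I would strengthen the argument with the pointwise fallback: the IPW-type second term vanishes in expectation only up to $O(h^s)$, so one can instead write $\tilde W^{(\ell)}\ge \underline{\rho}-|\text{self-normalized residual}|$ and bound that residual alone. The residual is a centered empirical process with envelope $C_2/(h\eta)$, which is exactly the setting of Lemma \ref{lemma: rademacher_bound_pdim} combined with Talagrand's inequality.
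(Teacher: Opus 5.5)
Your Step~2 has a step that fails as written. The KS distance $d_{\rm KS}(F_{\mathbb{Q}^\pi},F_{\hat{\mathbb{Q}}^\pi_\ell})$ compares the laws of the scalar $r$. So it controls $\E_{\hat{\mathbb{Q}}^\pi_\ell}[g(r)]-\E_{\mathbb{Q}^\pi}[g(r)]$ for $g(s)=e^{-s/\alpha}$, but not the other half of the correction term,
\[
\frac{1}{|\mathcal I_\ell|S_\ell^\pi}\sum_{t\in\mathcal I_\ell}\frac{K\bigl(\frac{p_t-\pi(x_t)}{h}\bigr)}{h\,\pi_0(p_t\mid x_t)}\,m(x_t,p_t;\alpha).
\]
This is the $\hat{\mathbb{Q}}^\pi_\ell$-mean of a function of $(x_t,p_t)$, not of $r_t$: two sample points with $r_t=0$ can carry different values of $m(x_t,p_t;\alpha)$. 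Hence no statement about the law of $r$ bounds its deviation. The ``same logic'' step in the proof of Lemma~\ref{lemma: tilde RDRO h - tildeR DRO} has exactly this defect, so importing it does not close the gap. For the same reason, your claim that uniformity in $\alpha$ is free holds only for the $e^{-r/\alpha}$ piece. The $m$-piece needs an entropy bound for the $(\pi,\alpha)$-indexed class $\{(x,p)\mapsto K(\frac{p-\pi(x)}{h})\,m(x,p;\alpha)/(h\pi_0(p\mid x))\}$. You can build it from $\mathcal F_\Pi$ and the monotone-in-$\alpha$ family, as the paper does for $\mathcal M_A$ and $\mathcal K$.

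Your fallback is the right proof and should be the main one. Write $\tilde W^{(\ell)}=B_\ell+N_\ell/S_\ell^\pi$, where $B_\ell=\frac{1}{|\mathcal I_\ell|}\sum_t m(x_t,\pi(x_t);\alpha)\ge\underline\rho$ pointwise and $N_\ell$ is the unnormalized correction with $m(x_t,p_t;\alpha)$ inside.
\begin{itemize}
\item $N_\ell$ has mean \emph{exactly} zero, because the weight depends only on $(x_t,p_t)$ and $m$ is the conditional mean given $(X,P)$. The $O(h^s)$ bias you mention belongs to the paper's $A_\ell$, which uses $m(x_t,\pi(x_t);\alpha)$.
\item Hence $\inf_{\pi,\alpha}\tilde W^{(\ell)}\ge\underline\rho-\sup_{\pi,\alpha}|N_\ell|/\inf_\pi S_\ell^\pi$. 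Talagrand with the entropy bound above, plus $\inf_\pi S_\ell^\pi\ge1/2$, finishes the argument.
\item Your Step~1 is correct but lossy for the same reason: $\tilde W_h(\pi,\alpha)=\E[m(X,\pi(X);\alpha)]\ge\underline\rho$ exactly, with no $C_1L_mh^s$ term.
\item The bandwidth ``obstacle'' is not real, and the fallback would not change it, since the residual has the same $1/(h\eta)$ envelope and variance. Without loss of generality $x:=(d_\Pi+\log(1/\epsilon))/T\le1$. Then $h\ge C\sqrt{x}$ gives $x/h\le\sqrt{x}/C\le1/C$, so every $\sqrt{Lx/h}$ and $Lx/h$ term drops below a fixed fraction of $\underline\rho$ once $C$ is large.
\end{itemize}

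Compared with the paper's route: the paper anchors at $\tilde W'_h\ge\underline\rho$ through the auxiliary $\tilde W^{(\ell)'}$, whose correction uses $m(x_t,\pi(x_t);\alpha)$. It concentrates $A_\ell$, $B_\ell$ and $S_\ell^\pi$ separately. It then pays for the swap $m(x_t,\pi(x_t);\alpha)\to m(x_t,p_t;\alpha)$ with the Lipschitz bound \eqref{eq: Lm lips of m} and a uniform bound on $\tilde S_\ell^\pi$. That swap is the only reason its proof needs $h\lesssim\underline\rho\eta/(L_mC_3)$ and the density bound behind $L_m$ (Assumption~\ref{assumption: Lipschitz Continuity of FZ}, which the lemma does not list). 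Your decomposition never creates the swap and needs no concentration of $B_\ell$. It uses only the pointwise bound from Assumption~\ref{ass: Positive probability} (which both routes use), one exactly centered kernel-weighted process, and the lower bound on $S_\ell^\pi$. It is shorter and works under essentially the stated hypotheses. The price is that it gives only the one-sided bound the lemma asks for, rather than the paper's two-sided estimate $|\tilde W^{(\ell)}-\tilde W'_h|\le\underline\rho/2$.
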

    \begin{proof}[Proof of Lemma \ref{lemma: tilde WT geq b/2}]
    Define
    \begin{equation*}
        \tilde{W}^{(\ell)'}(\pi, \alpha) \triangleq 
        \frac{1}{|\mathcal{I}_\ell| S_\ell^\pi} \sum_{t\in \mathcal{I}_\ell} \frac{K(\frac{p_t-\pi(x_t)}{h})}{h \pi_0(p_t|x_t)} \exp (-p_t y_t / \alpha) + \frac{1}{|\mathcal{I}_\ell|} \sum_{t\in \mathcal{I}_\ell} \Big( m(x_t, \pi(x_t); \alpha) - \frac{1}{S_T^\pi} \frac{K(\frac{p_t-\pi(x_t)}{h})}{h \pi_0(p_t|x_t)} m(x_t, \pi(x_t); \alpha) \Big),
    \end{equation*}
    Then, we can decompose as
    \begin{align*}
    |\tilde{W}^{(\ell)} (\pi, \alpha) - \tilde{W}'_h(\pi, \alpha)|
    \leq |\tilde{W}^{(\ell)} (\pi, \alpha) - \tilde{W}^{(\ell)'}(\pi, \alpha)|
    +|\tilde{W}^{(\ell)'}(\pi, \alpha) - \tilde{W}'_h(\pi, \alpha)|.
    \end{align*}
    Define
    \[
    A_\ell(\pi) \triangleq \frac{1}{|\mathcal{I}_\ell|} \sum_{t\in \mathcal{I}_\ell} \frac{K(\frac{p_t-\pi(x_t)}{h})}{h\pi_0(p_t|x_t)} (e^{-p_t y_t/\alpha} - m(x_t, \pi(x_t); \alpha))
    \quad\text{ and }\quad
    B_\ell(\pi) \triangleq \frac{1}{|\mathcal{I}_\ell|} \sum_{t\in \mathcal{I}_\ell} m(x_t, \pi(x_t); \alpha).
    \]
    Then, we rewrite as $\tilde{W}^{(\ell)'} (\pi, \alpha)= \frac{A_\ell}{S_\ell^\pi} + B_\ell$.
    First, we provide the high probability upper bound for $\sup_{\pi\in \Pi, \alpha \in (0,\bar{\alpha}]} |A_\ell(\pi) - \mathbb{E}[A_\ell(\pi)]|$.
    Define the function class $\mathcal{M}_A = \{ m_{\pi, \alpha}^A(x, p, y) : \pi \in \Pi, \alpha\in (0,\bar{\alpha}] \}$, where \[
    m_{\pi, \alpha}^A(x, p, y) = \frac{1}{h \pi_0(p | x)} K\left(\frac{p-\pi(x)}{h}\right) \left[ e^{-py/\alpha} - m(x, \pi(x); \alpha) \right].
    \]
   Recalling that $|e^{-py/\alpha} - m(x, \pi(x); \alpha)| \leq 1$, we have $\sup_{m_{\pi, \alpha}^A \in \mathcal{M}_A} \|m_\pi^A\|_\infty \leq \frac{C_2}{h \eta} := B_\Pi$ by Assumption \ref{assumption: kernel function} and \ref{assum:overlap}. 
    Also, for any $m_{\pi, \alpha}^A \in \mathcal{M}_A$, we have the second-moment bound $\mathbb{E}[(m_{\pi, \alpha}^{A})^2] \leq \frac{C_2}{h \eta} :=\sigma^2$ by \eqref{lem:variance_bound}.
    Following the similar argument of \eqref{eq: K cov num}, we have $\Pdim(\mathcal{M}_A) \lesssim d_\Pi$. Utilizing Talagrand Inequality (Lemma \ref{lemma:Talagrand’s Inequality}) and setting $ \beta(\epsilon) \lesssim \sqrt{\frac{\sigma^2 \log(1/\epsilon)}{|\mathcal{I}_\ell|}} + \frac{B_\Pi \log(1/\epsilon)}{|\mathcal{I}_\ell|}$, we obtain
    \[
    \sup_{\pi\in \Pi, \alpha \in (0,\bar{\alpha}]} \left|A_\ell(\pi) - \mathbb{E}[A_\ell(\pi)]\right| 
    \leq \mathbb{E}\Big[\sup_{\pi\in \Pi, \alpha \in (0,\bar{\alpha}]} \left|A_\ell(\pi) - \mathbb{E}[A_\ell(\pi)] \right|\Big] + \beta(\epsilon),
    \]
    with probability at least $1-\epsilon/3$.
    Substituting $\E \left[\sup_{\pi \in  \Pi, \alpha \in (0,\bar{\alpha}]} |A_\ell(\pi) - \mathbb{E}[A_\ell(\pi)]| \right] 
    \lesssim \frac{B_\Pi  d_\Pi}{|\mathcal{I}_\ell|} +  \sqrt{\frac{\sigma^2  d_\Pi}{|\mathcal{I}_\ell|}}$ by Lemma \ref{lemma: rademacher_bound_pdim}, we get the upper bound for
    \[
    \sup_{\pi\in \Pi, \alpha \in (0,\bar{\alpha}]} \left|A_\ell(\pi) - \mathbb{E}[A_\ell(\pi)] \right| 
    \lesssim \frac{ d_\Pi}{|\mathcal{I}_\ell|h} +  \sqrt{\frac{ d_\Pi}{|\mathcal{I}_\ell|h}} + \sqrt{\frac{\log(1/\epsilon)}{|\mathcal{I}_\ell|h}} + \frac{\log(1/\epsilon)}{|\mathcal{I}_\ell|h} .
    \]
    Second, we provide the high probability upper bound for  $\sup_{\pi\in \Pi, \alpha \in (0,\bar{\alpha}]} |B_\ell(\pi) - \mathbb{E}[B_\ell(\pi)]|$.
    Recalling $\mathbf{M} = \{m_{\pi,\alpha}: x \mapsto m(x, \pi(x); \alpha) : \pi \in \Pi, \alpha \in (0,\bar{\alpha}] \}$ in the proof of Lemma \ref{lemma: tilde RDRO - hat RDRO}, we also have 
    \[
    \sup_{\pi\in \Pi, \alpha \in (0,\bar{\alpha}]} |B_\ell(\pi) - \mathbb{E}[B_\ell(\pi)]| \lesssim \frac{ d_\Pi}{|\mathcal{I}_\ell|} +  \sqrt{\frac{ d_\Pi}{|\mathcal{I}_\ell|}} + \sqrt{\frac{\log(1/\epsilon)}{|\mathcal{I}_\ell|}} + \frac{\log(1/\epsilon)}{|\mathcal{I}_\ell|} ,
    \]
    with probability at least $1-\epsilon/3$.
    Thus, combined with Step 1 in Lemma \ref{lemma: sup pi tilde RDRO h - tildeR DRO}, with the probability at least $1-2\epsilon/3$, we have
    \begin{align}
        |\tilde{W}^{(\ell)'}(\pi, \alpha) - \tilde{W}'_h(\pi, \alpha)|
        = \left|\Big(\frac{A_\ell}{S_\ell} + B_\ell \Big) - (\mathbb{E}[A_\ell] + \mathbb{E}[B_\ell]) \right|
        = \mathcal{O}\Big( \sqrt{\frac{d_\Pi \log(1/\epsilon)}{|\mathcal{I}_\ell|h}} \Big). \label{eq: tildel W'T W'h}
    \end{align}
    To proceed, we pick $h \geq \Omega\left(\frac{ d_\Pi \log(1/\epsilon)}{|\mathcal{I}_\ell| \underline{\rho}^2} \right)$ such that $|\tilde{W}^{(\ell)'}(\pi, \alpha) - \tilde{W}'_h(\pi, \alpha)| \leq \underline{\rho}/4$.

    On the other hand, , from the definitions of $\tilde{W}^{(\ell)}(\pi, \alpha) , \tilde{W}^{(\ell)'}(\pi, \alpha)$, we have
    \begin{align*}
    &|\tilde{W}^{(\ell)}(\pi, \alpha) - \tilde{W}^{(\ell)'}(\pi, \alpha)| 
    = \left| \frac{1}{|\mathcal{I}_\ell| S_\ell^\pi} \sum_{t\in \mathcal{I}_\ell} \frac{K(\frac{p_t-\pi(x_t)}{h})}{h \pi_0(p_t|x_t)} \Big( m(x_t, \pi(x_t); \alpha) - m(x_t, p_t; \alpha) \Big) \right|\\
    &\leq \frac{1}{|\mathcal{I}_\ell| } \sum_{t\in \mathcal{I}_\ell} \frac{K\left(\frac{p_t-\pi(x_t)}{h}\right)}{h \pi_0(p_t|x_t)} \cdot L_m |p_t - \pi(x_t)| 
    + |S_\ell^\pi -1|
     \quad \text{ (by $L_m$-Lipschitz of $m(x,\cdot;\alpha)$ in \eqref{eq: Lm lips of m} and \eqref{eq: S ell pi -1})}  
    \end{align*}
    Consider 
    $ \tilde{S}_\ell^\pi \triangleq  \frac{1}{|\mathcal{I}_\ell| } \sum_{t\in \mathcal{I}_\ell} \frac{K\left(\frac{p_t-\pi(x_t)}{h}\right)}{h \pi_0(p_t|x_t)}  \frac{|p_t - \pi(x_t)|}{h}$. \\
    Define a function class $ \tilde{\mathcal{K}} = \left\{ k_\pi(x, p) = \frac{1}{h \pi_0(p | x)} K\left(\frac{p-\pi(x)}{h}\right) \left| \frac{p-\pi(x)}{h} \right| : \pi \in \Pi \right\}$.
    Under Assumption \ref{assumption: kernel function}, it follows that $\sup_u |u K(u)| \leq C_1$, $\int u^2 K^2(u) du < C_2 $ and $ \int |u K(u)| du < C_3$ for some constants $C_1, C_2, C_3 < \infty$.  This ensures the uniform boundedness $\sup_{k_\pi \in \tilde{\mathcal{K}}} |k_\pi| \leq \frac{C_1}{h \eta} $ and $\mathbb{E}[k_\pi^2] \leq \frac{C_2}{h\eta}$. Thus, by a similar process, we have with probability at least $1-\epsilon/6$,
    \[
    \sup_{\pi\in \Pi}\left|\tilde{S}_T^\pi  \right|\lesssim \left|\E[k_\pi]\right| + \frac{d_{\PiC} }{|\mathcal{I}_\ell| h}+ \sqrt{\frac{d_{\PiC}}{|\mathcal{I}_\ell| h}}+ \sqrt{\frac{\log(1/\epsilon)}{|\mathcal{I}_\ell| h}}+\frac{\log(1/\epsilon)}{|\mathcal{I}_\ell| h} 
    \leq \frac{C_3}{\eta} + \frac{d_{\PiC} }{|\mathcal{I}_\ell| h}+ \sqrt{\frac{d_{\PiC}}{|\mathcal{I}_\ell| h}}+ \sqrt{\frac{\log(1/\epsilon)}{|\mathcal{I}_\ell| h}}+\frac{\log(1/\epsilon)}{|\mathcal{I}_\ell| h}
    \]
    Picking $h \geq \Omega\left(\max\{\sqrt{\frac{d_\Pi+\log(1/\epsilon)}{2C_3^2|\mathcal{I}_\ell|/\eta^2}} ,\sqrt{\frac{d_\Pi+\log(1/\epsilon)}{2|\mathcal{I}_\ell|} }\}\right)$, we have with probability at least $1-\epsilon/3$, $ \sup_{\pi \in \Pi}|\tilde{S}_T^\pi| \leq 2C_3/\eta$, $\inf_{\pi \in \Pi} |S_T^\pi|\geq 1/2$ by Step 1 in Lemma \ref{lemma: sup pi tilde RDRO h - tildeR DRO}, then $\sup_{\pi \in \Pi} |\tilde{S}_T^\pi / S_T^\pi| \le 4C_3/\eta$.
    To proceed, we pick $\Omega\left( \sqrt{\frac{d_\Pi+\log(1/\epsilon)}{2|\mathcal{I}_\ell| \underline{\rho}^2} }\right)\leq h \leq \frac{\underline{\rho}\eta}{8 L_m C_3}$ such that $|\tilde{W}'_T(\pi, \alpha) - \tilde{W}'_h(\pi, \alpha)| \leq \underline{\rho}/4$.
    
    To conclude, we obtain $ |\tilde{W}^{(\ell)}(\pi, \alpha) - \tilde{W}'_h(\pi, \alpha)|
        \leq \underline{\rho}/2$, 
     with probability at least $1-\epsilon$, which implies that
    $\tilde{W}_T(\pi,\alpha) 
    \geq \underline{\rho} - \frac{\underline{\rho}}{2} \geq \frac{\underline{\rho}}{2}$,
    where we use \eqref{eq: tilde W'h geq b}.
\end{proof}


\section{Proof of Auxiliary Results}
\begin{lemma}[Talagrand’s Inequality, Lemma B.9 in \citet{ai2026datadrivenpolicylearningcontinuous}]\label{lemma:Talagrand’s Inequality}
    Let $Z_1,.. ., Z_T$ be i.i.d. $\mathcal{Z}$-random variables and $\mathcal{F}$ be a class of functions where each $f:\mathcal{Z}\rightarrow\mathbb{R}$ in $\mathcal{F}$ satisfies $\sup_{f\in \mathcal{F}}\sup_{z\in \mathcal{Z}} |f(z)| \leq B({\mathcal{F}})$ and $\sup_{f\in \mathcal{F}}\E[f^2(Z)] \leq\sigma^2(\mathcal{F})$.
    Denote $\Delta \equiv \sup_{f\in \mathcal{F}} \left|\frac{1}{T} \sum_{t=1}^T (f(Z_t) - \E[f(Z_t)])  \right|$.
    Then, the following right and left tail bounds hold for all $\epsilon>0$
    \begin{align*}
        \Prob\left( \Delta - \E[\Delta] > \epsilon\right) &\leq 2\exp\left(-\frac{T\epsilon^2}{8e^2 \sigma^(\mathcal{F}) + 16eB(\mathcal{F}) \E[\Delta] + 4B(\mathcal{F}) \epsilon} \right),\\
        \Prob\left(-( \Delta - \E[\Delta]) > \epsilon\right) &\leq 2\exp\left(-\frac{T\epsilon^2}{8e^2 \sigma^(\mathcal{F}) + 16eB(\mathcal{F}) \E[\Delta] + 4B(\mathcal{F}) \epsilon} \right).
    \end{align*}
\end{lemma}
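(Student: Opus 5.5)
The plan is to derive both tail bounds from the standard sharp forms of Talagrand's concentration inequality for suprema of bounded empirical processes: Bousquet's inequality for the upper tail and Klein--Rio's inequality for the lower tail. The numerical constants $8e^2$, $16e$ and $4$ in the statement are looser than those in the sharp forms, so it suffices to reproduce the usual Bernstein-type shape and then weaken the constants.

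\emph{Setup.} Set $\bar f := f - \E[f(Z)]$ and $\mathcal{F}_\pm := \{\pm \bar f : f\in\mathcal{F}\}$, so that $T\Delta = \sup_{g\in\mathcal{F}_\pm} \sum_{t=1}^T g(Z_t)$. The absolute value is thereby absorbed into a supremum over a symmetric class, with no union bound needed. Every $g\in\mathcal{F}_\pm$ is centered, satisfies $\|g\|_\infty \le 2B(\mathcal{F})$, and satisfies $\E[g^2]\le \E[f^2]\le \sigma^2(\mathcal{F})$. If $\mathcal{F}$ is uncountable, I first restrict to a countable dense subclass; the suprema are understood in that sense, as is standard in this setting.

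\emph{Upper tail.} Apply Bousquet's inequality to $S := T\Delta$, with $b := 2B(\mathcal{F})$ and effective variance $v := T\sigma^2(\mathcal{F}) + 2b\,\E[S]$:
\[
\Prob\big(S - \E[S] \ge u\big) \le \exp\Big(-\frac{u^2}{2v + 2bu/3}\Big).
\]
Substituting $u = T\epsilon$ and $\E[S] = T\E[\Delta]$ turns the denominator into $T\big(2\sigma^2 + 8B\,\E[\Delta] + \tfrac{4}{3}B\epsilon\big)$. Each coefficient is dominated by the corresponding one in the stated bound: $2\le 8e^2$, $8\le 16e$, and $4/3\le 4$. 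Multiplying the right-hand side by $2\ge 1$ then gives the first displayed inequality.

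\emph{Lower tail.} Apply the Klein--Rio inequality to the same $S$:
\[
\Prob\big(S - \E[S] \le -u\big) \le \exp\Big(-\frac{u^2}{2v + 3bu}\Big).
\]
The same substitution gives a denominator of $T\big(2\sigma^2 + 8B\,\E[\Delta] + 6B\epsilon\big)$. The linear coefficient $6$ exceeds the stated $4$, so I cannot match constants term by term. I would handle this in one of two ways.
\begin{itemize}
\item Split by the size of $\epsilon$. If $B\epsilon$ is small relative to $\sigma^2 + B\,\E[\Delta]$, the $e^2$- and $e$-inflated leading terms of the stated denominator absorb $2B\epsilon$. Otherwise the stated bound is trivial because its exponent is at least $-1$ (or $\epsilon$ exceeds the range of $\Delta$), and the prefactor $2$ then keeps the bound above $1$.
\item Alternatively, invoke Talagrand's original form via Ledoux's entropy method, which has exactly the $e$-dependent constants appearing in Lemma B.9 of \citet{ai2026datadrivenpolicylearningcontinuous}.
\end{itemize}

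\emph{Main obstacle.} The only delicate point is bookkeeping of the constants in the lower tail, together with the factor-two change in the sup-norm bound caused by centering. I expect this to be resolved cleanly by citing the exact Talagrand-type statement, Massart's version with constants $8e$ and $16e$, from which the cited lemma is taken.
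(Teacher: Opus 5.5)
The paper gives no proof of this lemma. It imports the statement directly from Lemma B.9 of \citet{ai2026datadrivenpolicylearningcontinuous}, with a typo, $\sigma^(\mathcal{F})$, and an $8e^2$ that becomes $8e$ where the lemma is actually used in the proofs of Section 4. Your route is therefore genuinely different and self-contained. You absorb the absolute value into a one-sided supremum over the symmetric centered class $\mathcal{F}_\pm$, with sup-norm at most $2B$ and variance at most $\sigma^2$. You then derive the stated Bernstein shape from the sharp Bousquet and Klein--Rio inequalities.

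Your upper-tail computation is correct as written. It gives the stronger denominator $2\sigma^2+8B\,\E[\Delta]+\tfrac43 B\epsilon$ without the prefactor $2$, so it covers either reading of the garbled variance coefficient. It also makes explicit the measurability caveat (a countable or pointwise-separable class) that the cited statement silently assumes. The citation buys only brevity. Your derivation shows where the constants come from and that they are loose.

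One step in your lower-tail case split is wrong, although the correct fix is already in your parenthetical. In the regime where $2B\epsilon$ is not absorbed, the stated exponent need not be at least $-1$. When $\sigma^2$ and $\E[\Delta]$ are small it behaves like $-T\epsilon/(4B)$, which is arbitrarily negative for large $T$. Moreover $2e^{-1}<1$, so ``the bound is trivial'' does not follow from that claim.

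The right dichotomy is $\epsilon\ge\E[\Delta]$ versus $\epsilon<\E[\Delta]$.
\begin{itemize}
\item If $\epsilon\ge\E[\Delta]$: since $\Delta\ge 0$, the event $\{\Delta<\E[\Delta]-\epsilon\}$ is empty, so the probability is zero.
\item If $\epsilon<\E[\Delta]$: then $6B\epsilon<4B\epsilon+2B\,\E[\Delta]$. The Klein--Rio denominator (after dividing by $T$) is therefore at most $2\sigma^2+10B\,\E[\Delta]+4B\epsilon$. This is dominated by the stated denominator because $10\le 16e$.
\end{itemize}
Your ``non-absorbed'' regime forces $\epsilon>(8e-4)\,\E[\Delta]$, so it always falls in the first case. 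With this replacement the proof is complete. The closing appeal to a specific Massart or Ledoux form with $e$-dependent constants is unnecessary.
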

Next, we introduce some related results for the pseudo-dimension.  
\begin{definition}\label{def: combined} For a function class $\Hclass$ and domain $\mathcal{X}$,\\
    (i) \citep[Theorem 12.1 in][]{anthony2009neural}. For a set $A$, the covering number $\Ncover(\epsilon, A, d)$ is the size of the smallest $\epsilon$-cover of $A$ under metric $d$. Packing number $\mathcal{M}(\epsilon, A, d)$ is the maximum cardinality of an $\epsilon$-separated subset of $A$. For any subset $W \subset A$, it satisfies $\mathcal{M}(2\epsilon, W, d) \leq \Ncover (\epsilon, W, d) \leq  \mathcal{M}(\epsilon, W, d)$. \\
    (iii) Uniform covering number is defined as $\Ncover_p(\epsilon, \Hclass, m) := \max_{|C|=m} \Ncover(\epsilon, \Hclass|_C, d_p)$, and uniform packing number is $\mathcal{M}_p(\epsilon, \Hclass, m) := \max_{|C|=m} \mathcal{M}(\epsilon, \Hclass|_C, d_p)$,  where $d_p$ is the empirical distance, $d_2(x,y) = \sqrt{\frac{1}{m} \sum_{i=1}^m (x_i-y_i)^2}$, $d_1(x,y) = \frac{1}{m} \sum_{i=1}^m |x_i-y_i|$, and $d_\infty(x,y) = \max_{i\in [m]} |x_i-y_i|$. \\
    (iv) Pseudo-dimension $\Pdim(\Hclass)$ is the cardinality of the largest set $(w_1, \dots, w_m)$ that can be pseudo-shattered, i.e., there exists a witness $u \in \mathbb{R}^m$ such that for all $b \in \{\pm 1\}^m$, there is $h_b \in \Hclass$ satisfying $\mathrm{sgn}(h_b(w_i) - u_i) = b_i$. 
\end{definition}
\begin{lemma}[Theorem 1,2 in \citet{Haussler1995}, Theorem 11.4, 18.5 in \citet{anthony2009neural}] \label{lemma:combined_metric_props}
    Let $\Hclass: \mathcal{X} \to [0, B]$ with $\Pdim(\Hclass) = d$. The following hold:\\
    (i) Define $B_{\mathcal{H}}=\{ B_h(x,r) = {\rm sgn}(h(x)-r) : h\in\mathcal{H}\}$, then VC dimension satisfies $VC(B_\mathcal{H}) = \Pdim(\mathcal{H})$.\\
    (ii) Covering Number satisfies $\Ncover_1(\epsilon, \Hclass, m) \leq e(d + 1) (2e/\epsilon)^{d}$. \\
    (iii) For all natural numbers $d \ge 1$ and $s \ge 1$, let $n = sd$. There exists a subset $V \subset \{0, 1\}^n$ with VC dimension $d$ such that for each integer $k \in \{1, \dots, n\}$, the $k/n$-packing number of $V$ with respect to the $d_1$ metric satisfies
    $ \mathcal{M}(k/n, V, d_1) \ge \left(\frac{n}{2e(k+d)}\right)^d$. \\
    (iv)  Suppose $\mathcal{H}: \mathcal{X} \to [0,1]$, and has finite pseudo-dimension $d$. Then $
    \mathcal{M}(\epsilon,\Hclass, L_1(P)) \leq 2 \left(\frac{2e}{\epsilon} \log \left(\frac{8e}{\epsilon} \right)\right)^d
    $
    for any probability distribution $P$ on $\mathcal{X}$, and for all $0 < \epsilon \le 1$. \\
    (v) $\Ncover(\epsilon, \Hclass, L_2(P_n)) \le \Ncover(\epsilon^2/(2B), \Hclass, L_1(P_n))$, where 
    $\| \cdot \|_{L_2(P_n)} := \sqrt{\frac{1}{n}\sum_{i=1}^n (f(z_i) - g(z_i))^2}$, and
    $\| \cdot \|_{L_1(P_n)} :=\frac{1}{n}\sum_{i=1}^n |f(z_i) - g(z_i)|$. \\
\end{lemma}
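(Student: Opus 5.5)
The lemma collects standard facts, so the plan is to check each item against its source and to prove directly only the one elementary item.

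For (i), I would unfold the definitions. A set $(w_1,\dots,w_m)$ is pseudo-shattered with witness $u$ exactly when the points $\{(w_i,u_i)\}$ are shattered by the binary class $B_{\Hclass}=\{(x,r)\mapsto \mathrm{sgn}(h(x)-r)\}$. Conversely, any set of pairs $(x_i,r_i)$ shattered by $B_{\Hclass}$ has distinct $x_i$: two pairs sharing the same $x$ cannot realize all sign patterns with a single $h$. So such a set gives a pseudo-shattered set with witness $u_i=r_i$. The two suprema therefore coincide, which is the content of Theorem 11.4 in \citet{anthony2009neural}.

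For (ii) and (iii) I would invoke Haussler's Theorems 1 and 2 \citep{Haussler1995}.
\begin{itemize}
\item The upper bound in (ii) is first proved for binary classes by Haussler's packing argument on the one-inclusion graph. It then lifts to $[0,B]$-valued classes by the standard discretization: approximate $h$ by $\epsilon$-grid thresholds and use (i). Equivalently, Theorem 18.4 of \citet{anthony2009neural} can be applied directly.
\item The lower bound in (iii) is Haussler's explicit construction, which I would cite rather than reprove. The construction partitions $\{1,\dots,n\}$ into $d$ blocks of size $s$ and lets $V$ consist of indicator vectors of prefixes within each block. This has VC dimension $d$, and counting $k$-separated configurations gives the stated bound.
\end{itemize}

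For (iv), I would follow Theorem 18.5 of \citet{anthony2009neural}. Take an $\epsilon$-packing in $L_1(P)$. Drawing $m\approx (1/\epsilon)\log(1/\epsilon)$ i.i.d. points from $P$ preserves $\epsilon/2$-separation in $d_1$ with positive probability, by a union bound using boundedness in $[0,1]$. Applying (ii) to the empirical packing then gives the $\left(\tfrac{2e}{\epsilon}\log\tfrac{8e}{\epsilon}\right)^d$ bound.

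Item (v) is a direct computation. For $f,g$ with values in $[0,B]$, the pointwise bound $|f-g|\le B$ gives $(f-g)^2\le B|f-g|$. Hence
\[
\|f-g\|_{L_2(P_n)}^2\le B\,\|f-g\|_{L_1(P_n)}\le 2B\,\|f-g\|_{L_1(P_n)},
\]
and any $L_1(P_n)$ cover at radius $\epsilon^2/(2B)$ is an $L_2(P_n)$ cover at radius $\epsilon$.

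The main obstacle is bookkeeping rather than new mathematics. The constants and quantifiers in the cited theorems must match the forms stated here: (ii) is in terms of $\Ncover_1$ with its uniform-over-samples convention, and (iv) needs $0<\epsilon\le 1$ and $[0,1]$-valued $\Hclass$. If $\Hclass$ takes values in $[0,B]$, I would rescale by $B$ before applying (iv).
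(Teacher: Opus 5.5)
Your overall plan matches the paper, which offers no argument beyond the citations to Haussler and Anthony--Bartlett. Your treatments of (i), (iii) and (v) are sound.

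The genuine problem is (ii). As stated, with $\Hclass$ valued in $[0,B]$ and no $B$ on the right-hand side, the bound is false for large $B$. This is a defect in the statement itself, which the paper's bare citation does not catch. Take the constant functions $\{x\mapsto c : c\in[0,B]\}$. They have pseudo-dimension $1$, and their restriction to any sample is the diagonal with $d_1$-distance $|c-c'|$. Hence $\Ncover_1(\epsilon,\Hclass,m)\ge B/(2\epsilon)$, which exceeds $e\cdot 2\cdot(2e/\epsilon)=4e^2/\epsilon$ as soon as $B>8e^2$. So your step ``it then lifts to $[0,B]$-valued classes'' cannot produce the stated constant. Any correct lift shrinks the radius of the induced Boolean problem to $\epsilon/B$ and yields $e(d+1)(2eB/\epsilon)^d$. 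This holds whether you rescale to $\Hclass/B$ or use the exact subgraph identity $\|f-g\|_{L_1(\mu)}=B\,(\mu\otimes U_{[0,B]})(\{(x,r): \mathbf{1}\{f(x)>r\}\neq\mathbf{1}\{g(x)>r\}\})$. You should either restrict (ii) to $B\le 1$ or state and prove this $B$-dependent form. The latter is what the paper actually uses downstream, where it writes $\log\Ncover\lesssim d_\Pi\log(B_\Pi/\epsilon)$ with $B_\Pi=C_2/(h\eta)\to\infty$.

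Your gloss of (iv) also does not hold together. A union bound over the $\binom{M}{2}$ pairs of an $\epsilon$-packing of size $M$ requires $m\gtrsim\epsilon^{-1}\log M$, not $\epsilon^{-1}\log(1/\epsilon)$. Moreover, once you bound the empirical packing through (ii), whose right-hand side is free of $m$, the sample size plays no role, so no factor $\log(8e/\epsilon)$ can emerge. In the log-factor argument the logarithm comes from elsewhere. Take a sample of $m$ points from $P\otimes U_{[0,1]}$ that separates all subgraph indicators. Sauer's lemma gives $M\le(em/d)^d$, and with $m\asymp\epsilon^{-1}\log M$ you solve this implicit inequality for $M$.

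Alternatively, (iv) follows from Haussler's packing bound. Preserving $(1-\gamma)\epsilon$-separation on a large sample and letting $\gamma\to 0$ gives $\mathcal{M}(\epsilon,\Hclass,L_1(P))\le e(d+1)(2e/\epsilon)^d$. Since $e(d+1)\le 2(\ln(8e/\epsilon))^d$ for $0<\epsilon\le 1$, this implies (iv). The route you actually sketch uses $\epsilon/2$-separation and the covering form of (ii). It only gives $\mathcal{M}(\epsilon/2)\le\Ncover(\epsilon/4)$, i.e.\ $e(d+1)(8e/\epsilon)^d$, which already exceeds the stated bound at $d=1$, $\epsilon=1$.
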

\begin{lemma}[Stability Properties of Pseudo-dimension] \label{lemma: pdim_stability}
    Let $\mathcal{H}$ be a class of real functions. Then\\
    (i) $\Pdim(a\mathcal{H} + z) = \Pdim(\mathcal{H})$ for any constants $a, z \in \mathbb{R}$ with $a \ne 0$; \\
    (ii) $\Ncover_p(L_\sigma \epsilon, \sigma \circ \Hclass, n) \le \Ncover_p(\epsilon, \Hclass, n)$ and $\Pdim(\sigma \circ \mathcal{H}) \lesssim \Pdim(\mathcal{H})$ for any $L_\sigma$-Lipschitz function $\sigma$; \\
    (iii) $\Pdim(g_0 \cdot \mathcal{H}) \le 2 \Pdim(\mathcal{H})$ for fixed function $g_0$.
\end{lemma}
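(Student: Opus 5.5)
We treat the three parts separately, working directly from the definition of pseudo-shattering in Definition \ref{def: combined}(iv). For (ii), the covering statement is metric, while the pseudo-dimension statement has to be routed through monotonicity or metric entropy.

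For (i), suppose $\{w_1,\dots,w_m\}$ is pseudo-shattered by $a\Hclass+z$ with witness $u\in\R^m$. Define $u_i'=(u_i-z)/a$. Then $\mathrm{sgn}(ah(w_i)+z-u_i)=\mathrm{sgn}(a)\,\mathrm{sgn}(h(w_i)-u_i')$. If $a>0$, every sign pattern $b$ realized by $a\Hclass+z$ with witness $u$ is realized by $\Hclass$ with witness $u'$. If $a<0$, it is realized as the pattern $-b$. Since $b\mapsto -b$ is a bijection of $\{\pm1\}^m$, all patterns are obtained in both cases, so $\Pdim(a\Hclass+z)\le\Pdim(\Hclass)$. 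The reverse inequality follows by applying the same argument to the inverse map $h\mapsto (h-z)/a$.

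For (iii), let $\{w_i\}$ be pseudo-shattered by $g_0\cdot\Hclass$ with witness $u$. No $w_i$ can have $g_0(w_i)=0$, because then $g_0h(w_i)\equiv 0$ for every $h$ and the $i$-th sign would be fixed. For $g_0(w_i)\neq0$, write $\mathrm{sgn}(g_0(w_i)h(w_i)-u_i)=\mathrm{sgn}(g_0(w_i))\,\mathrm{sgn}(h(w_i)-u_i/g_0(w_i))$. The map $b\mapsto(\mathrm{sgn}(g_0(w_i))b_i)_i$ is a bijection of $\{\pm1\}^m$. Hence $\Hclass$ pseudo-shatters the same points with witness $(u_i/g_0(w_i))_i$. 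This gives $\Pdim(g_0\cdot\Hclass)\le\Pdim(\Hclass)\le2\Pdim(\Hclass)$.

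For (ii), the covering bound is pointwise. Since $|\sigma(h(x))-\sigma(g(x))|\le L_\sigma|h(x)-g(x)|$, we have $d_p(\sigma\circ h,\sigma\circ g)\le L_\sigma d_p(h,g)$ on any sample $C$ and for each $p\in\{1,2,\infty\}$. Hence any $\epsilon$-cover of $\Hclass|_C$ maps to an $L_\sigma\epsilon$-cover of $(\sigma\circ\Hclass)|_C$. Taking the maximum over $|C|=n$ yields $\Ncover_p(L_\sigma\epsilon,\sigma\circ\Hclass,n)\le\Ncover_p(\epsilon,\Hclass,n)$.

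\textbf{Main obstacle.} The pseudo-dimension claim in (ii) does not hold literally for arbitrary non-monotone Lipschitz $\sigma$, so I would split it. When $\sigma$ is monotone (the case of the logistic link, and of $K$ restricted to each monotone piece), Theorem 11.3 of \citet{anthony2009neural} gives $\Pdim(\sigma\circ\Hclass)\le\Pdim(\Hclass)$ exactly. For a general Lipschitz $\sigma$, every downstream use only needs the entropy bound $\log\Ncover_2(\epsilon,\sigma\circ\Hclass,n)\lesssim\Pdim(\Hclass)\log(L_\sigma B/\epsilon)$. This follows by combining the covering inequality just proved with Lemma \ref{lemma:combined_metric_props}(ii),(v), and it is in this sense that "$\Pdim(\sigma\circ\Hclass)\lesssim\Pdim(\Hclass)$" should be read.

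One could instead try to obtain a literal pseudo-dimension bound by decomposing $\sigma$ into finitely many monotone pieces and counting Boolean combinations. That argument requires finitely many pieces, which is an extra assumption on $\sigma$. For this reason I would state the entropy version as the operative conclusion.
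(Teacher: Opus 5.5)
Your proposal is correct, and on the pseudo-dimension half of (ii) it is more accurate than the paper's own proof. Part (i) and the covering inequality in (ii) follow the paper's argument; you also state the reverse inequality in (i) via the inverse affine map, which the paper leaves implicit. Part (iii) takes a different and sharper route. The paper partitions the shattered set into $C^+=\{g_0>0\}$ and $C^-=\{g_0<0\}$ and pseudo-shatters each piece separately, which is where its factor $2$ comes from. You observe that $b\mapsto(\mathrm{sgn}(g_0(w_i))\,b_i)_i$ is a bijection of $\{\pm1\}^m$. This lets $\Hclass$ pseudo-shatter the whole set at once with witness $(u_i/g_0(w_i))_i$, giving $\Pdim(g_0\cdot\Hclass)\le\Pdim(\Hclass)$. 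You also correctly exclude points with $g_0(w_i)=0$, which the paper's partition passes over.

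\textbf{The Lipschitz claim in (ii) is false, and the paper's proof of it is invalid.} Your objection to $\Pdim(\sigma\circ\Hclass)\lesssim\Pdim(\Hclass)$ for arbitrary Lipschitz $\sigma$ is right, and you should make it concrete. Take $\Hclass=\{x\mapsto\theta x:\theta>0\}$ on $(0,\infty)$, which has pseudo-dimension $1$, and the $1$-Lipschitz map $\sigma=\sin$ with witness $0$. The choice $\theta=\pi\bigl(1+\sum_{i=1}^m\tfrac{1-b_i}{2}10^{i}\bigr)$ gives $\mathrm{sgn}\sin(\theta\,10^{-j})=b_j$ for all $j\le m$ and every $b\in\{\pm1\}^m$. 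Hence $\Pdim(\sin\circ\Hclass)=\infty$.

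The paper's argument fails for the reason this example exposes. It bounds $\Pdim(\sigma\circ\Hclass)\log(1/\epsilon)$ by a log-covering number, invoking Lemma \ref{lemma:combined_metric_props}(iii). That result only asserts the \emph{existence} of some class $V$ with large packing numbers. It gives no lower bound for a given class: the $2^d$ binary functions on $d$ points have pseudo-dimension $d$ but covering numbers at most $2^d$ at every scale.

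Your two replacements are the correct repair: the monotone-composition result of \citet{anthony2009neural} for monotone $\sigma$, with (i) covering the non-increasing case, and in general the uniform entropy bound from the covering inequality plus Lemma \ref{lemma:combined_metric_props}(ii),(v). Your remark about downstream use also holds up. The same invalid packing-lower-bound step produces \eqref{eq: F_pi leq K Pi} and \eqref{eq: F_pi,s leq K F_pi}, but Dudley's integral only consumes entropy. The centering class of constants in a bounded interval $[0,B]$ is trivially covered by $O(B/\epsilon)$ points. So the pseudo-dimension detour there is unnecessary.
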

\begin{proof}[Proof of Lemma \ref{lemma: pdim_stability}]
    \underline{(i) Affine Transformation:} If $\Hclass$ shatters $C$ with witness $r_H$, then $a\Hclass+z$ shatters $C$ with witness $ ar_H+z$. If $a<0$, the required labels are simply flipped, not affecting shatterability.

    \underline{(ii) Composition:} Let $\mathcal{C}$ be an $\epsilon$-cover for $\Hclass$. For any $h \in \Hclass$, $\exists c \in \mathcal{C}$ s.t. $d_p(h, c) \le \epsilon$. By Lipschitzness, $|\sigma(h(x)) - \sigma(c(x))| \le L_\sigma |h(x) - c(x)|$, thus $\sigma \circ \mathcal{C}$ is an $(L_\sigma \epsilon)$-cover for $\sigma \circ \Hclass$, which implies $\Ncover_p(L_\sigma \epsilon, \sigma \circ \Hclass, n) 
    \le \Ncover_p(\epsilon, \Hclass, n)$.
    Applying Definition \ref{def: combined}(i), Lemma \ref{lemma:combined_metric_props}(ii,iii), we obtain $\Pdim(\sigma \circ \Hclass) \log(1/\epsilon) \lesssim \log \Ncover_1(\epsilon, \Hclass, n) \lesssim \Pdim(\Hclass) \log(L_\sigma/\epsilon)$, yielding the result for small $\epsilon$. Finally, we get $ \Pdim( \sigma\circ  \mathcal{H}) \lesssim \Pdim( \Hclass) $.

    \underline{(iii) Multiplication:} Let $\G = g_0 \cdot \mathcal{H}$ and assume $\G$ pseudo-shatters a set $C$ of size $m$ with witness $r$. 
    Partition $C$ into $C^+ = \{x\in C | g_0(x) > 0\}$ and $C^- = \{x\in C | g_0(x) < 0\}$. On $C^+$, $\mathrm{sgn}(g_0 h - r) = \mathrm{sgn}(h - r/g_0)$; on $C^-$, it is $\mathrm{sgn}(h - r/g_0) \cdot (-1)$. It follows that $\mathcal{H}$ can pseudo-shatter the subsets  $C^+$ and $C^-$ separately. Thus $|C| = |C^+| + |C^-| \le 2\Pdim(\Hclass)$.
\end{proof}

\end{document}